\newcommand{\bmu}{{\boldsymbol{\mu}}}
\newcommand{\bK}{\mathbf{K}}
\newcommand{\bh}{\mathbf{h}}
\newcommand{\bX}{\mathbf{X}}
\newcommand{\cP}{{\mathbb P}}
\newcommand{\cN}{{\mathcal N}}
\newcommand{\cY}{{\mathcal Y}}
\newcommand{\mE}{{\mathbb E}}
\newcommand{\mP}{{\mathbb P}}
\newcommand{\mR}{{\mathbb R}}
\newcommand{\HH}{{\mathrm H}}
\DeclareMathOperator*{\argmin}{argmin}
\newtheorem{thm}{Theorem}
\newtheorem{proposition}{Proposition}
\newtheorem{lemma}{Lemma}
\setlist{  
  listparindent=\parindent,
  parsep=0pt,
}
\newcommand{\tr}{\operatorname{Tr}}
\newcommand{\hongzhe}[1]{{\color{red}{#1}}}
\definecolor{emerald}{rgb}{0.31, 0.78, 0.47}
\newlength{\bibitemsep}\setlength{\bibitemsep}{.005\baselineskip plus .005\baselineskip minus .005\baselineskip}
\newlength{\bibparskip}\setlength{\bibparskip}{0pt}
\let\oldthebibliography\thebibliography
\renewcommand\thebibliography[1]{%
  \oldthebibliography{#1}%
  \setlength{\parskip}{\bibitemsep}%
  \setlength{\itemsep}{\bibparskip}%
}
\title{
Stochastic Motion Planning as Gaussian Variational Inference: Theory and Algorithms
}
\author{ Hongzhe Yu, Zinuo Chang, and Yongxin Chen
\thanks{Financial support from NSF under grants 1942523, 2008513 are greatly acknowledged.}
\thanks{H. Yu and Y.\ Chen are with the School of Aerospace Engineering, Z. Chang is with the School of Electrical and Computer Engineering, Georgia Institute of Technology, Atlanta, GA; {\{hyu419, zchang40, yongchen\}@gatech.edu}}
\thanks{The authors acknowledge Shuo Cheng for providing valuable support on the hardware experiments.}
}
\begin{document}

\maketitle

\begin{abstract}
We present a novel formulation for motion planning under uncertainties based on variational inference, where the optimal motion plan is modeled as a posterior distribution. We propose a Gaussian variational inference-based framework, termed Gaussian Variational Inference Motion Planning (GVI-MP), to approximate this posterior by a Gaussian distribution over the trajectories. We show that the GVI-MP framework is dual to a special class of stochastic control problems and brings robustness into the decision-making in motion planning. We develop two algorithms to numerically solve this variational inference and the equivalent control formulations for motion planning. The first algorithm uses a natural gradient paradigm to iteratively update a Gaussian proposal distribution on the sparse motion planning factor graph. We propose a second algorithm, the Proximal Covariance Steering Motion Planner (PCS-MP), to solve the same inference problem in its stochastic control form with an additional terminal constraint. We leverage a proximal gradient paradigm where, at each iteration, we quadratically approximate nonlinear state costs and solve a linear covariance steering problem in closed form. The efficacy of the proposed algorithms is demonstrated through extensive experiments on various robot models. An implementation is provided in \url{https://github.com/hzyu17/VIMP}.

\end{abstract}

\thispagestyle{empty}
\pagestyle{empty}


\section{Introduction}\label{sec:introduction}

Motion planning is a fundamental task for robotics systems. It solves the problem of finding a smooth and `feasible' trajectory connecting two endpoints in the configuration space while achieving desired behaviors measured by `optimality' of the plan. On top of the optimality, in real-world practices, one often encounters ubiquitous uncertainties arising from modeling errors, sensor noise, and actuator noise. Scenarios such as UAVs navigating turbulent airstreams \cite{singh2017robust, majumdar2017}, legged robots traversing unknown terrains \cite{dai2012optimizing}, and manipulators grasping with sensor-induced noise \cite{kappler2015data} underscore the indispensability of motion planning techniques that accommodate dynamic and environmental uncertainties. Robust motion planning in the presence of uncertainties remains a challenging task. Instead of a deterministic trajectory, the notion of a \textit{`tube'}, \textit{`funnel'}, or \textit{`belief'} \cite{majumdar2017, prentice2009belief} of trajectories becomes our primary pursuit, where a collection or a distribution of trajectories is considered for accommodating stochastic or partially observed states. In the planning under uncertainty paradigm, in addition to the `feasible' and `optimal' objectives, `robustness' appears in the cost functionals as a risk indication in the decision-making of finding a motion plan \cite{singh2017robust} \cite{majumdar2017} \cite{kohler2020computationally}. 

Motion planning can be reformulated as a probabilistic inference problem in \cite{mukadam2016, mukadam2018}. In this formulation, the optimal trajectory for a given environment is modeled as a target posterior distribution under the linear Gaussian Process (GP) assumption on the robot dynamics prior and a collision-avoiding likelihood. In \cite{mukadam2016, mukadam2018}, the authors proposed an iterative algorithm to find the trajectory that maximizes the posterior probability. However, the solution is still deterministic, which does not handle uncertainties in planning problems. 

This work presents \textit{Gaussian Variational Inference Motion Planning (GVI-MP) framework}. We formulate motion planning under uncertainties as a Gaussian Variational Inference (GVI) over a sparse factor graph. This new formalism enables efficient GVI techniques that could not previously be applied in motion planning. GVI-MP solves for a joint trajectory distribution as its output. The trajectory distribution addresses the \textit{`funnel'} and \textit{`belief'} pursuits. We show that the distribution representation is intrinsically superior to deterministic representations, thanks to the equivalence of GVI-MP to an entropy-maximized robust motion planning formulation. The entropy of the trajectory distribution serves as a risk measure. Finally, the motion planning problem enjoys a sparse factor graph structure, where closed-form expressions and sparse Gauss-Hermite quadrature techniques are leveraged in evaluating the functional gradients to increase the scalability of the algorithms for high DOF robotics systems. 

The new GVI-MP formalism is closely related to the classical optimal control formulations. We present a theoretical analysis to bridge the GVI-MP and a stochastic optimal control \cite{bertsekas1996stochastic} problem for general time-varying linear Gauss-Markov systems. Starting from the optimal control formulation, we present our second algorithm, \textit{Proximal Covariance Steering Motion Planning (PCS-MP)}, that solves the stochastic control problem with a terminal covariance constraint. PCS-MP leverages the proximal gradient descent paradigm in the space of distributions, which has a sublinear convergence rate. At each iteration, it has a closed-form solution to a linear covariance steering problem, making the overall algorithm efficient for high DOF robotics systems.

\subsection{Related Works}
\label{sec:intro_related_works}
{\em (a) Motion Planning.} Path and motion planning often refer to the problem of directly finding an optimal or feasible sequence of robot configurations \cite{lavalle1998}. Sampling-based motion planning gained popularity in solving path-planning problems. Rapidly exploring Random Trees (RRT) \cite{lavalle1998}, Probabilistic Road Map (PRM) \cite{kavraki1996}, and their variations perform a tree search or a graph search in the configuration space to find feasible paths. Sampling-based methods became successful thanks to their effectiveness in finding feasible paths in high-dimensional configuration spaces. However, they do not consider the underlying continuous-time dynamics. They also become ineffective when the feasible set represents a very low probability within the search space, such as navigating through a narrow gap between obstacles. Other methods for motion planning have been proposed in the literature. The work \cite{donald1993kinodynamic} is a classical work in kinodynamic planning. In \cite{wang2022geometrically}, an efficient geometrically constrained trajectory optimization method is proposed for multi-copters. The work \cite{liu2022free} presents an algorithm for finding and navigating narrow passages for high-DOF kinematic chains. 

Trajectory optimization \cite{ratliff2009chomp, kalakrishnan2011stomp, schulman2014, mukadam2016, mukadam2018} formulates motion planning as an optimization, through which the `optimality' aspect of planning is encoded in the objective function, and system dynamics is encoded in the constraints. Covariant Hamiltonian Optimization for Motion Planning (CHOMP) \cite{ratliff2009chomp} \cite{zucker2013chomp} is a variational strategy that uses gradient-based covariate optimization in the continuous-time configuration space. TrajOpt \cite{schulman2014} developed a sequential convex optimization algorithm to solve the same problem formulation. Stochastic Trajectory Optimization for Motion Planning (STOMP) \cite{kalakrishnan2011stomp} approached a stochastic optimization formulation and used sampled noisy trajectory costs to iteratively update the current trajectory, which does not need gradient information.

The most relevant work to our work is the Gaussian Process Motion Planning (GPMP) \cite{mukadam2016, mukadam2018}, which formulated motion planning as a probability inference problem \cite{sarkka2013spatiotemporal} over a sparse factor graph \cite{dellaert2012factor}, and proposed a Maximum-a-Posteriori (MAP) inference algorithm to find the optimal trajectory. GPMP assumed a continuous-time linear Gauss-Markov process representation for the trajectory and lifted the collision-related constraints to a collision-related likelihood probability. The optimal trajectory is modeled as a posterior distribution that is the product of a dynamics Gaussian prior and a nonlinear collision-free likelihood. Our work is also related to the Partially Observable Markov Decision Process \cite{kurniawati2008sarsop, somani2013despot} framework.

{\em (b) Variational Inference.}
Variational Inference (VI) is a class of methods to solve graphical probabilistic inference \cite{jordan1999introduction} by formulating an optimization over distributions. Parametric VI is a class of methods where the approximating distribution is tractable, such as Gaussian or exponential family \cite{opper2009}. Gaussian variational inference was applied in robot skill learning, reinforcement learning, and deep latent space learning \cite{rezende2014stochastic, shankar2020learning}. Gaussian approximations of posteriors are widely used in nonlinear filtering \cite{garcia2015posterior}. In \cite{barfoot2020exactly}, a sparse GVI algorithm using natural gradient descent and numerical integration techniques was applied to a nonlinear batch state estimation filtering task. Despite the successes in these fields, to the authors' best knowledge, few studies on GVI yielded satisfying results in variational motion planning tasks.  

The other class is non-parametric VI. The most popular non-parametric VI method is the Stein Variational Gradient Descent (SVGD) \cite{liu2016stein} technique, where the approximating distribution is represented using particles. While SVGD is elegant mathematically, the need to define a Reproducing Kernel Hilbert Space a-priori poses practical issues such as mode collapse and scalability.  

{\em (c) Inference-control Duality and Covariance Steering.}
It is particularly interesting to investigate the duality of an inference-based planner with the standard control-based formulation, which provides a strong theoretical foundation for the former. The duality between a particular form of stochastic control and graphical probability inference problems in the discrete-time setting was studied under the names of Kullback–Leibler (KL) control, Maximum-Entropy reinforcement learning, and active-inference based control \cite{kappen2012optimal, todorov2008general, toussaint2009robot, millidge2020relationship, friston2016active}. 

A standard technique to solve nonlinear optimal control problems is to linearize the dynamics and quadratically approximate the cost iteratively, and perform a linear quadratic Gaussian (LQG) regulator to the approximated problem to obtain a locally optimal feedback controller. This procedure is the `iterative LQG' (iLQG) algorithm \cite{todorov2005generalized}. LQG relies on a designed terminal cost to achieve terminal-time behavior without guaranteeing the goal state covariance constraint. 

A linear stochastic control with terminal constraint is termed \textit{`Covariance Steering'} \cite{CheGeoPav15a, CheGeoPav15b, CheGeoPav17a, goldshtein2017finite}. The strict satisfaction of terminal covariance constraints is achieved by solving a coupled Riccati equation with proper initial values. ovariance steering is closely related to the Schr\"odinger's bridge problem (SBP) \cite{Chen2016relation}. In the same spirit of iLQG, the authors developed an iterative covariance steering algorithm in \cite{yu2023covariance} leveraging proximal gradient in the space of path distributions. Similar approaches have been investigated on discrete-time dynamical systems with chance constraints \cite{ridderhof2020chance, okamoto2018optimal} wherein the algorithm solves a convex optimization at each iteration. Compared to these works, in our method, we leverage a closed-form solution at each iteration to improve efficiency. 

\subsection{Contributions}
\label{sec:intro_contributions}
\noindent The contributions of this paper are summarized as follows.
\begin{enumerate}
    \item We present a novel variational motion planning paradigm that optimizes trajectory distributions using variational inference. We show that planning for a trajectory distribution is superior to deterministic baselines in terms of robustness.
    \item We provide a rigorous theoretical analysis of the GVI-MP formulation. We show that the GVI-MP is equivalent to a class of stochastic control problems. 
    \item We propose an efficient GVI-MP algorithm that leverages the sparse factor graph of planning problems. We present a complexity analysis of the algorithm.
    \item We propose PCS-MP, an efficient iterative algorithm to solve stochastic control problems with terminal constraints and nonlinear collision-avoiding state costs. 
    \item Extensive experiments are conducted to validate the two proposed algorithms on high DOF robotics systems.
\end{enumerate}

This paper is built on our previous work \cite{yu2023}. The rest of the paper is organized as follows. Section \ref{sec:background} introduces background knowledge. Section \ref{sec:GVIMP_SOC_formulation} introduces the GVI-MP paradigm and its equivalence to a stochastic control formulation. Section \ref{sec:GVIMP_algorithm} presents the Gaussian Variational Inference Motion Planner algorithm. Section \ref{sec:robust_MP} discusses the robustness of GVI-MP solution. Section \ref{sec:PCSMP} presents the Proximal Covariance Steering Motion Planner (PCS-MP) algorithm for the control formulation. Section \ref{sec:experiments} presents experiment results, followed by the conclusion in Section \ref{sec:conclusion}.

\section{Background}\label{sec:background}
This section introduces the necessary background for our methods, including trajectory optimization, probabilistic inference motion planning, and linear covariance steering.

\subsection{Trajectory Optimization Problem Formulations}
\label{sec:background_trajectory_optimization}
{\em a) Motion planning as an optimal control.}
Trajectory optimization leverages optimal control theory and formulates motion planning problems as an optimization in the time window $t\in [0,T]$ as follows
\begin{equation}
\begin{split}
    &\min_{X_t, u_t} \; J_0(X_t, u_t) \\
    &\; \; {\rm s.t.} \;\; g^c_i(X_t, u_t) \leq 0,\; i=1, \dots, N_g\\
    &\; \; \;\;\;\; \;\; f^c_i(X_t, u_t) = 0,\; i=1, \dots, N_f.
\end{split}
\label{eq:trajectory_optimization_formulation}
\end{equation}
Problem \eqref{eq:trajectory_optimization_formulation} is an optimization over the continuous-time variables $(X_t, u_t)$, where $X_t: [0,T]\to \mR^n$ is the state function of time of dimension $n$, and $u_t: [0,T]\to \mR^m$ represents the control signal function of dimension $m$. $J_0$ is a cost function that often integrates a running cost function of $(X_t, u_t)$ over $[0, T]$. $\{g^c_i\}_{i=1}^{N_g}$, $\{f^c_i\}_{i=1}^{N_f}$ are the functions defining the inequality and equality constraints that the motion planning problem is subject to, respectively.

{\em b) Discrete formulations.}
To transform the continuous-time formulations \eqref{eq:trajectory_optimization_formulation} into computable optimization programs, we define the time discretization
\begin{equation}
\label{eq:time_discretization}
    \mathbf{t}\triangleq \left[t_0, \dots, t_N\right],\; t_0 = 0, \; t_N=T
\end{equation}
that generates a vector of discretized states and control inputs of length $N+1$. We denote the \textit{support states} and \textit{support controls} as the discretized variables
\begin{equation}
    \label{eq:definition_bX}
    \bX \triangleq [X_0, \dots, X_N]^T, \; \mathbf{U} \triangleq [u_0, \dots, u_N]^T,
\end{equation}
where 
\begin{equation*}
    X_i = X_{t_i}, \; u_i = u_{t_i}, i=1,\dots,N.
\end{equation*}
When $\bX$ is a random variable, the covariance matrix of $\bX$ is denoted as $\bK$. The discretized variables have the dimensions $\bX \in \mathbb{R}^{(N+1)\times n}$, $\mathbf{U} \in \mathbb{R}^{(N+1)\times m}$, and $\bK \in \mathbb{R}^{(N+1)n \times (N+1)n}$.

\subsection{Linear Gauss-Markov Prior}
\label{sec:background_linear_Gauss_markov}
Following \cite{mukadam2016,mukadam2018}, we assume that a \textit{prior} robot trajectory is sampled from a parameterized Gaussian Process (GP) 
\[
X_t\sim \mathcal{GP}(\mu_t, \mathcal{K}(t,t')),
\]
where $\mu_t$ represents the mean function of the GP, and $\mathcal{K}(t,t')$ 
is the covariance function. Specifically, we assume the kernel $\mathcal{GP}(\mu_t, \mathcal{K}(t,t'))$ is generated by an \textit{uncontrolled} LTV system 
\begin{equation}\label{eq:dynamics_uncontrolled}
    d X_t =\! A_tX_t dt \! +\! a_t dt \!+\! B_t dW_t, \; X_0 \sim \mathcal{N}(\mu_0, K_0).
\end{equation} 
Here the time-varying matrices are $A_t \in \mathbb{R}^{n\times n}$ and $B_t \in \mathbb{R}^{n\times m}$, $a_t \in \mathbb{R}^{n}$ is a drift term, and $W_t$ is a standard Wiener process. $\mathcal{N}(\mu_0, K_0)$ denotes a Gaussian with mean $\mu_0$ and covariance $K_0$.

Denote the state transition matrix associated with system \eqref{eq:dynamics_uncontrolled} as $\Phi(t,s)$, that is, $\Phi(t,t) = I, ~ \Phi^{-1}(t,s) = \Phi(s,t),$ and $\partial_t \Phi(t,s) = A_t\Phi(t,s)$ for all $(t,s)$. The solution to \eqref{eq:dynamics_uncontrolled} is 
\begin{equation}\label{eq:sol_LTV_SDE}
    X_t = \Phi(t,0)X_0 + \int_{0}^{t} \Phi(t,s) a_s d s + \int_{0}^{t}\Phi(t,s)B_sd W_s,
\end{equation}
and thus the mean and covariance functions of the GP are
\begin{subequations}
\begin{eqnarray}
    \mu_t \!\!\!&=&\!\!\! \Phi(t,0)\mu_0 + \int_{0}^{t} \Phi(t,s)a_s ds, \label{eq:prior_mean_dyn}\\
    K(t,t') \!\!\!&=&\!\!\! \Phi(t,0)K_0\Phi(t',0)^T + \nonumber
    \\ 
    \!\!\!&&\!\!\!\hspace{-0.3cm}\int_{0}^{{\rm min}(t,t')} \Phi(t,s)B_sB_s^T\Phi(t',s)^T ds. \label{eq:prior_kernel_dyn}
\end{eqnarray}
\end{subequations}
We denote the \textit{support mean} on the discretization \eqref{eq:time_discretization} as
\begin{equation}
\label{eq:supported_mean}
    \bmu \triangleq [\mu_0, \dots, \mu_N], \; \mu_i = \mu_{t_i}, \; i=1, \dots, N.
\end{equation}
We also use the simplified notation 
\begin{equation}
    \label{eq:state_transition_discrete}
    \Phi_{i+1, i} \triangleq \Phi(t_{i+1}, t_i)
\end{equation}
to represent the state transition between two adjacent support states.
We assume the system $(A_t, B_t)$ is controllable in the sense that 
the controllability Grammian associated with \eqref{eq:dynamics_uncontrolled} 
\begin{equation}
\label{eq:def_Grammian}
    Q_{s,t} \triangleq \int_{s}^{t} \Phi(t, \tau)B_\tau B_\tau^T \Phi(t, \tau)^T d \tau
\end{equation}
is invertiable for all $s<t$.

We denote the deviation from state $X_t$ to the mean $\mu_t$ as $\Tilde{X}_t \triangleq X_t - \mu_t$. By \eqref{eq:sol_LTV_SDE} and \eqref{eq:prior_mean_dyn}, we have
\begin{equation}
    \label{eq:tilde_Xt}
    \Tilde{X}_t = \Phi(t,0)\Tilde{X}_0 + \int_{0}^{t}\Phi(t,s)B_sd W_s.
\end{equation}

\subsection{Probabilistic Inference Motion Planning}
\label{sec:background_gpmp}
Motion Planning problems admit a probabilistic inference dual formulation \cite{toussaint2009robot, mukadam2016, mukadam2018}. The Gaussian Process Motion Planning (GPMP) algorithm formulated the motion planning problem \eqref{eq:trajectory_optimization_formulation} for system \eqref{eq:dynamics_uncontrolled} as a posterior over $\bX$
\begin{equation}
\label{eq:MP_posterior}
    p(\bX|Z) \propto p(\bX) p(Z|\bX),
\end{equation}
where $Z$ encodes the environment. 
Consider the state, control, and deviated states over the same time discretization as in \eqref{eq:time_discretization}, \eqref{eq:definition_bX}, and \eqref{eq:supported_mean}. Let $t_0 = 0$ and $t_N = T$. $p(\bX)$ and $p(Z|\bX)$ have the following structures.

{\em a) Gaussian trajectory prior.}
For a GP, the prior joint distribution is a Gaussian, which can be written as
\begin{equation*}
    \hat{p}(\bX) \triangleq p(X_0)p(X_{1}|X_{0}) \dots p(X_{N}|X_{N-1}).
\end{equation*}
A terminal condition at $t_N$ can be imposed by conditioning $\hat{p}(\bX)$ on a fictitious observation $O_{t_N} \sim \mathcal{N}(\mu_N, K_N)$, leading to a conditional Gaussian \textit{trajectory prior} \cite{mukadam2018}
\begin{equation}\label{eq:prior_factor}
    p(\bX) \triangleq \hat{p}(\bX|O_{t_N}) \propto \exp (-\frac{1}{2}\lVert \bX - \bmu \rVert_{\mathbf{K}^{-1}}^2).
\end{equation}
For linear Gauss-Markov process \eqref{eq:dynamics_uncontrolled}, 
the joint precision matrix $\bK^{-1}$ is sparse \cite{barfoot2014batch} and
\begin{equation}
\label{eq:def_K_inv}
    \bK^{-1} \triangleq G^{T}Q^{-1}G,
\end{equation}
where
\begin{equation}\label{eq:sparse_G}
    G = \begin{bmatrix}
    I    &     &   &   & \\
    -\Phi(t_1, t_0) & I &   &   & \\
     &  &  \dots   &  &\\
     &  &  & -\Phi(t_N, t_{N-1}) &  I\\
     &  &  &  0 & I 
\end{bmatrix},
\end{equation} 
and
\begin{equation}\label{eq:sparse_Q}
    Q^{-1} = {\rm diag}(K_0^{-1}, Q_{0,1}^{-1},\ldots, Q_{N-1,N}^{-1}, K_N^{-1}).
\end{equation}
Here $Q_{i,i+1}$ is exactly the Grammian defined in \eqref{eq:def_Grammian} on $(t_i, t_{i+1})$, that is, $Q_{i,i+1}=Q_{t_i,t_{i+1}}$, and $K_0, K_N$ are start and goal penalty terms. 

{\em b) Collision avoidance likelihood.}
The likelihood factor is defined by collision costs 
\begin{equation}\label{eq:collision_factor}
    p(Z|\bX) \propto \exp \left(-\lVert \Tilde{\bh}_{\epsilon_{\rm sdf}}\left(S\left(F\left(\bX\right)\right) \right) \rVert_{\Sigma_{\rm obs}}^2\right),
\end{equation}
where $\tilde{\mathbf{h}}_{\epsilon_{\rm sdf}}(\cdot)$ is a hinge loss function with cut-off distance $\epsilon_{\rm sdf}$, $S(\cdot)$ is the signed distance to the obstacles, $F(\cdot)$ represents the robot's forward kinematics, and $\Sigma_{obs}$ is a weight applied on the hinge losses.
For simplicity, we denote the composite loss function by $\mathbf{h}(\bX) \triangleq \Tilde{\bh}(S(F(\bX))).$

Combining \eqref{eq:prior_factor} and \eqref{eq:collision_factor}, a maximum-a-posterior (MAP) algorithm was proposed in GPMP to solve 
\begin{subequations}
\begin{eqnarray}
    \bX^\star \!\!\!&=&\!\!\! \underset{\bX}{\arg\max}\; p(\bX|Z)  
    \\ 
    \!\!\!&=&\!\!\! \underset{\bX}{\arg\max}\; p(\bX)p(Z|\bX)
    \\
    \!\!\!&=&\!\!\! \underset{\mathbf{X}}{\arg\min}\; \lVert \bh(\bX) \rVert_{\Sigma_{\rm obs}}^2 + \frac{1}{2}\lVert \bX - \bmu \rVert_{\bK^{-1}}^2.
\end{eqnarray}
\label{eq:MAP_formulation}
\end{subequations} 

\subsection{Trajectory Distribution and Girsanov Theorem}
\label{sec:background_Girsanov}

Girsanov \cite{sarkka2019applied} theorem is an important theorem used to transform the probability measures of path distributions. 
Given two stochastic differential equations (SDEs) with the same initial conditions as 
\begin{subequations} 
\begin{eqnarray}
    dX_t \!\!\!&=&\!\!\! f_1(X_t, t) d t + d W_t, \label{eq:nonlinear_SDE_1}
    \\
    dX_t \!\!\!&=&\!\!\! f_2(X_t, t) d t + d W_t,\label{eq:nonlinear_SDE_2}
\end{eqnarray}
\end{subequations}
Girsanov states that the ratio between the measures generated by \eqref{eq:nonlinear_SDE_1} and \eqref{eq:nonlinear_SDE_2} is given by 
\begin{equation*}
\begin{aligned}
    Z_t &= \exp \left(-\frac{1}{2} \int_{0}^{t}[f_1(X_\tau) - f_2(X_\tau)]^T[f_1(X_\tau) - f_2(X_\tau)] d \tau \right.
    \\
    & \left. \hspace{1.1cm} + \int_0^t[f_1(X_\tau) - f_2(X_\tau)]^TdW_\tau \right).
\end{aligned}
\end{equation*}
Denote the history of continuous-time state until time $t$ generated by \eqref{eq:nonlinear_SDE_1} and \eqref{eq:nonlinear_SDE_2} as $\mathcal{X}_t, \mathcal{Y}_t$ respectively, i.e.,
\begin{equation*}
    \mathcal{X}_t = \{X_\tau|0\leq \tau \leq t\}, \;\; \mathcal{Y}_t = \{Y_\tau|0\leq \tau \leq t\}.
\end{equation*}
For an arbitrary function $w(\cdot)$ of the paths, the expectation over the distribution induced by the Brownian motion has the relation $\mE[w(\mathcal{X}_t)] = \mE[Z_t w(\mathcal{Y}_t)].$

\subsection{Linear Covariance Steering}\label{sec:background_linear_CS}
The linear covariance steering problem is presented in the following form \cite{CheGeoPav15b}
\begin{subequations}\label{eq:lin_cov_steering}
\begin{eqnarray}\label{eq:lin_cov_steering1}
    \min_{u} && \mE \left\{\int_{0}^{T} [\frac{1}{2}\|u_t\|^2 + \frac{1}{2} X_t^T Q_t X_t]dt\right\}
    \\\label{eq:lin_cov_steering2}
    &&\hspace{-0.8cm} dX_t = A_t X_t dt + a_t dt + B_t (u_t dt + \sqrt{\epsilon} d W_t)
    \\\label{eq:lin_cov_steering3}
    && 
    \hspace{-0.8cm} X_0 \sim \mathcal{N}(\mu_0, K_0),\quad X_T \sim \mathcal{N}(\mu_T, K_T). 
\end{eqnarray}
\end{subequations}
The optimal solution to \eqref{eq:lin_cov_steering} is obtained by controlling the mean and the covariance separately thanks to the linearity. The control for the mean is an optimal control problem
\begin{subequations}
	\begin{eqnarray*}
		&&\min_{v} \int_{0}^{T} [\frac{1}{2}\|v_t\|^2 + \frac{1}{2} x_t^T Q_t x_t]dt
		\\ &&\dot x_t = A_tx_t + B_tv_t, \; \; x_0 = \mu_0, \quad x_T = \mu_T,
	\end{eqnarray*}
	\end{subequations}
where $v_t\in\mR^m$ are the mean control inputs. The control of the covariance is obtained by solving the coupled Riccati equations \cite{CheGeoPav17a}
\begin{subequations}\label{eq:LQschrodinger}
    \begin{eqnarray*}
   	-\dot\Pi_t\!\!\!\!&=&\!\!\!\!A_t^T\Pi_t\!+\!\Pi_tA_t\!-\!\Pi_tB_tB_t^T\Pi_t \!+\!Q_t\label{eq:LQschrodinger1}
    \\
    -\dot\HH_t\!\!\!\!&=&\!\!\!\!A_t^T\HH_t\!+\!\HH_tA_t\!+\!\HH_tB_tB_t^T\HH_t \!-\!Q_t\label{eq:LQschrodinger2}
    \\
    \epsilon K_0^{-1}\!\!\!\!&=&\!\!\!\!\Pi_0+\HH_0\label{eq:LQschrodinger3}, \; \epsilon K_T^{-1} = \Pi_T+\HH_T,
    \end{eqnarray*}
    \end{subequations}
where $\Pi_t\in\mR^{n\times n}$ is the quadratic value function matrix, and $\HH_t\in\mR^{n\times n}$ is an auxiliary matrix. The above equation system has a closed-form solution \cite{CheGeoPav15a}.
Denote the optimal mean control as $v_t^\star$ and the controlled state as $x_t^\star$. Combining with the covariance control yields the optimal feedback policy
\[
    u_t^\star = -B_t^T \Pi_t (X_t-x_t^\star) + v_t^\star.
\]

\section{Stochastic Motion Planning as Gaussian Variational Inference}
\label{sec:GVIMP_SOC_formulation}
This section introduces our formulation of motion planning as Gaussian variational inference. We then show its equivalence to a stochastic control problem. 
\subsection{Gaussian Variational Inference Motion Planning}
\label{sec:GVIMP_SOC_formulation_GVIMP}
Our formulation builds on the important posterior probability formulation \eqref{eq:MP_posterior} of an optimal motion plan. GVI-MP seeks to find the optimal distribution solution by solving
\begin{equation}
\begin{split}
    q^{\star} &= \underset{q\in \mathcal{Q}}{\arg\min}\; {\rm KL} \left ( q(\bX) \parallel p(\bX|Z) \right )
    \\
    &= \underset{q\in \mathcal{Q}}{\arg\min}\; \mE_q{ \left[\log q(\bX)-\log p(Z|\bX)-\log p(\bX) \right]}
    \\
    &= \underset{q\in \mathcal{Q}}{\arg\max}\; \mE_q\left[\log p(Z|\bX)\right] - {\rm KL} \left( q(\bX) \parallel p(\bX) \right)
    \\
    &= \underset{\mu_{\theta}, \Sigma_{\theta}}{\arg\min}\; \mE_{q\sim \mathcal{N}(\mu_{\theta}, \Sigma_{\theta})}{\left[\lVert \mathbf{h}(\bX)\rVert_{\Sigma_{\rm obs}}^2\right]} + {\rm KL} \left (q \parallel \mathcal{N}(\bmu, \bK)\right)
\end{split}
\label{eq:GVI_formulation}
\end{equation}
where $\mathcal{Q} \triangleq \{ q_\theta: q_\theta \sim \mathcal{N}(\mu_{\theta}, \Sigma_{\theta}) \}$ consists of the parameterized proposal Gaussian distributions.
\eqref{eq:GVI_formulation} formulates the probability inference for an optimal robot trajectory as an optimization within the Gaussian distribution space, where the \hongzhe{objective} function is a distributional distance measure, the Kullback–Leibler (KL) divergence \cite{van2014renyi}, between the proposal Gaussian and the target posterior. This formulation strategically connects the extensive methodologies of variational inference with the problem of motion planning in robotics.

In \eqref{eq:GVI_formulation}, the prior $p(\bX)$ is in the form \eqref{eq:prior_factor} with sparsity in $\bK^{-1}$ consisting of \eqref{eq:sparse_G} and \eqref{eq:sparse_Q}. The likelihood is in the form \eqref{eq:collision_factor}. The output of the program \eqref{eq:GVI_formulation} is a Gaussian distribution over the joint sparse support states $\bX$. Once $q^\star$ is computed, the continuous-time Gaussian trajectory distribution can be obtained efficiently (in $O(1)$ complexity) using GP interpolation \cite{barfoot2014batch, mukadam2018}.

GVI-MP \eqref{eq:GVI_formulation} is an optimization over the distribution of the discretized state $X_t$, which is seemingly distant from the motion planning problem formulation \eqref{eq:trajectory_optimization_formulation} where the optimization variables include state $X_t$ and control $u_t$, and system dynamics appears as a constraint in the formulation. However, a deeper insight reveals that the two problems are equivalent. Next, we provide a theoretical analysis of this duality. 

\subsection{GPMP as Minimum-energy Feedback Motion Planning}
\label{sec:GVIMP_SOC_formulation_MAP_minimum_energy}
The formulation \eqref{eq:MAP_formulation} is an optimization over $\bX$ for finding the MAP estimation of $p(\bX|Z)$. It is equivalent to a minimum-energy tracking problem.
\begin{proposition}\label{thm:lemma1}
The solution to the following problem is the MAP solution to the probabilistic inference \eqref{eq:MAP_formulation} over the same time discretization \eqref{eq:time_discretization}. 
\begin{subequations}\label{eq:OCP_formulation_MAP}
\begin{eqnarray}
    \min_{X(\cdot), u(\cdot)}\!\!\!\!\!\!\!&& \int_{t_0}^{t_N} \frac{1}{2}\|u_t\|^2  + \lVert \mathbf{h}(X_t) \rVert_{\Sigma_{\rm obs}}^2 dt \nonumber
    \\&&\hspace{-0.3cm}+\frac{1}{2}\lVert X_0 - \mu_0 \rVert_{K_0^{-1}}^2 + \frac{1}{2} \lVert X_N - \mu_N \rVert_{K_N^{-1}}^2
    \\
    {\rm s.t.} &&\hspace{-0.1cm}  \dot {X}_t = A_t X_t + a_t + B_t u_t. \label{eq:deviated_dyn}
\end{eqnarray}
\end{subequations}
\end{proposition}

The proof can be found in Appendix \ref{sec:proof_lemma_OCP_MAP}. The equivalence in Proposition \ref{thm:lemma1} is thanks to \eqref{eq:OCP_formulation_MAP} \hongzhe{admitting} a closed-form solution whose gain is related to the Grammian \eqref{eq:sparse_Q}. From formulation \eqref{eq:OCP_formulation_MAP}, we know that the direct MAP estimation on the states is indeed a feedback motion planning \cite{lavalle1998} around the nominal $\mu_t$. The initial and final marginal constraints are enforced by boundary costs $\lVert X_0 - \mu_0 \rVert_{K^{-1}_0}^2$ and $\lVert X_N - \mu_N \rVert_{K^{-1}_N}^2$. 

\subsection{Stochastic Motion Planning as Variational Inference}
\label{sec:GVIMP_SOC_formulation_GVI_stochastic_control}
\begin{figure}[t]
    \centering
    \includegraphics[width=0.95\linewidth]{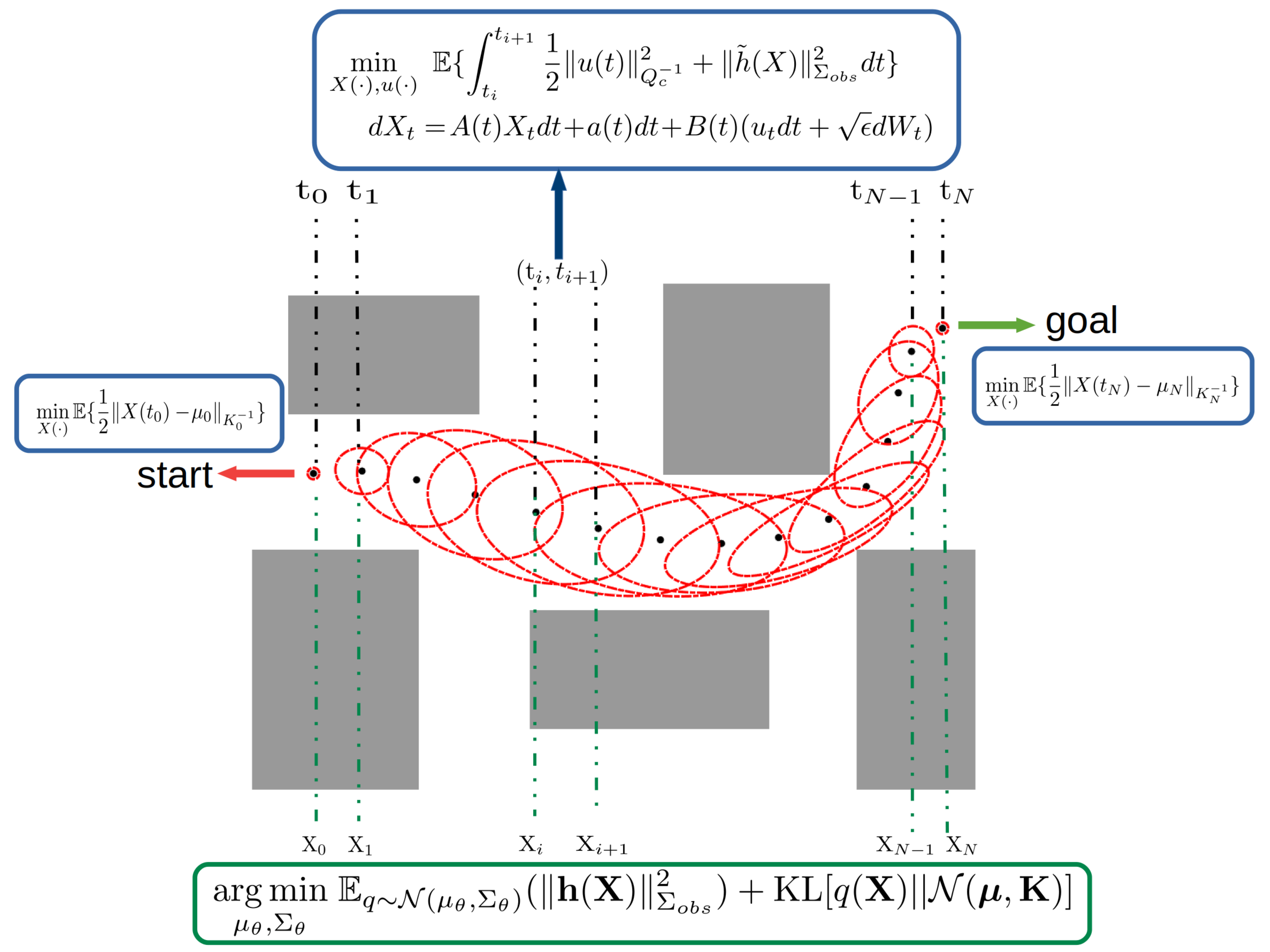}
    \caption{A demonstration of the connection between Gaussian variational inference motion planning (GVI-MP) and stochastic optimal control problems over the same time discretization in a point robot planning example. 
    }
    \label{fig:duality_SOC_GVI}
\end{figure}
To handle uncertainties, stochastic motion planning injects noise into the formulation. Consider the \textit{controlled} linear stochastic process 
\begin{equation}\label{eq:dynamics_controlled}
 d X_t =\! A_tX_t dt \! +\! a_t dt \!+\! B_t(u_t dt + dW_t),
\end{equation} which has a control $u_t$ on top of the prior GP prior \eqref{eq:dynamics_uncontrolled}, and has stochasticity on top of \eqref{eq:deviated_dyn}. This injected stochasticity results in a controlled trajectory distribution, and the optimality index in \eqref{eq:MAP_formulation} will be lifted into a statistical index, such as the expected cost, in the new formulation. Here, we assume that the noise comes into the system via the same channel as the control input. The following theorem shows the duality between GVI-MP and a stochastic control problem.
\begin{figure*}
\centering
    \begin{subfigure}[b]{0.25\textwidth}
    \centering
    \includegraphics[width=\textwidth]{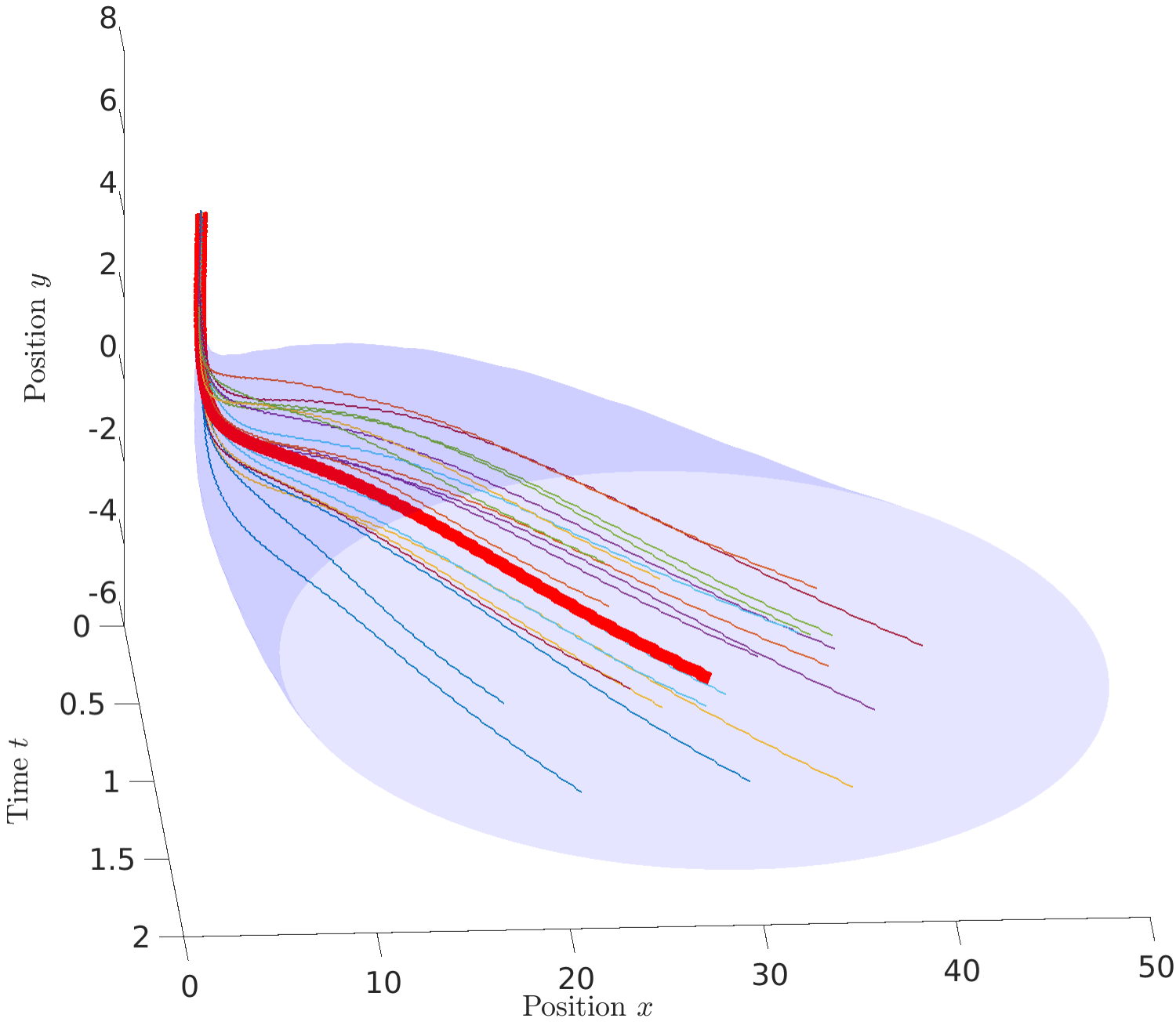}
    \caption{Uncontrolled path distribution}
    \label{fig:LTVSDE-uncontrolled}
    \end{subfigure}
    \hfill
    \begin{subfigure}[b]{0.25\textwidth}
    \centering
    \includegraphics[width=\textwidth]{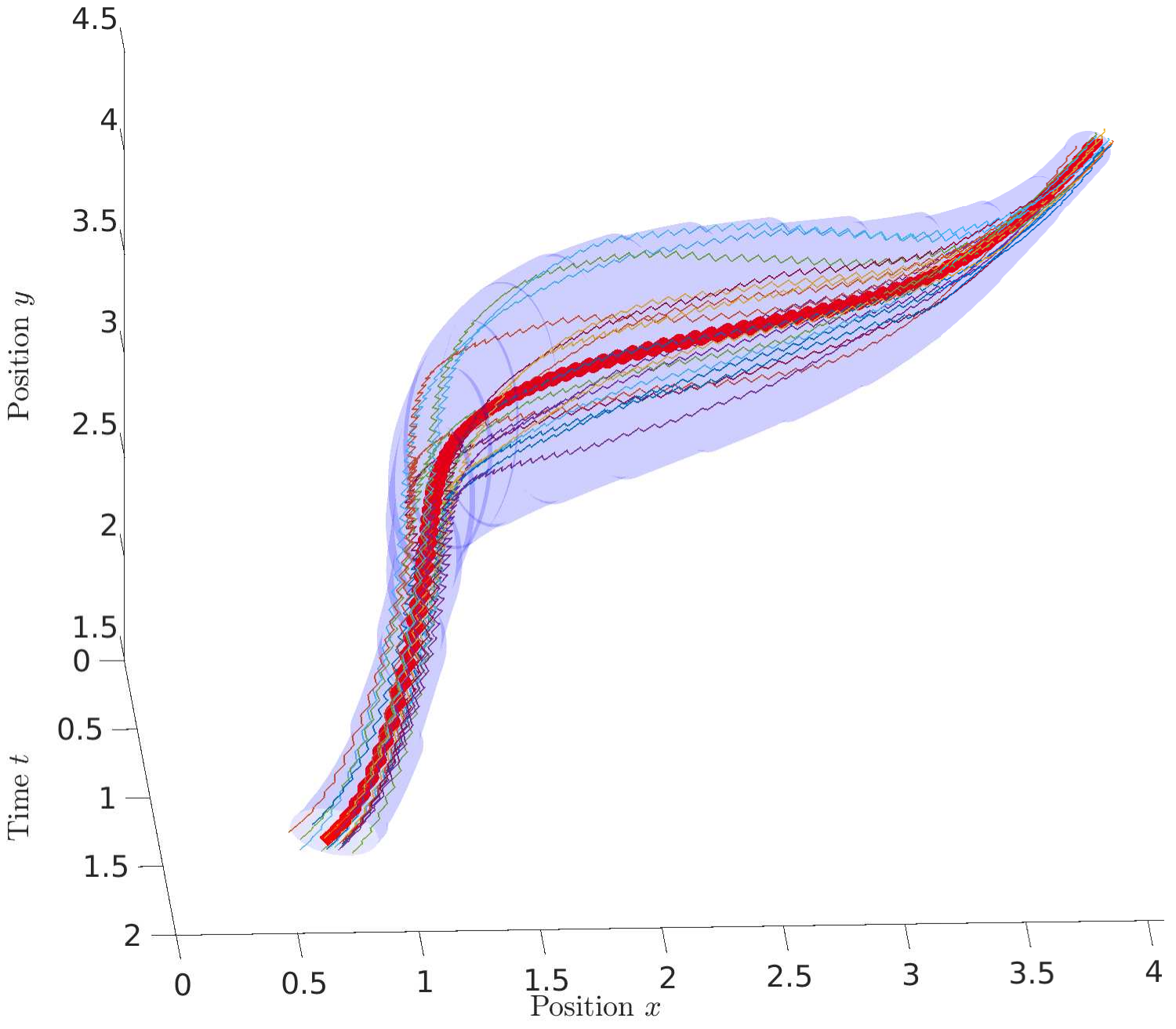}
    \caption{Controlled path distribution}
         \label{fig:LTVSDE-controlled}
    \end{subfigure}
    \hfill
    \begin{subfigure}[b]{0.25\textwidth}
    \centering
    \includegraphics[width=\textwidth]{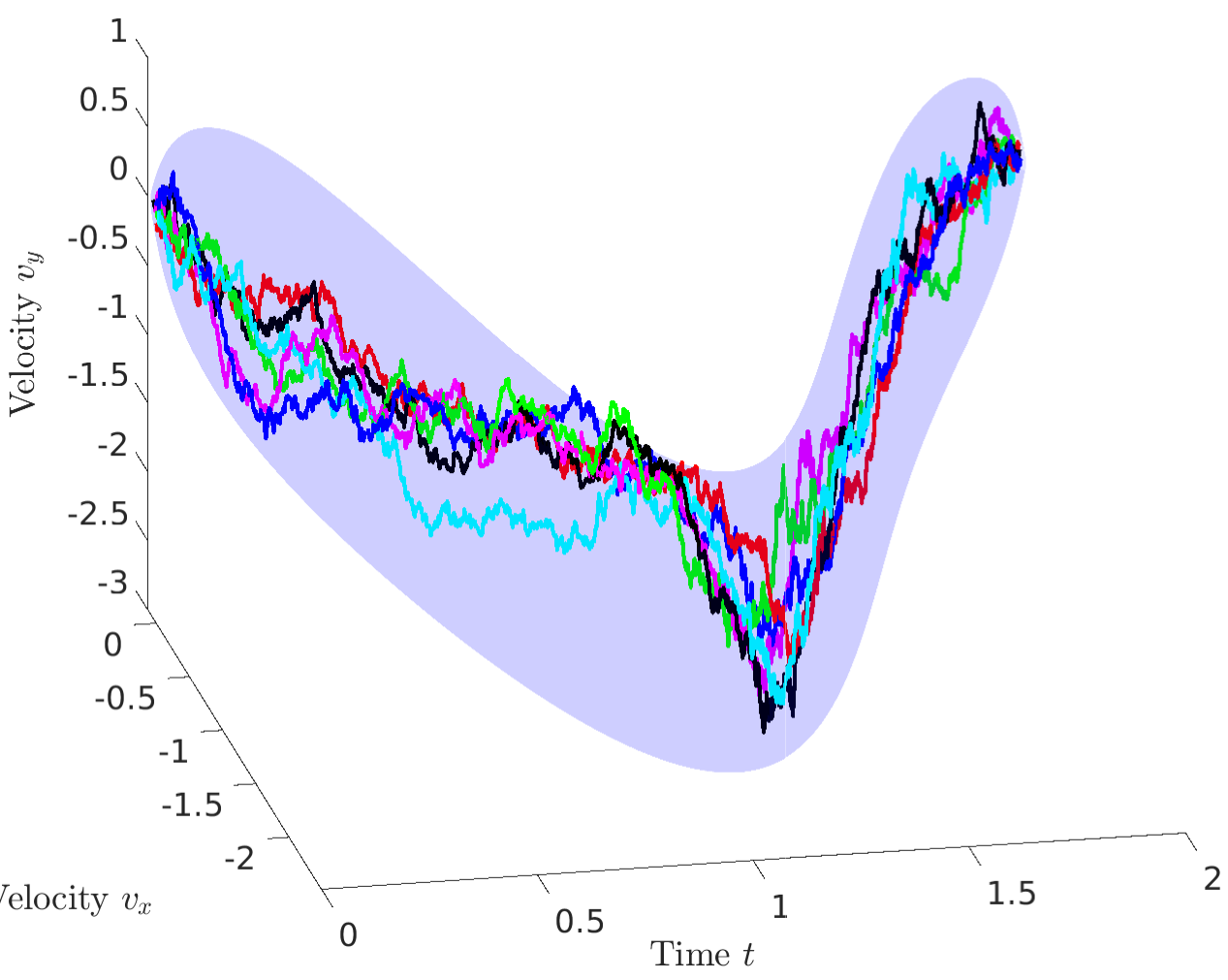}
    \caption{Controlled velocity distribution}
         \label{fig:LTVSDE-input}
    \end{subfigure}
  
  \caption{Trajectory distributions of uncontrolled \eqref{eq:dynamics_uncontrolled} and controlled \eqref{eq:dynamics_controlled} LTV-SDE, and sampled velocities. The system is obtained by linearizing a planar quadrotor (Eq. \eqref{eq:planar_quad}). The mean trajectory is in red, and the blue shade represents the $3-\sigma$ covariance contour. The Girsanov theorem states that the KL divergence between the two processes equals the expected control energy.}
  \label{fig:KL-LTV-SDE}
\end{figure*}

\begin{thm}[Main Theorem]\label{thm:lemma_connection_GVI_SOC}
    The solution to the following problem is the solution to the Gaussian Variational Inference problem \eqref{eq:GVI_formulation} over the same time discretization \eqref{eq:time_discretization}.
\begin{subequations}\label{eq:SCP_formulation_GVI}
\begin{eqnarray}\label{eq:SCP_formulation_GVI_1}
    \min_{X(\cdot), u(\cdot)}\!\!\!\!\!\!\!\!\!&& \mE \left \{\int_{t_0}^{t_N} \frac{1}{2}\|u_t\|^2 dt + \lVert \bh(\bX) \rVert_{\Sigma_{\rm obs}}^2 \right.
    \\
    && \hspace{-0.2cm} \nonumber \left.
    + \frac{1}{2}\lVert X_0 -\! \mu_0 \rVert_{K_0^{-1}} 
    + \frac{1}{2}\lVert X_N - \mu_N \rVert_{K_N^{-1}} 
    \right \} 
    \\
    \label{eq:SCP_formulation_GVI_2} 
    {\rm s.t.} && \hspace{-0.3cm} d X_t =\! A_tX_t dt \! +\! a_t dt \!+\! B_t(u_t dt + dW_t).
\end{eqnarray}
\end{subequations}
\end{thm}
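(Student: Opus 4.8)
The plan is to reduce the continuous-time stochastic control problem \eqref{eq:SCP_formulation_GVI} to the discrete Gaussian inference \eqref{eq:GVI_formulation} by converting the expected control energy into a Kullback--Leibler divergence via the Girsanov theorem of Section \ref{sec:Girsanov}, and then matching the remaining cost terms factor by factor against the prior decomposition of Section \ref{sec:factorization_structure}. First I would fix an admissible control $u_t$ and let $\mathbb{P}^u$ denote the path measure of the controlled process \eqref{eq:dynamics_controlled} and $\mathbb{P}^0$ that of the uncontrolled prior process \eqref{eq:dynamics_uncontrolled}. Since the control $u_t$ and the noise $dW_t$ enter \eqref{eq:SCP_formulation_GVI_2} through the same channel $B_t$, the drift difference between the two SDEs is exactly $B_t u_t$, so Girsanov gives the density ratio $\tfrac{d\mathbb{P}^u}{d\mathbb{P}^0}$; taking $\mE_{\mathbb{P}^u}[\log \tfrac{d\mathbb{P}^u}{d\mathbb{P}^0}]$ and using that the stochastic-integral term is a mean-zero martingale under $\mathbb{P}^u$ yields the clean identity
\begin{equation*}
    \mE_{\mathbb{P}^u}\Big[\tfrac{1}{2}\int_{t_0}^{t_N}\|u_t\|^2\, dt\Big] = {\rm KL}[\mathbb{P}^u \,\|\, \mathbb{P}^0],
\end{equation*}
which is the continuous-time content already highlighted in the caption of Figure \ref{fig:KL-LTV-SDE}.

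Next I would exploit the fact that the remaining terms in \eqref{eq:SCP_formulation_GVI_1} -- the collision cost $\lVert\bh(\bX)\rVert_{\Sigma_{\rm obs}}^2$ in the form \eqref{eq:factor_collision} and the boundary penalties $\frac12\lVert X_0-\mu_0\rVert_{K_0^{-1}}^2$, $\frac12\lVert X_N-\mu_N\rVert_{K_N^{-1}}^2$ -- depend on the trajectory only through the support states $\bX=[X_{t_0},\dots,X_{t_N}]$, hence only through the finite-dimensional marginal $q(\bX)$ of $\mathbb{P}^u$. Using the Markov property of the linear SDE, the path divergence factorizes across the discretization intervals, and the minimum expected control energy needed to bridge two consecutive support states $X_{t_i},X_{t_{i+1}}$ under \eqref{eq:dynamics_controlled} is governed by precisely the Grammian $Q_{i,i+1}$ of \eqref{eq:Grammian_Q_ij} -- the same object used in Lemma \ref{thm:lemma1}. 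Minimizing the path divergence over all controls realizing a prescribed support marginal $q$ therefore collapses ${\rm KL}[\mathbb{P}^u\|\mathbb{P}^0]$ onto the discrete divergence carried by the dynamics factors $J_{i,i+1}(q_{i,i+1})$, and the minimizing interpolation is the linear-feedback bridge, which keeps $q$ Gaussian and thus inside the class $\mathcal{Q}$ of \eqref{eq:GVI_formulation}.

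With these two reductions I would assemble the objective. The Girsanov term supplies the dynamics factors $\sum_i J_{i,i+1}$ together with the differential entropy $-H(q)$ intrinsic to the Brownian part; the two boundary penalties supply the isolated factors $J_0$ and $J_N$; and the collision cost supplies $\mE_q\lVert\bh(\bX)\rVert_{\Sigma_{\rm obs}}^2$. Summing and recalling $\psi_{\rm Prior}=J_0+\sum_i J_{i,i+1}+J_N$ reconstitutes $\frac12\lVert\bX-\bmu\rVert_{\bK^{-1}}^2$ with the full precision $\bK^{-1}=G^TQ^{-1}G$ of \eqref{eq:sparse_G}--\eqref{eq:sparse_Q}, so the discretized stochastic-control cost equals, up to a constant independent of $q$,
\begin{equation*}
    \mE_q\lVert\bh(\bX)\rVert_{\Sigma_{\rm obs}}^2 + {\rm KL}[q(\bX)\,\|\,\mathcal{N}(\bmu,\bK)],
\end{equation*}
which is exactly the GVI objective in \eqref{eq:GVI_formulation}. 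Since the map from the minimizing control to its support marginal is a bijection onto $\mathcal{Q}$, the two minimizers coincide.

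The step I expect to be the main obstacle is the marginalization in the second paragraph: relating the infinite-dimensional ${\rm KL}[\mathbb{P}^u\|\mathbb{P}^0]$ to the finite-dimensional ${\rm KL}[q\|p]$ over the support states. This requires the chain rule for relative entropy together with the observation that, because the collision and boundary costs are blind to the trajectory between support points, the inner minimization over intra-interval behaviour is a free minimum-energy (Schr\"odinger-bridge) problem whose optimum saturates the data-processing inequality and produces exactly the Grammian weighting $Q_{i,i+1}^{-1}$. Care is also needed to track the entropy bookkeeping -- confirming that the $-H(q)$ emerging from the stochastic dynamics matches the entropy regularizer in \eqref{eq:GVI_entropy_formulation} -- and to verify that finite-energy admissibility of $u_t$ is equivalent to $q$ being absolutely continuous with respect to the prior, so that the two optimization domains genuinely agree.
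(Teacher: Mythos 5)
Your proposal follows the same core route as the paper's proof: apply Girsanov's theorem to identify the expected control energy with ${\rm KL}[\cP^u\,\|\,\cP^0]$, rewrite \eqref{eq:SCP_formulation_GVI} as a distributional control problem, and then match the discretized objective---prior transition factors weighted by the Grammians $Q_{i,i+1}^{-1}$, boundary penalties reconstituting $K_0^{-1}$ and $K_N^{-1}$, and the collision term---against the GVI objective \eqref{eq:GVI_formulation}. Where you genuinely depart from the paper is the continuous-to-discrete step. The paper simply replaces the path measures by their discrete-time joints, writing $d\cP^0 \approx \Tilde{p}(\bX)$ and $d\cP^u \approx \hat q(\bX)$, and proceeds by term matching; this is an approximation asserted with an ``$\approx$'' rather than proved. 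You instead decompose ${\rm KL}[\cP^u\|\cP^0]$ exactly via the chain rule for relative entropy, observe that the running costs in \eqref{eq:SCP_formulation_GVI_1} see only the support-state marginal $q$, and argue that the inner minimization over intra-interval behaviour (a pinned minimum-energy bridge) saturates the data-processing inequality, so that $\min_{u:\,\text{marginal}=q}{\rm KL}[\cP^u\|\cP^0]={\rm KL}[q\|\Tilde p]$ exactly. This buys an exact equivalence where the paper has a discretization heuristic, and it correctly identifies the Grammian weighting as arising from the prior transition kernels. Two caveats: first, your phrasing attributes $Q_{i,i+1}$ to a deterministic minimum-energy computation (as in Lemma \ref{thm:lemma1}), whereas in the stochastic setting it enters through the Gaussian transition densities of $\Tilde p$---the two coincide here, but the justification should go through the transition density; second, your closing claim that the minimizing controls biject onto $\mathcal{Q}$ quietly restricts attention to Gaussian realizable marginals, exactly as the paper does when it declares $d\cP^u \approx \hat q(\bX)$ ``by definition also a Gaussian''---without restricting to affine feedback (equivalently, Gauss--Markov controlled processes), the unconstrained minimizer over all marginals would be the non-Gaussian posterior, so this restriction should be stated explicitly in either proof.
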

\begin{proof}
\label{sec:proof_lemma_GVI_SOC}
For the prior SDE \eqref{eq:dynamics_uncontrolled}, denote the measure of the induced continuous-time path as $d \cP^0$. Similarly, denote the measure of the path induced by the controlled process \eqref{eq:dynamics_controlled} as $d \cP^u$. Girsanov theorem states that the density ratio
\begin{equation*}
    \frac{d \cP^u}{d \cP^0} = \exp\left(\frac{1}{2} \int_{t_0}^{t_N} \lVert B_tu_t \rVert_{(B_tB_t^T)^\dagger}^2 d t + 
    \int_{t_0}^{t_N} u_t^Td W_t \right),
\end{equation*}
from which we have
\begin{equation*}
    {\rm{KL}}\left( \cP^u \parallel \cP^0 \right) = \int \log \frac{d\cP^u}{d \cP^0} d\cP^u = \frac{1}{2} \mE \left \{\int_{t_0}^{t_N}\lVert u_t \rVert^2 dt \right \}
\end{equation*}
for a full column rank $B_t$, due to the fact that $\mE[w(\cdot)dW_t] = 0, \forall w(\cdot),$ and $\lVert B_t u_t \rVert_{(B_tB_t^T)^\dagger}^2 = \lVert u_t \rVert^2$. We can thus rewrite the objective \eqref{eq:SCP_formulation_GVI} into a distribution control problem 
\begin{subequations}
    \begin{eqnarray*}
    \min_{\cP^u} \!\!\!\!&&\!\!\!\!\int \left [ \log \frac{d\cP^u}{d \cP^0}  +  \lVert \bh(\bX) \rVert_{\Sigma_{\rm obs}}^2 + \frac{1}{2} \lVert X_0 -\! \mu_0 \rVert_{K_0^{-1}} \right.
    \\ 
    && \left. + \frac{1}{2}\lVert X_N - \mu_N \rVert_{K_N^{-1}}^2 \right] d\cP^u,
\end{eqnarray*}
\end{subequations}
which by definition of the KL-divergence, is equivalent to
\begin{equation} \label{eq:objective_SOC_KL}
\begin{split}
    \min_{\cP^u}\;\;\;\;& \mE \left \{ \lVert \bh(\bX) \rVert_{\Sigma_{\rm obs}}^2 +  \frac{1}{2} \lVert X_0 -\! \mu_0 \rVert_{K_0^{-1}} \right. 
    \\ & \left. + \frac{1}{2}\lVert X_N - \mu_N \rVert_{K_N^{-1}}^2 \right \}  + {\rm{KL}}\left ( \cP^u \parallel  \cP^0 \right).
\end{split}
\end{equation}
Take time discretization $\mathbf{t} = [t_0, \dots, t_N]$, the prior probability 
\begin{equation}\label{eq:discretized_prior}
    d \cP^0 \approx \hat{p}(\bX) \triangleq p(X_1|X_0)\dots p(X_N|X_{N-1}).
\end{equation}
The right hand side of \eqref{eq:discretized_prior} is a Gaussian distribution $\hat{p}(\bX) \sim \mathcal{N}(\hat{\bmu}, \hat{\bK})$, and $\hat{\bK}^{-1} = \hat{G}^T\hat{Q}^{-1}\hat{G}$, with \cite{barfoot2014batch}
\begin{equation*}
    \hat{G} = \begin{bmatrix}
    -\Phi(t_1, t_0) & I &   &   & \\
     &  &  \dots   &  &\\
     &  &  & -\Phi(t_N, t_{N-1}) &  I
\end{bmatrix},
\end{equation*} 
and $\hat{Q}^{-1} = {\rm diag}(Q_{0,1}^{-1},\ldots, Q_{N-1,N}^{-1}),$
where $Q_{i, i+1}$ is the Grammian \eqref{eq:def_Grammian}. The difference between $\hat{p}(\bX)$ and $p(\bX)$ lies in the initial and terminal conditions $\mathcal{N}(\mu_0, K_0)$ and $\mathcal{N}(\mu_N, K_N)$. Adding boundary cost factors 
\begin{equation*}
    \frac{1}{2}\mE \left [ \lVert X_0 - \mu_0 \rVert_{K_0^{-1}}^2 \right ], \;\; \frac{1}{2}\mE \left [\lVert X_N - \mu_N \rVert_{K_N^{-1}}^2 \right ],
\end{equation*}
the joint distribution
\begin{equation*}
    \mE \left \{ - \log d \cP^0 + \frac{1}{2}\lVert X_0 - \mu_0 \rVert_{K_0^{-1}}^2 + \frac{1}{2}\lVert X_N - \mu_N \rVert_{K_N^{-1}}^2 \right \} 
    \\ 
\end{equation*}
coincides with \eqref{eq:prior_factor}. The joint probability of the discrete support states induced by controlled process $d \cP^u$ is also a Gaussian $d \cP^u \approx \hat{q}(\bX)$. We thus arrive at the discrete approximation of the objective \eqref{eq:objective_SOC_KL} as
\begin{equation*}
    \min_{\hat{q}\in \mathcal{\hat{Q}}} \mE \left [ \lVert \bh(\bX) \rVert_{\Sigma_{\rm obs}}^2 \right] + {\rm{KL}}\left( \hat{q}({\bX)} \parallel \mathcal{N}(\bmu, \bK) \right ),
\end{equation*}
which coincide with the last line in GVI-MP formulation \eqref{eq:GVI_formulation}.
\end{proof}
Theorem \ref{thm:lemma_connection_GVI_SOC} shows that GVI-MP is equivalent to a stochastic control problem. The nominal in \eqref{eq:SCP_formulation_GVI} is the stochastic prior \eqref{eq:dynamics_uncontrolled}, and the controlled dynamics also considers uncertainty. The objective in \eqref{eq:OCP_formulation_MAP} is equivalent to minimizing the weighted distance of the next controlled support state to the reachable states resulting from passive dynamics
\begin{equation*}
    \lVert X_{t_{i+1}} - \mu_{t_{i+1}} - \Phi(t_{i+1}, t_i)(X_i-\mu_{t_i}) \rVert_{Q_{i,i+1}^{-1}}^2,
\end{equation*}
while in \eqref{eq:SCP_formulation_GVI}, the minimization of the expected control equals minimizing the distance between the two path distributions induced by \eqref{eq:dynamics_controlled} and \eqref{eq:dynamics_uncontrolled}, measured by ${\rm{KL}}\left( \cP^u \parallel  \cP^0\right)$. 
Fig. \ref{fig:duality_SOC_GVI} visualizes the connections via an example on a 2-DOF point robot, and Fig. \ref{fig:KL-LTV-SDE} showcases an LTV system example. 

\section{GVI-MP algorithm}
\label{sec:GVIMP_algorithm}
This section presents the Gaussian Variational Inference Motion Planner (GVI-MP) algorithm.
\subsection{Natural Gradient Descent Optimization Scheme.}
\label{sec:GVIMP_algorithm_NGD}
We propose a natural gradient numerical scheme for solving \eqref{eq:GVI_formulation}. Define the negative log probability for the posterior as $\psi(\bX) \triangleq -\log p(\bX|Z)$. The objective in \eqref{eq:GVI_formulation} reads 
\begin{equation*}
    \mathcal{J}(q) = {\rm KL}\left( q(\bX) \parallel p(\bX|Z) \right) = \mE[\psi(\bX)] - \mathcal{H}(q),
\end{equation*}
where $\mathcal{H}(q)$ denotes the entropy of the distribution $q$.
The sparsity of the Gaussian prior leads to a sparse structure in the inverse of the covariance matrix. We parameterize the proposed Gaussian using its mean $\mu_\theta$ and inverse of covariance $\Sigma^{-1}_\theta$. The derivatives with respect to $\mu_{\theta}$ and $\Sigma^{-1}_{\theta}$ are \cite{opper2009}
\begin{subequations}
    \begin{align}
        \!\!\!\!\!\frac{\partial \mathcal{J}(q)}{\partial \mu_{\theta}} &= \Sigma^{-1}_{\theta}\mE \left[(\bX-\mu_{\theta})\psi \right]
        \label{eq:GVI_derivatives1}
        \\
        \!\!\!\!\!\frac{\partial^2\mathcal{J}(q)}{\partial \mu_{\theta} \partial \mu_{\theta}^T} &= \Sigma^{-1}_{\theta}\mE\left[(\bX-\!\mu_{\theta})(\bX\!-\!\mu_{\theta})^T\psi\right]\Sigma^{-1}_{\theta} \!- \!\!\Sigma^{-1}_{\theta}\mE\left[\psi\right]
        \label{eq:GVI_derivatives2}
        \\
        \!\!\!\!\!\frac{\partial \mathcal{J}(q)}{\partial \Sigma^{-1}_{\theta}} &= \frac{1}{2}\Sigma_{\theta}\mE\left[\psi\right]\!-\!\frac{1}{2} \mE\left[(\bX-\!\mu_{\theta})(\bX-\!\mu_{\theta})^T\psi\right]  + \frac{1}{2}\Sigma_{\theta}.
        \label{eq:GVI_derivatives3}
    \end{align}
    \label{eq:GVI_derivatives}
\end{subequations}
Comparing \eqref{eq:GVI_derivatives2} and \eqref{eq:GVI_derivatives3} we obtain
\begin{equation}
\frac{\partial^2\mathcal{J}(q)}{\partial \mu_{\theta} \partial \mu_{\theta}^T} = \Sigma^{-1}_{\theta} - 2\Sigma^{-1}_{\theta}\frac{\partial \mathcal{J}(q)}{\partial \Sigma^{-1}_{\theta}} \Sigma^{-1}_{\theta},
\label{eq:GVI_equivalent_derivatives}
\end{equation}
which makes it unneccessary to evaluate both \eqref{eq:GVI_derivatives2} and \eqref{eq:GVI_derivatives3}. Natural gradient descent update w.r.t. objective $\mathcal{J}$ reads \cite{magnus2019} 
\begin{equation*}
    \begin{bmatrix}
    \delta \mu_{\theta}\\
    vec(\delta \Sigma^{-1}_{\theta})
    \end{bmatrix}
    = -\begin{bmatrix}
    \Sigma_{\theta} & 0 \\
    0 & 2(\Sigma^{-1}_{\theta} \otimes \Sigma^{-1}_{\theta})
    \end{bmatrix}
    \begin{bmatrix}
    \frac{\partial \mathcal{J}(q)}{\partial \mu_{\theta}}\\
    vec(\frac{\partial \mathcal{J}(q)}{\partial \Sigma^{-1}_{\theta}})
    \end{bmatrix},
\end{equation*}
which in matrix form is written as
\begin{equation}
    \Sigma^{-1}_{\theta} \delta \mu_{\theta} = -\frac{\partial \mathcal{J}(q)}{\partial \mu_{\theta}},\;\;\; \delta \Sigma^{-1}_{\theta} = -2\Sigma^{-1}_{\theta}\frac{\partial \mathcal{J}(q)}{\partial\Sigma^{-1}_{\theta}}\Sigma^{-1}_{\theta}.
\label{eq:GVI_delta_mu}
\end{equation}
Comparing \eqref{eq:GVI_derivatives} and \eqref{eq:GVI_delta_mu}, we have
\begin{equation}
    \begin{aligned}
    \delta \Sigma^{-1}_{\theta} &= \frac{\partial^2\mathcal{J}(q)}{\partial \mu_{\theta} \partial \mu_{\theta}^T} - \Sigma^{-1}_{\theta}.
\end{aligned}
\label{eq:GVI_delta_Sigma}
\end{equation}
Equation \eqref{eq:GVI_delta_mu} and \eqref{eq:GVI_delta_Sigma} tells that, to calculate the update $\delta \mu_{\theta}, \delta \Sigma^{-1}_{\theta}$, we only need to compute the expectations \eqref{eq:GVI_derivatives1} and \eqref{eq:GVI_derivatives2}. The updates use a step size $\eta < 1$ 
\begin{equation}
    \mu_{\theta} \leftarrow \mu_{\theta} + \eta^R \, \delta \mu_{\theta}, \;\;\;  \Sigma^{-1}_{\theta} \leftarrow \Sigma^{-1}_{\theta} + \eta^R \, \delta \Sigma^{-1}_{\theta},
\label{eq:GVI_backtracking}
\end{equation}
with an increasing $R=1,2,\ldots$ to shrink the step size for backtracking until the cost decreases. 

\subsection{Posterior Structures in Motion Planning Inference Tasks}
\label{sec:GVIMP_algorithm_factorization}
Natural gradient descent updates necessitate computing the expectations \eqref{eq:GVI_derivatives} for the proposed Gaussian $q$. Leveraging the structure of the posterior, we streamline these computations. By the definition \eqref{eq:MAP_formulation}, we express $\psi(\bX)$ as 
\begin{equation*}
    \psi(\bX) = - \log p(\bX|Z) = \lVert \mathbf{h}(\bX) \rVert_{\Sigma_{\rm obs}}^2 + \frac{1}{2}\lVert \bX - \bmu \rVert_{\bK^{-1}}^2,
\end{equation*}
which consists of a quadratic $\psi_{\rm{Prior}}$ for the prior and a nonlinear factor $\psi_{\rm{NL}}$ for the collision checking defined as
\begin{subequations}
\label{eq:GVI_define_linear_NL_factors}
\begin{eqnarray}
    \psi_{\rm{Prior}}(\bX) \!\!\!&=&\!\!\! \frac{1}{2}\lVert \bX - \bmu \rVert_{\bK^{-1}}^2, 
    \label{eq:defn_prior_factor}
    \\
    \psi_{\rm{NL}}(\bX) \!\!\!&=&\!\!\! \lVert \mathbf{h}(\bX) \rVert_{\Sigma_{\rm obs}}^2.
    \label{eq:defn_nonlinear_factor}
\end{eqnarray}
\end{subequations}

{\em (a) The motion prior factors.} The factor $\psi_{\rm Prior}$ contains a sparse precision matrix $\bK^{-1}$ \eqref{eq:sparse_G}. Denote the discretization of the deviated state \eqref{eq:tilde_Xt} as $ \Tilde{X}_j = \Tilde{X}_{t_j}, \;\; j = 0, \dots, N$, and adopt the state transition notation \eqref{eq:state_transition_discrete}, we have
\begin{equation*}
\begin{split}
    \psi_{\rm{Prior}}(\bX) &\propto \mE \{(\bX - \bmu)^TG^TQ^{-1}G(\bX-\bmu)\}
\\
    & = \mE \{\lVert [\Tilde{X}_0^T, (\Tilde{X}_1 - \Phi_{1,0}\Tilde{X}_0)^T, \dots, 
\\
    &\hspace{0.2cm}(\Tilde{X}_N - \Phi_{N,N-1} \Tilde{X}_{N-1})^T, \Tilde{X}_N^T]^T \rVert_{Q^{-1}}^2\},
\end{split}
\end{equation*}
which can be further decomposed into 
\begin{equation*}
    \mathcal{J}_0(q_0) + \mathcal{J}_{0,1}(q_{0,1}) + \dots + \mathcal{J}_N(q_N)
\end{equation*}
by defining the factorized cost functionals
\begin{subequations}
\label{eq:prior_factors}
\begin{eqnarray}
    \mathcal{J}_0 \!\!\!&\triangleq&\!\!\! \mE_{q_0} \left[ \lVert \Tilde{X}_0 \rVert_{{K^{-1}_0}}^2 \right]
    \label{eq:prior_factors_1}
\\
    \mathcal{J}_{i,i+1} \!\!\!&\triangleq&\!\!\! \mE_{q_{i,i+1}} \left[ \lVert \Tilde{X}_{i+1} - \Phi_{i+1,i} \Tilde{X}_{i} \rVert_{Q^{-1}_{i,i+1}}^2 \right]
    \label{eq:prior_factors_2}
\\
    \mathcal{J}_N \!\!\!&\triangleq&\!\!\! \mE_{q_N} \left[ \lVert \Tilde{X}_N \rVert_{{K^{-1}_N}}^2 \right].
    \label{eq:prior_factors_3}
\end{eqnarray}
\end{subequations}
$q_0$ and $q_N$ are the marginal Gaussian distributions of the first and last supported states, and $q_{i,i+1}$ stand for marginal Gaussian distributions of two adjacent states at $t_i$ and $t_{i+1}$.

The factors $\mathcal{J}_{i,i+1}(q_{i,i+1})$ promote the underlying uncontrolled dynamics on consecutive states, and the two isolate factors $\mathcal{J}_0$ and $\mathcal{J}_N$ enforce the initial and terminal conditions. 
Notice that because of the underlying linear dynamics, both $\mathcal{J}_{i,i+1}, \mathcal{J}_0$, and $\mathcal{J}_N$ have quadratic functions inside the expectation. The following Lemma \ref{lem:closed_form_prior} shows that the derivatives \eqref{eq:GVI_derivatives1} and \eqref{eq:GVI_derivatives2} for prior factors \eqref{eq:prior_factors} enjoy a closed-form. 
\begin{lemma}
    \label{lem:closed_form_prior}
    For the Gaussian $\mathbf{X}$ and for given matrices $\Lambda, \Psi$ with right dimensions, define $y \triangleq \bX-\mu_{\theta}$, the derivatives of $ \mathcal{J}_{\Lambda, \Psi} \triangleq \mE[\lVert \Lambda\bX - \Psi\bmu \rVert_{\bK^{-1}}^2]$ with respect to the mean $\mu_{\theta}$ are
\begin{subequations}
    \begin{align}
        \frac{\partial \mathcal{J}_{\Lambda, \Psi}}{\partial \mu_{\theta}} 
         &= 2\Lambda^T\bK^{-1}(\Lambda\mu_\theta - \Psi\bmu). 
        \label{eq:GVI_derivatives_prior1}
        \\
        \frac{\partial^2\mathcal{J}_{\Lambda, \Psi}}{\partial \mu_{\theta} \partial \mu_{\theta}^T} 
        &=\Sigma^{-1}_{\theta}\mE\left[yy^Ty^T\Lambda^T\bK^{-1}\Lambda y \right] \Sigma^{-1}_{\theta} \nonumber
        \\
        & \;\;\;\; - \Sigma^{-1}_{\theta}\tr \left[\Lambda^T\bK^{-1} \Lambda \Sigma_{\theta}\right].
        \label{eq:GVI_derivatives_prior2}
    \end{align}
    \label{eq:GVI_derivatives_prior}
\end{subequations}
\end{lemma}
The proof can be found in Appendix \ref{sec:proof_lemma_closedform_prior}. The closed-form derivatives for the motion prior can be obtained by letting
\begin{equation*}
    \Lambda = \Psi = \left[-\Phi_{i+1, i}, ~ I \right],
\end{equation*}
for $\mathcal{J}_{i,i+1}$, and letting $\Lambda = I, ~ \Psi = 0$ for $\mathcal{J}_0, ~ \mathcal{J}_N$, where the variables are the Gaussian marginals $q_0, q_{i,i+1}$, and $q_N$, respectively. The close forms greatly accelerate the algorithm compared with numerical estimations for the prior factors.

{\em (b) The collision likelihood factors. $\psi_{\rm{NL}}$} are evaluated on each support state, which forms a series of isolated factors 
\begin{equation}\label{eq:factor_collision}
    \psi_{\rm{NL}}(\bX) =  \lVert \bh(\bX) \rVert_{\Sigma_{\rm obs}}^2 = \sum_{i=1}^{N-1} \lVert \bh(X_i) \rVert_{\Sigma_{{\rm obs}_i}}^2.
\end{equation}
Computing expectations for Gaussian distributions over nonlinear factors has been extensively explored within the Gaussian filtering literature \cite{arasaratnam2007discrete, ito2000gaussian}. A prominent numerical technique employed is the Gauss-Hermite (GH) quadrature \cite{liu1994}, which leads to the Gauss-Hermite Filter (GHF) \cite{ito2000gaussian}. Quadrature approximation is precise and efficient for univariate integration estimation. However, the direct tensor product quadrature rule for multivariate functions has an exponential dependence on the dimension of the variables \cite{heiss2008likelihood}. The state-of-the-art sparse-grid quadrature is developed on the \textit{Smolyak rule} \cite{bungartz2004sparse}, which ignores the cross terms in different degrees of polynomial basis and achieves polynomial time complexity (c.f. Lemma \ref{lem:sparseGH_complexity} in Appendix \ref{sec:GH_quadratures}).

\subsection{Factorization and Marginal Updates}
\label{sec:GVIMP_algorithm_factorized_optimization}
The closed-form \eqref{eq:GVI_derivatives_prior} and the sparse quadratures in Section \ref{sec:GVIMP_algorithm_factorization}-(b) reduce the computation complexity. However, joint-level computations for the nonlinear factors are still demanding. Thanks to the factor graph \eqref{eq:prior_factors} and \eqref{eq:factor_collision}, we decompose the computations to marginal levels. The log probability of the posterior $\psi(\bX)$ can be factorized as
\begin{equation}
\label{eq:factorized_psi}
    \psi(\bX) = \sum_{l=1}^L \psi_\ell(\bX) \triangleq \sum_{l=1}^L \left[-\log p\left(\bX_\ell|Z\right) \right],
\end{equation}
where factor-level Gaussian variables are linearly mapped to and from the joint variables. Specifically, for
\[
q_i\sim \mathcal{N}(\mu_{\theta_i}, \Sigma_{\theta_i}), \; q_{i,i+1}\sim \mathcal{N}(\mu_{\theta_{i,i+1}}, \Sigma_{\theta_{i,i+1}}),
\]
we have 
\begin{equation*}
    \mu_{\theta_i} = M_i \mu_\theta, \; \mu_{\theta_{i,i+1}} = M_{i,i+1} \mu_\theta
\end{equation*}
and 
\begin{equation*}
    \Sigma_{\theta_i} = M_i \Sigma_\theta M_i^T, \; \Sigma_{\theta_{i,i+1}} = M_{i,i+1} \Sigma_\theta M_{i,i+1}^T
\end{equation*}
where 
\[
M_i \triangleq \begin{bmatrix}
    0& \dots & 0 & \overbrace{I_n}^{i} & 0 & \dots & 0
\end{bmatrix}
\]
and 
\[
M_{i,i+1} \triangleq \begin{bmatrix}
0   & \dots & \overbrace{I_n}^{i} & 0 & \dots & 0
\\
0   & \dots & 0 & \overbrace{I_n}^{i+1} & \dots & 0
\end{bmatrix},
\]
$I_n$ is the identity matrix of the state dimension $n$. Concatenating all the above marginals, assuming $L$ in total, into a collection $\{q_\ell\}_{\ell=1}^L$, we write
\begin{equation}\label{eq:factor_map}
    q_\ell \sim \mathcal{N}(\mu_\theta^\ell, \Sigma_\theta^\ell), \;\;
    \mu_\theta^\ell = M_{\ell} \mu_\theta, \;\;
    \Sigma_\theta^\ell = M_{\ell} \Sigma_\theta M_{\ell}^T.
\end{equation}
Given \eqref{eq:factor_map}, the gradient updates \eqref{eq:GVI_derivatives1} and \eqref{eq:GVI_derivatives2} can be evaluated on factor levels, then mapped to the joint level as
\begin{subequations}
\label{eq:update_factor_to_joint}
\begin{eqnarray}
    \frac{\partial \mathcal{J}(q)}{\partial \mu_\theta} \!\!\!&=&\!\!\! \sum_{l=1}^L M_{\ell}^T \frac{\partial \mathcal{J}_\ell(q_\ell)}{\partial \mu_{\theta}^\ell}
    \\
    \frac{\partial^2 \mathcal{J}(q)}{\partial \mu_\theta \partial \mu_\theta^T} \!\!\!&=&\!\!\! \sum_{l=1}^L M_{\ell}^T \frac{\partial^2 \mathcal{J}_\ell(q_\ell)}{\partial \mu_\theta^{l} (\partial \mu_\theta^{l})^T} M_{\ell},
\end{eqnarray}
\end{subequations}
where $\mathcal{J}_\ell(q_\ell) \triangleq \mE_{q_\ell}[\psi_\ell(\bX)]$ represents the $l{~ \rm th}$ factor objective corresponding to marginal Gaussian distribution $q_{\ell}$.

Combining the prior factors \eqref{eq:prior_factors} and the likelihood factors \eqref{eq:factor_collision}, for motion planning involving $N+1$ support states, the number of factors and the mapping $M_\ell$ for each factor is known a-prior. A total of $L=2N+3$ factors need to be computed. Among these, $N+2$ quadratic factors admit closed-form solutions by Lemma \ref{lem:closed_form_prior}, while the remaining $N+1$ nonlinear factors are evaluated via quadratures. The update rules \eqref{eq:GVI_derivatives} stay unchanged on the marginal levels with a much-reduced dimension.

\subsection{Algorithm and Complexity Analysis}
\label{sec:GVIMP_algorithm_complexity_analysis}
{\em (a) \textbf{GVI-MP Algorithm.}}
We are now ready to present the GVI-MP algorithm. It is formalized in the Algorithm \ref{alg:gvimp}. The algorithm entails the following features to maximize efficiency.
\begin{itemize}
    \item Sparse factor graph factorization \eqref{eq:factor_map}
    \item Closed-form estimation for prior factors in Lemma \ref{lem:closed_form_prior}
    \item Sparse Gauss-Hermite quadrature estimation for nonlinear factors in Lemma \ref{lem:sparseGH_complexity}
\end{itemize}

{
\setlength{\algomargin}{1.55em}
\begin{algorithm}[h]
    \caption{Gaussian Variational Inference Motion Planner (GVI-MP)}
    \label{alg:gvimp}
    \DontPrintSemicolon

    \SetKwInOut{Input}{input}\SetKwInOut{Output}{output}
    \Input{Initial proposal $(\mu_\theta, \Sigma^{-1}_\theta)$, Stepsize $\eta$, SDF.
    }
    \Output{Optimized trajectory distribution $\cN(\mu_\theta^*, \Sigma_\theta^*)$.}
        \For{$i = 1,2,\ldots$}{
            \tcc{Compute marginals using \eqref{eq:factor_map}.} 
            $\{q_l\sim \mathcal{N}(\mu_{\theta}^{\ell}, \Sigma_{\theta}^{\ell})\}_{\ell=1}^L \leftarrow (\mu_\theta, \Sigma^{-1}_\theta, \{M_l\}_{l=1}^{L}).$

            \For{$l=1,2,\dots, L$}{
            
            Compute \eqref{eq:GVI_derivatives} for \eqref{eq:defn_prior_factor} using closed-form \eqref{eq:GVI_derivatives_prior} and marginal Gaussian $q_l$.
            
            Compute \eqref{eq:GVI_derivatives} for \eqref{eq:defn_nonlinear_factor} using sparse GH quadratures in Appendix \ref{sec:GH_quadratures} and marginal Gaussian $q_l$.
        }
            \tcc{Compute joint using \eqref{eq:update_factor_to_joint}.}

            $(\frac{\partial \mathcal{J}(q)}{\partial \mu_\theta}, \frac{\partial^2 \mathcal{J}(q)}{\partial \mu_\theta \partial \mu_\theta^T}) \leftarrow \{(\frac{\partial \mathcal{J}_\ell(q_\ell)}{\partial \mu_{\theta}^\ell}, \frac{\partial^2 \mathcal{J}_\ell(q_\ell)}{\partial \mu_\theta^\ell (\partial \mu_\theta^\ell)^T}, M_{\ell})\}_\ell$ 
            
            Compute $\delta \mu_{\theta}, \delta \Sigma^{-1}_\theta$ using \eqref{eq:GVI_delta_mu} and \eqref{eq:GVI_delta_Sigma}.

            $\mu_{\theta} \leftarrow \mu_{\theta} + \eta \, \delta \mu_{\theta}, \;\;\;  \Sigma^{-1}_{\theta} \leftarrow \Sigma^{-1}_{\theta} + \eta \, \delta \Sigma^{-1}_{\theta},$ 
            
        }
\end{algorithm}
}

At each iteration of Algorithm \ref{alg:gvimp}, line $2$ computes the collection of factorized marginal Gaussian distritbuions $\{q_l\}$. Then, factorized increments are computed on factor levels in line $4$ and line $5$. We collected all the gradients on the factor levels and used them to calculate the joint level gradients in line $6$. Finally, we updated our proposal for Gaussian distribution in line $7$ and line $8$. A backtracking search \eqref{eq:GVI_backtracking} can be deployed in Algorithm \ref{alg:gvimp} to determine step size.

{\em (b) Complexity Analysis.}
Following our definitions of the dimensions, the state dimension is $n$, and time discretization is with length $N+1$. The factor graph has $L=2N+3$ factors; the maximum dimension is $d=2n$. Despite the sparse GP representation, the trajectory length is often larger than the state dimension. In our algorithm, at each iteration, the mapping from the joint to the marginal distributions involves marginal covariance computation for the tree-structured factor graph, which has the complexity of $O(Ld^3)$ using efficient algorithms such as Gaussian belief propagation \cite{wainwright2000tree, chou1994multiscale, su2015convergence}.

At marginal levels, for the prior factors \eqref{eq:prior_factor}, the matrix multiplication closed-form in Lemma \ref{lem:closed_form_prior} has time complexity $O(d^3)$. For the nonlinear collision factor \eqref{eq:collision_factor}, GH quadrature's complexity depends upon the \textit{polynomial exactness} $k_q$ that we desire \cite{heiss2008likelihood}. Lemma \ref{lem:sparseGH_complexity} shows that the time complexity is $O(d^{k_q})$. The polynomial precision is assumed to satisfy $k_q \geq 3$; thus, the nonlinear factor update is the computation bottleneck of the GVI-MP algorithm.

Combining the above, the total time complexity of one iteration in the GVI-MP algorithm is $O(Nn^{k_q})$. Thanks to factorization and sparse-grid quadratures, we remark that our method depends \textit{linearly} on the number of discretizations, $N$, and \textit{polynomially} on the state dimension $n$.

\section{Robust Variational Motion Planning}
\label{sec:robust_MP}
This section shows that the variational inference motion planning brings robustness to the solution compared with the deterministic counterpart.

\subsection{Entropy Regularized Formulation}
\label{sec:robust_MP_entropy_maximization_formulation}
Gaussian Variational Inference Motion Planning is equivalent to an entropy-regularized motion planning. This equivalence can be shown by rewriting the equation \eqref{eq:GVI_formulation} as follows
\begin{equation}
\begin{split}
    q^{\star} &= \underset{q\in \mathcal{Q}}{\arg\min}\; {\rm KL} \left( q(\bX) \parallel p(\bX|Z) \right)\\
        &= \underset{q \in \mathcal{Q}}{\arg\max}\; \mE_q\left[\log p(\bX|Z)\right] + \mathcal{H}(q(\bX)).
\end{split}
   \label{eq:GVI_entropy_formulation}
\end{equation} 

\begin{figure}[t]
    \centering
    \includegraphics[width=0.9\linewidth]{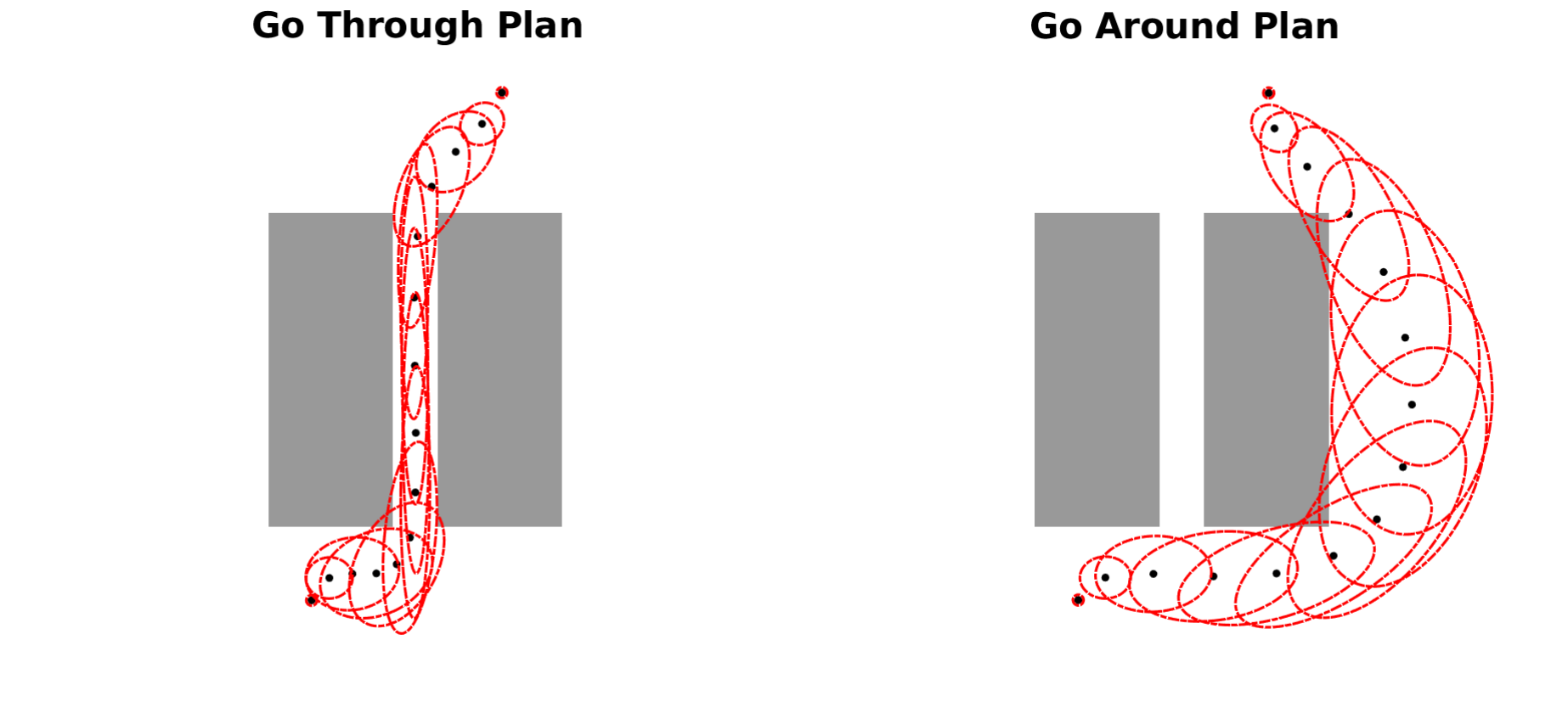}
    \caption{Entropy regularized robust motion planning for a narrow gap environment \cite{yu2023}.}
    \label{fig:entropy_MP_comparison_narrow}
\end{figure}
\begin{table}[t]
\centering
\begin{tabular}{|c|c|c|c|c|c|}
\hline
 & \textbf{Prior} & \textbf{Collision} & \textbf{MP} & \textbf{Entropy} & \textbf{Total} \\ \hline
Left & \textbf{34.4583} & 9.1584 & \textbf{43.6168}  & 44.1752 & 87.7920 \\ \hline
Right & 42.9730 & \textbf{2.0464} & 45.0193 & \textbf{39.9193} & \textbf{84.9387} \\ \hline
\end{tabular}
\caption{Comparing costs for plans in Fig. \ref{fig:entropy_MP_comparison_narrow}. The entropy cost $\log(\lvert\Sigma_{\theta}^{-1}\rvert)$ prioritized a less risky solution. \textit{`MP'} represents the sum of prior and collision costs.}
\label{tab:narrow_comparison}
\end{table}
\begin{figure*}[ht]
\centering
    \begin{subfigure}[ht]{0.9\textwidth}
    \centering
    \includegraphics[width=\textwidth]{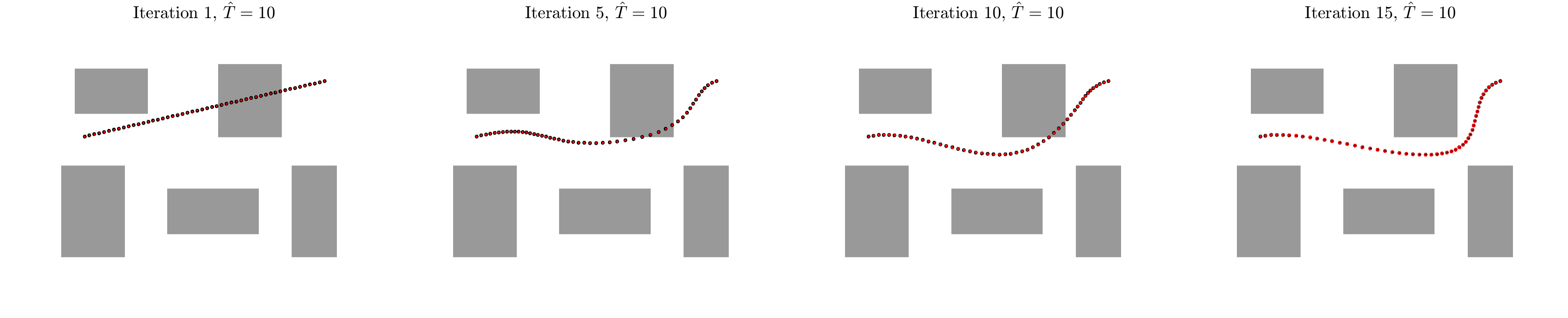}
    \end{subfigure}
    \hfill
    \begin{subfigure}[ht]{0.9\textwidth}
    \centering
    \includegraphics[width=\textwidth]{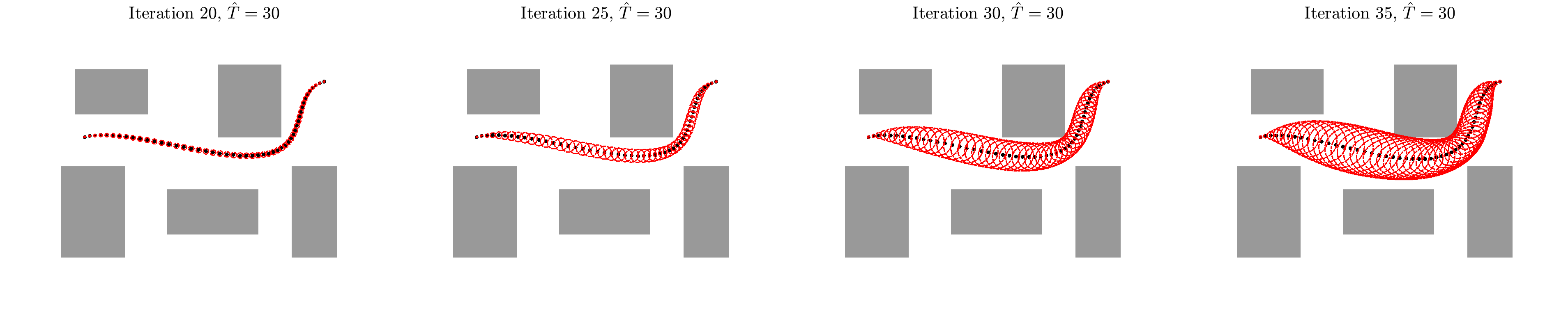}
    \end{subfigure}
  \caption{Switching temperature for balancing optimality and maximum-entropy robustness. We show intermediate iterations with low and high temperatures. Black dots represent the mean trajectory, and ellipsoids are the $3\sigma$ covariance contours.
  }
  \label{fig:low_high_temperature_planning}
\end{figure*}
\begin{figure}[ht]
    \centering
    \includegraphics[width=\linewidth]{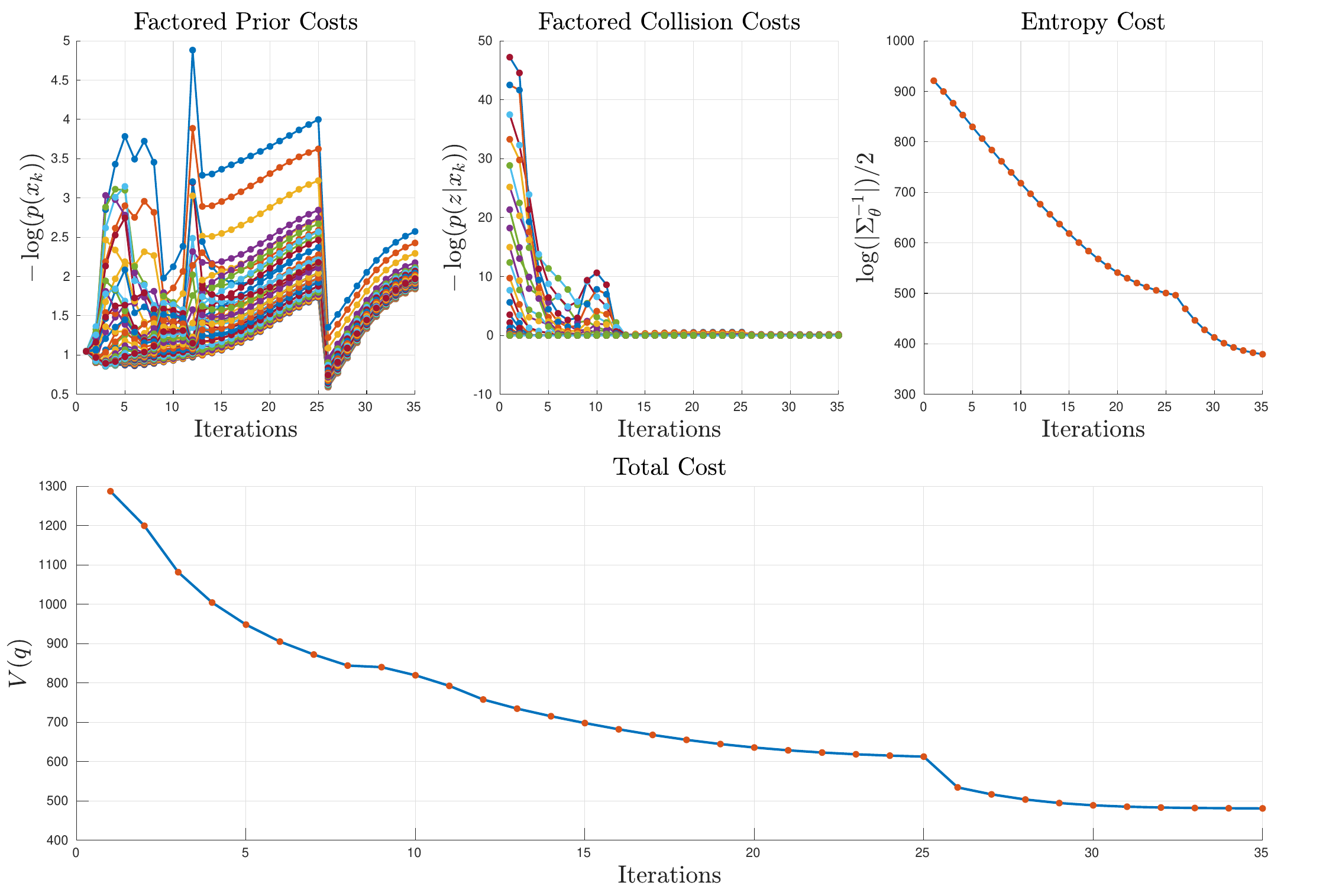}
    \caption{Factorized and total costs in Fig. \ref{fig:low_high_temperature_planning}. At iteration $25$, we switched the temperature from $\hat{T}=10$ to $\hat{T}=30$. Entropy costs decrease faster from the transition. }
    \label{fig:low_high_temperature_cost}
\end{figure}
In \eqref{eq:GVI_entropy_formulation}, our objective function consists of two parts, the \textit{optimality} and the \textit{robustness}. The optimality objectives find expression in maximizing the expected posterior probability $\mE\left[p(\bX|Z)\right]$. The \textit{robustness} is characterized by maximized Entropy during the planning to accommodate tracking errors and environment noise during execution. For Gaussian distributions, we have $\mathcal{H}(q) = -\mE_q \log q(\bX) \propto \log \det \Sigma_\theta$.
The term $\det \Sigma_\theta$ scales with the equi-density contour ellipsoids' volume $\mathcal{E}$ \cite{Stra22}. Maximizing $\mathcal{E}$ entails expanding the feasible and safe trajectory distribution around the mean to encompass larger space for a given confidence level. 

An example is presented in Fig. \ref{fig:entropy_MP_comparison_narrow}. Two solutions are obtained for the same environment and task. On the left is a shortcut solution, which is visibly riskier due to a narrow gap between obstacles. Conversely, the right-side solution entails a longer trajectory, yet its smaller entropy (evident in the extended covariance matrices) encapsulates enhanced robustness. Table \ref{tab:narrow_comparison} compiles the costs for the plans depicted in Figure \ref{fig:entropy_MP_comparison_narrow}. The difference in entropy costs impacts the total cost between the left and right solutions. The comparison showcases that entropy brings robustness to uncertainties in the decision-making for motion planning.

\subsection{High-temperature Optimization for Robustness}
\label{sec:robust_MP_high_temperature}

To trade the weights on motion planning optimality versus robustness, we introduce a hyperparameter termed \textit{temperature} $\hat{T}$ into the objective \eqref{eq:GVI_entropy_formulation}
\begin{subequations}
\begin{eqnarray*}
    q^{\star} \!\!\!&=&\!\!\! \underset{q \in \mathcal{Q}}{\arg\max}\; \mE_q\left[\log p\left(\bX|Z\right)\right] + \hat{T} \, \mathcal{H}(q(\bX))
\end{eqnarray*}
\end{subequations}
The temperature parameter $\hat{T}$ is a weighting factor between the optimality and the robustness. This is similar to the idea behind $\beta$-VAE \cite{higgins2016beta}, which solves an underlying constrained inference problem, and the temperature is the Lagrangian multiplier. Integrating $\hat{T}$ into Algorithm \ref{alg:gvimp}, we write the above as
\begin{subequations}
\label{eq:high_temperature_objective}
    \begin{eqnarray}
    q^{\star} \!\!\!&=&\!\!\! \underset{q \in \mathcal{Q}}{\arg\max}\; \mE_q \left[\frac{1}{\hat{T}} \log p(\bX|Z) \right] + \mathcal{H}(q(\bX))
    \\
    \!\!\!&=&\!\!\! \underset{q \in \mathcal{Q}}{\arg\min}\; \mE_q \left[\frac{1}{\hat{T}} \psi(\bX) \right] - \mathcal{H}(q(\bX)).
    \end{eqnarray}
\end{subequations}

Comparing \eqref{eq:high_temperature_objective} with the formulation \eqref{eq:GVI_entropy_formulation}, it is clear that Algorithm \ref{alg:gvimp} can be directly applied in the high-temperature optimization \eqref{eq:high_temperature_objective} with a direct re-scaling on the cost function values. We modify Algorithm \ref{alg:gvimp} to include a low-temperature and a high-temperature phase. In Algorithm \ref{alg:gvimp}, we first set a lower temperature and optimize until convergence or a pre-defined maximum iteration. We then switch to a higher temperature and continue the algorithm with the re-scaled objective function \eqref{eq:high_temperature_objective}. The optimization in the second phase puts more weight on maximizing the entropy.

Fig. \ref{fig:low_high_temperature_planning} showcases the switching-temperature optimization process and Fig. \ref{fig:low_high_temperature_cost} shows the corresponding costs. In Fig. \ref{fig:low_high_temperature_planning}, the first row presents selected low-temperature iterations where the optimizer promotes a collision-free trajectory. The second row represents the high-temperature phase, where the maximum-entropy robustness objective dominates the optimization. We observe a steeper decrease in entropy loss while an increase in the prior loss in the high-temperature phase in Fig. \ref{fig:low_high_temperature_cost}.

\begin{figure}[ht]
    \centering
    \includegraphics[width=0.8\linewidth]{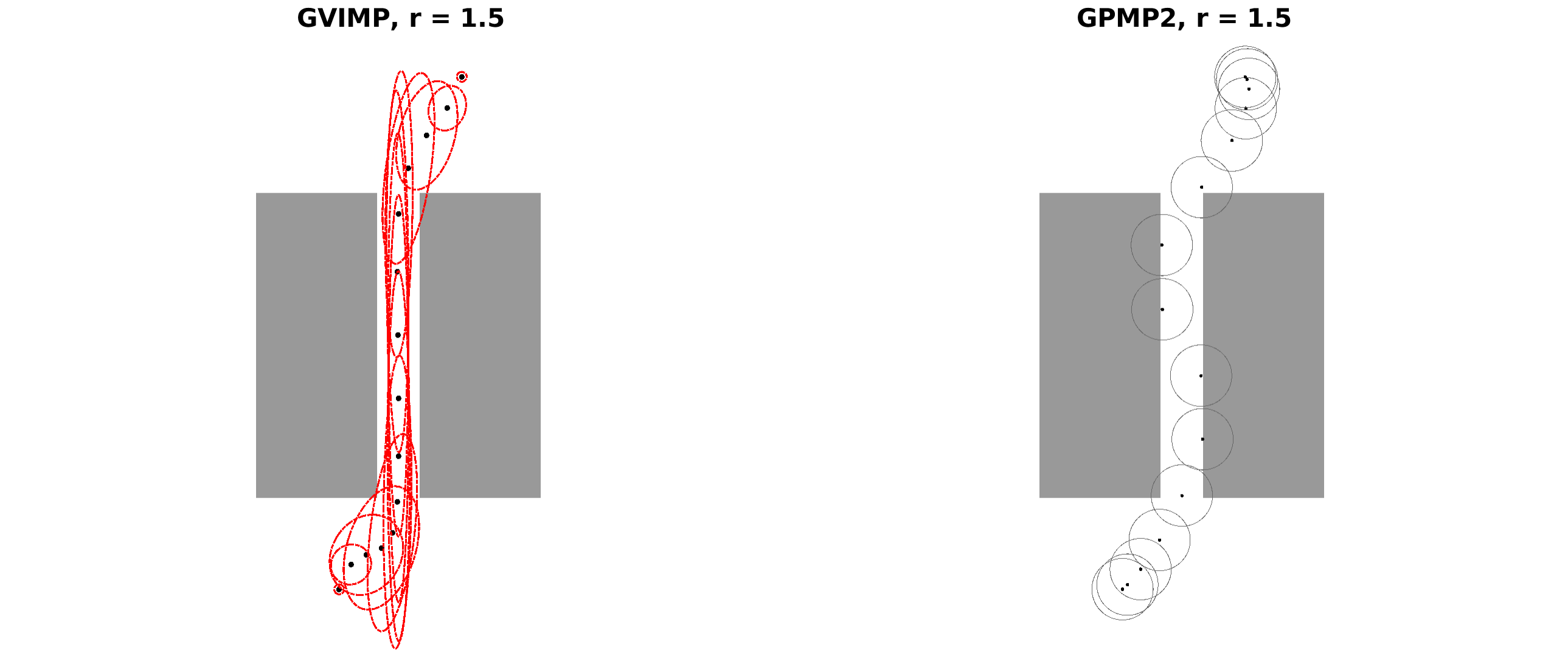}
    \caption{Optimizable covariances of GVI-MP compared with fixed collision-checking radius in GPMP.}
    \label{fig:comparison_gothrough_radius}
\end{figure}
\subsection{Intrinsic Robustness of Variational Representation}
\label{sec:robust_MP_intrinsic_robustness}
The distributional modeling in GVI-MP brings intrinsic robustness to the motion plans. Firstly, the covariance becomes an optimization variable. Fig. \ref{fig:comparison_gothrough_radius} illustrates an example. In GPMP2, the collision-checking radius must be designed to find a feasible solution. In GVI-MP, the optimizable covariance enables the algorithm to find a feasible trajectory distribution for the non-optimal radius.

Furthermore, the Gaussian assumption in GVI-MP allows efficient sampling of the trajectories from the optimized distribution. Combined with the entropy-maximized motion planning paradigm and high-temperature tuning, the output trajectory distribution is tolerant of uncertainties in dynamics modeling, such as errors in inertia matrix or approximations in the linearization of dynamics, as well as sensor measurement noises \cite{nguyen2012modeling, chen2022should} in real-world applications.

\section{Proximal Covariance Steering Motion Planner (PCS-MP) Algorithm}
\label{sec:PCSMP}
The GVI-MP explores the sparse factor graph structure to increase the efficiency. However, as we saw in Section \ref{sec:GVIMP_algorithm_complexity_analysis}, the quadrature expectation estimation represents a computation bottleneck. The equivalent stochastic control formulation to GVI-MP \eqref{eq:SCP_formulation_GVI} represents a terminal-cost-based regulator problem. We develop in this section a more efficient algorithm to solve a more intricate constrained variant of \eqref{eq:SCP_formulation_GVI}, which aligns with the covariance steering paradigm \cite{CheGeoPav15a, CheGeoPav15b, CheGeoPav17a}.

\subsection{Covariance Steering with Nonlinear State Cost}
\label{sec:PCSMP_nonlinear_cs}
We consider the covariance steering problem with nonlinear state costs 
\begin{subequations}\label{eq:cov_steering_lin_dyn}
\begin{eqnarray}\label{eq:cov_steering_lin_dyn1}
    \min_{u} && \mE \left\{\int_0^T [\frac{1}{2}\|u_t\|^2 + V(X_t)]dt\right\}
    \\\label{eq:cov_steering_lin_dyn2}
    &&\hspace{-0.8cm} dX_t = A_t X_t dt + a_t dt + B_t (u_t dt + \sqrt{\epsilon} d W_t)
    \\\label{eq:cov_steering_lin_dyn3}
    && 
    \hspace{-0.8cm} X_0 \sim \mathcal{N}(\mu_0, K_0),\quad X_T \sim \mathcal{N}(\mu_T, K_T).
\end{eqnarray}
\end{subequations}
Let the state cost $V(X_t)$ be 
\begin{equation}
\label{eq:PGCS_state_cost_h}
    V(X_t)=\lVert \bh(X_t) \rVert_{\Sigma_{\rm obs}}^2,
\end{equation}
then formulation \eqref{eq:cov_steering_lin_dyn} is equivalent to the stochastic control problem \eqref{eq:SCP_formulation_GVI} with a constraint. We also inject a noise intensity $\epsilon$ in \eqref{eq:cov_steering_lin_dyn} compared with \eqref{eq:SCP_formulation_GVI}. By Girsanov theorem, \eqref{eq:cov_steering_lin_dyn} is equivalent to a distributional control 
\begin{subequations}\label{eq:cov_steering_KLformulation}
\begin{eqnarray}
    \!\!\!\!\!\!\!\!\! && \!\!\! \min_{\cP^u \in \Pi(\rho_0, \rho_T)}  \int \left\{\log\frac{d\cP^u}{d\cP^0} +\frac{1}{\epsilon}V\right\} d\cP^u \label{eq:cov_steering_KLformulation1}
    \\
    \!\!\!\!\!\!\!\!\! && \!\!\! (X_0)_\sharp \cP^u = \rho_0, \;\; (X_T)_\sharp\cP^u = \rho_T,\label{eq:cov_steering_KLformulation2}
\end{eqnarray}
\end{subequations}
where $(\cdot)_\sharp \cP^u$ is the push-forward operator induced by $\cP^u$, and $\rho_0 = \mathcal{N}(\mu_0, K_0)$, $\rho_T = \mathcal{N}(\mu_T, K_T).$ Problem \eqref{eq:cov_steering_KLformulation} is an optimization over the space $\Pi$ which is induced by the parameterized process \eqref{eq:cov_steering_lin_dyn2} with marginal distributions \eqref{eq:cov_steering_lin_dyn3}.

\subsection{Proximal Gradient in the Distribution Space}
\label{sec:pcsmp_proximal_gradient}
{\em a) Composite optimization scheme.}
Proximal gradient method \cite{Bec17} solves a composite optimization by splitting the objective \eqref{eq:cov_steering_KLformulation1} into \begin{equation}\label{eq:FG}
    \min_{\cP^u\in\Pi(\rho_0, \rho_T)} F(\cP^u) + G(\cP^u),
\end{equation}
where
	\begin{align*}\label{eq:FG_define}
		F(\cP^u) = \int [\frac{1}{\epsilon}V-\log d\cP^0] d\cP^u, \;\; G(\cP^u) = \int d\cP^u \log d\cP^u.
	\end{align*}
In the proximal gradient paradigm, only $F$ is required to be smooth. In the Euclidean spaces, for variable $y$, the algorithm follows the update
    \begin{equation}\label{eq:pgupdate}
    y^{k+1} \hongzhe{=} \argmin_{y\in \cY} G(y) \!+\! \frac{1}{2 \eta} \|y\!-\!y^k\|^2\!\! +\!\! \langle \nabla F(y^k), y\!-\!y^k\rangle.
    \end{equation}
The proximal gradient for non-Euclidean distances is built upon the mirror descent method \cite{Bec17}. In \eqref{eq:pgupdate}, the vector norm can be replaced by a Bregman divergence $D(\cdot, \cdot)$, and the generalized non-Euclidean proximal gradient update is
	\begin{equation*}\label{eq:proximal_gradient_update}
		y^{k+1} = \argmin_{y\in \cY} G(y) + \frac{1}{\eta} D(y, y^k) + \langle \nabla F(y^k), y-y^k\rangle.
	\end{equation*} 
The KL divergence is often a choice for $D(\cdot, \cdot)$ for distributions, and the proximal gradient step is then obtained from 
\begin{equation}\label{eq:PG}
    \cP_{k+1} = \argmin_{\cP^u\in\Pi(\rho_0, \rho_T)} G(\cP^u) + \frac{1}{\eta} {\rm KL}( \cP^u \parallel \cP_k)+ \langle \frac{\delta F}{\delta \cP}(\cP_k), \cP^u\rangle.
\end{equation}
At iteration $k$, $\mP_k$ is induced by an updated process
\begin{equation}
\label{eq:dynamics_controlled_k}
    d X_t = A_t^kX_t dt + a_t^k dt + \sqrt{\epsilon} B_t (u_t d t + dW_t). 
\end{equation}

{\em b) Proximal update for quadratic state cost.}
Denote $z_t$ as the mean trajectory of $\cP^k$ induced by \eqref{eq:dynamics_controlled_k} at iteration $k$. We approximate the state cost by a quadratic function 
\begin{equation}\label{eq:V_quadratic_approximation}
    \hat V(t,x) \approx V(z_t) + (x-z_t)^T \nabla V + \frac{1}{2}\lVert x-z_t \rVert_{\nabla^2 V}^2.
\end{equation}
With this approximation, each iteration of proximal gradient \eqref{eq:PG} amounts to solving the distributional control
\begin{equation}\label{eq:noncovcontrol}
    \min_{\cP^u\in\Pi(\rho_0, \rho_T)} \int [\frac{1}{\epsilon}\hat V-\log d \cP^0] d\cP^u + \int d\cP^u \log d\cP^u,
\end{equation}
and the component $F(\cP^u)$ in \eqref{eq:FG} becomes 
\begin{equation}\label{eq:approx_FP}
    F(\cP^u)=\int [\frac{1}{\epsilon}\hat V-\log d \cP^0] d\cP^u = \langle \frac{1}{\epsilon}\hat V-\log d\cP^0, \cP^u \rangle.
\end{equation}
We denote $\frac{\delta F}{\delta \cP}$ as the variation of $F(\cP)$ for a small variation $\delta \cP$. Before deriving the expressions for $\frac{\delta F}{\delta \cP}$, we distinguish 3 different trajectory distributions:
\begin{enumerate}
    \item Distribution $\cP^0$ induced by the passive dynamics \eqref{eq:dynamics_uncontrolled}.
    \item Distribution $\cP_k$ induced by the controlled dynamics \eqref{eq:dynamics_controlled_k} at the current iteration $k$.
    \item Candidate distribution $\cP^u$ in solution space $\Pi(\rho_0, \rho_T)$.
\end{enumerate}

\subsection{PCS-MP Algorithm}
\label{sec:PCSMP_pcs_algorithm}
{\em Notations.} In the following derivations, the gradients of $V$ are evaluated on the nominal mean trajectory with respect to states, i.e., $\nabla V \triangleq \nabla_{X} V(z_t), \; \nabla^2 V \triangleq \nabla_{X} ^2 V(z_t).$

\begin{lemma}\label{thm:variationP}
Let $\cP$ be the path distribution induced by a Gauss Markov process \eqref{eq:dynamics_uncontrolled}, then the variation of $F(\cP)$ w.r.t. $\cP$ for linear dynamics \eqref{eq:cov_steering_lin_dyn} is
\begin{eqnarray}\label{eq:gradietnF_linear}
    \frac{\delta F}{\delta \cP}(\cP)= \frac{1}{\epsilon}\hat V -\log d \cP^0 + \frac{1}{2\epsilon} x^T \nabla\tr(\nabla^2 V \Sigma_t).
\end{eqnarray}
\end{lemma}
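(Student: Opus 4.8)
The plan is to compute the first-order variation $\delta F=\langle \tfrac{\delta F}{\delta\cP},\delta\cP\rangle$ directly from the definition $F(\cP)=\langle \tfrac1\epsilon\hat V-\log d\cP^0,\cP\rangle$ in \eqref{eq:approx_FP}, being careful to track the \emph{two} distinct ways in which $\cP$ enters. First, $\cP$ is the measure against which we integrate (here the time integral $\int_0^T\hat V(t,X_t)\,dt$ is kept implicit). Second, and more subtly, the quadratic surrogate $\hat V$ in \eqref{eq:linearizeV} is expanded about the nominal mean $z_t$, and since $z_t=\mE_\cP[X_t]$ is itself a functional of $\cP$, perturbing $\cP$ moves the expansion point. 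The reference log-density $\log d\cP^0$ is fixed and contributes only through the measure. Freezing $\hat V$ and varying only the integrating measure gives the explicit linear part $\tfrac1\epsilon\hat V-\log d\cP^0$, which already matches the first two terms of the claimed expression.

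Next I would isolate the contribution from the motion of the expansion point. Fixing $(t,x)$ and writing $g(z)\triangleq V(z)+(x-z)^T\nabla V(z)+\tfrac12(x-z)^T\nabla^2 V(z)(x-z)$, I differentiate with respect to $z$. The key observation, which is essentially the telescoping property of Taylor polynomials, is that the pieces coming from $V(z)$, from the linear term, and from the lower-order part of the quadratic term cancel pairwise, leaving only the piece in which the derivative lands on the Hessian, namely $\partial_z g=\tfrac12\,\nabla^3 V(z)[(x-z),(x-z),\cdot]$. Establishing this cancellation cleanly is the main technical obstacle, and the safest route is to verify it component-wise, tracking the $-\nabla V$ and $-\nabla^2 V(x-z)$ terms against their positive counterparts so that only the third-derivative contraction survives.

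Finally I would assemble the two pieces. The perturbation of the expansion point is $\delta z_t=\int x_t\,\delta\cP$, and pairing $\partial_z g$ against $\delta z_t$ requires the expectation $\mE_\cP[\partial_z g(t,X_t)]$. Because the dynamics are linear, the marginal of $\cP$ at time $t$ is Gaussian with mean $z_t$ and covariance $\Sigma_t$, so the symmetric second moment $\mE_\cP[(X_t-z_t)(X_t-z_t)^T]=\Sigma_t$ collapses the third-derivative contraction into $\tfrac12\nabla\tr(\nabla^2 V\,\Sigma_t)$. Writing $\delta z_t$ back as the linear functional $\int x_t\,\delta\cP$ of $\delta\cP$ then exhibits this contribution as the density $\tfrac1{2\epsilon}\,x^T\nabla\tr(\nabla^2 V\,\Sigma_t)$ tested against $\delta\cP$, which supplies precisely the third term and completes the identity \eqref{eq:gradietnF_linear}.
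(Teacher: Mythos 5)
Your proof is correct and takes essentially the same route as the paper's: the identical split of the variation into a frozen-integrand measure term (yielding $\frac{1}{\epsilon}\hat V-\log d\cP^0$), plus an expansion-point term, the same observation that $\cP^0$ does not depend on $\cP$ under linear dynamics, and the same final step of rewriting $\delta z_t=\int x\,\delta\cP$ to convert $\frac{1}{2}\int\delta z_t^T\nabla\tr(\nabla^2 V\,\Sigma_t)\,dt$ into the density $\frac{1}{2\epsilon}x^T\nabla\tr(\nabla^2 V\,\Sigma_t)$. Your only deviation is in organizing the middle cancellation — you compute the exact $z$-gradient of the quadratic surrogate, telescoping to $\frac{1}{2}\nabla^3V[(x-z_t),(x-z_t),\cdot]$, and take a single expectation at the end, whereas the paper expands to first order in $\delta z_t$ and discards cross terms via $\mE_\cP[x-z_t]=0$ under the pairing; the two computations are equivalent, and yours has the minor virtue of making explicit that only the second central moment (not full Gaussianity) is needed.
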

The proof can be found in Appendix \ref{sec:proof_lemma_variationP}. Next, write the objective \eqref{eq:PG} using \eqref{eq:gradietnF_linear}. For simplicity, we drop the dependences on time $t$ and iteration $k$. The objective in \eqref{eq:PG} is
\begin{equation}
\label{eq:objKL_lin_dyn}
    \int [\frac{1}{\epsilon}\hat V+ \frac{1}{2\epsilon} x^T \nabla\tr((\nabla^2 V) \Sigma)] d\cP +{\rm KL} (\cP \parallel \cP^0) + \frac{1}{\eta} {\rm KL}(\cP \parallel \cP^u),
\end{equation}
where we just combined the term $ -\int \log d \cP^0 d\cP$ in \eqref{eq:noncovcontrol} with $G(\cP^u) = \int d\cP^u \log d\cP^u$ to form $G(\cP^u) - \int \log d \cP^0 d\cP^u = {\rm KL}\left(\cP^u \parallel \cP^0\right).$

Next, we show that each step in the PCS-MP is a linear covariance steering problem.
\begin{lemma}\label{thm:Qrk}
Each step of the proximal gradient covariance steering \eqref{eq:PG} is obtained by solving the following problem
	\begin{subequations}\label{eq:iterlinear}
	\begin{eqnarray}\label{eq:iterlinear1}
		\!\!\!\!\!\!\!\!\min_{u}\!\!\!\! &&\!\!\!\! \mE \left\{\int_0^T [\frac{1}{2}\|u_t\|^2 + \frac{1}{2} X_t^T Q_t^k X_t + X_t^T r_t^k]dt\right\}
		\\ 
        \!\!\!\!\!\!\!\!\!\!\!\! &&\!\!\!\!  \hspace{-0.1cm}dX_t = \bar A_t^k X_t dt +  \bar a_t^k dt + B_t(u_tdt+ \sqrt{\epsilon} dW_t) \label{eq:iterlinear2}
		\\ 
        \!\!\!\!\!\!\!\!\!\!\!\! && \!\!\!\! X_0 \sim \mathcal{N}(\mu_0, K_0),\quad X_T \sim \mathcal{N}(\mu_T, K_T),\label{eq:iterlinear3}
	\end{eqnarray}
	\end{subequations}
where 
\begin{equation} 
    \begin{split}
        \bar A_t^k &= \frac{1}{1+\eta} \left(A_t^k+\eta A_t\right) , \; \bar a_t^k = \frac{1}{1+\eta}\left(a_t^k + \eta a_t\right)
        \\	
		Q_t^k &= \frac{\eta}{(1+\eta)^2} (A_t^k-A_t)^T(B_tB_t^T)^{\dagger}\left(A_t^k- A_t\right) + \frac{\eta}{1+\eta}\nabla^2 V         
        \\ 
        r_t^k &= \frac{\eta}{(1+\eta)^2}(A_t^k-\hat A_t^k)^T(B_tB_t^T)^{\dagger}(a_t^k-\hat a_t^k) + 
        \\
        &\hspace{0.5cm} \frac{\eta}{1+\eta} \left( \nabla V  - \left(\nabla^2V\right) z^k_t \right),
    \end{split}       
\label{eq:Qr_lin_dyn}
\end{equation}
where $(\cdot)^\dagger$ represents pseudo-inverse, $z^k_t$ and $\Sigma^k_t$ are the mean and covariance of $\cP_k$, respectively. 
\end{lemma}
The proof can be found in Appendix \ref{sec:proof_lemma_Qrk}. Problem \eqref{eq:iterlinear} is a linear covariance steering problem that admits a closed-form solution (c.f., Section \ref{sec:background_linear_CS}) $u_t^\star = K_t^k X_t + d_t^k$, and the next-iteration Gauss Markov process follows
\begin{equation}\label{eq:updateAa}
    A_t^{k+1} =  \bar A_t^k + B_tK_t^k,\;\;\;\; a_t^{k+1} = \bar a_t^k + B_td_t^k.  
\end{equation}
The nominal trajectory $(z_t^k, \Sigma_t^k)$ thus evolves as
\begin{equation}
\label{eq:propagate_nominal}
    \dot z_t^k = A_t^k z_t^k + a_t^k, \;\; \dot \Sigma_t^k = A_t^k \Sigma_t^k + \Sigma_t^k (A_t^k)^T + \epsilon B_tB_t^T.
\end{equation}
{
\setlength{\algomargin}{1.55em}
\begin{algorithm}[t]    
\DontPrintSemicolon
\SetAlgoLined
    
\caption{Proximal Gradient Covariance Steering Motion Planning (PCS-MP)}\label{alg:pgcsmp}
\SetKwInOut{Input}{Inputs}\SetKwInOut{Output}{Outputs}
\Input{Noise $\epsilon$, Step size $\eta$,\\
Initial values $(A_t^0, a_t^0, z_t^0, \Sigma_t^0, V_0)$.}
\Output{Optimized nominal $(z_t^*, \Sigma_t^*)$, 
\\
feedback gain $(K_t^*, d_t^*)$.}
\For{$k = 1,2,\dots$} {

   \tcc{Propagate dynamics.}
   $(z_t^k, \Sigma_t^k) \leftarrow$ {\rm Propagate}$(z_t^k, \Sigma_t^k)$ \eqref{eq:propagate_nominal}.
   
    \tcc{Compute coefficients.}    
    $(\bar{A}_t^k, \bar{a}_t^k, Q_t^k, r_t^k) \leftarrow$ {\rm ComputeQr \eqref{eq:Qr_lin_dyn}.}
    
    \tcc{Linear covariance control.}   
    $(K_t^k, d_t^k) \leftarrow$ LinearCovarianceSteering \eqref{eq:iterlinear}.

}
$K_t^* \leftarrow K_t^{k}, \; d_t^* \leftarrow d_t^{k}, \; z_t^* \leftarrow z_t^{k}, \Sigma_t^* \leftarrow \Sigma_t^{k}.$ 
\end{algorithm}
}
We introduce the PCS-MP algorithm in Algorithm \ref{alg:pgcsmp}. Line $2-4$ are the main iterations for proximal gradient updates. At each iteration, the nominal trajectory is propagated through \eqref{eq:propagate_nominal} in line $2$, followed by the calculation of $(\bar{A}_t^k, \bar{a}_t^k, Q_t^k, r_t^k)$ for each proximal gradient step using \eqref{eq:Qr_lin_dyn} in line $3$. Subsequently, line $4$ solves the linear covariance steering problem \eqref{eq:iterlinear}. Line $6$ outputs the solved feedback gains and the nominal trajectories. We also add a backtracking procedure to select the step size in each iteration. Each step of PCS-MP involves a linear covariance control problem that admits a closed-form solution. The overall algorithm thus adopts a \textit{sub-linear} rate of convergence of the proximal gradient algorithm.

\section{experiments}
\label{sec:experiments}
We demonstrate simulated experiment results to validate the proposed algorithms. When evaluating the signed distance function, robots are modeled as balls with fixed radius $r$ at designated locations. We first consider the matrices $(A_t,a_t,B_t)$ in the linear stochastic dynamical system \eqref{eq:dynamics_controlled} under a constant velocity assumption
\begin{equation}
\label{eq:constant_vel_dynamics}
    A_t = \begin{bmatrix}
    0 & I \\
    0 & 0
    \end{bmatrix},~ 
    a_t=\begin{bmatrix}
        0\\
        0
    \end{bmatrix},~ 
    B_t=\begin{bmatrix}
    0 \\ I
    \end{bmatrix},
\end{equation}
for which the matrices $\Phi$, $Q_{i,i+1}^{-1}$, $Q^{-1}$, and $G$ in \eqref{eq:sparse_Q} and \eqref{eq:sparse_G} can be calculated explicitly \cite{barfoot2014batch}. The state $X_t$ concatenates the robots' configuration space position and velocity. 

\begin{figure*}[ht]
    \begin{subfigure}[b]{0.43\textwidth}
    \centering
    \includegraphics[width=\textwidth]{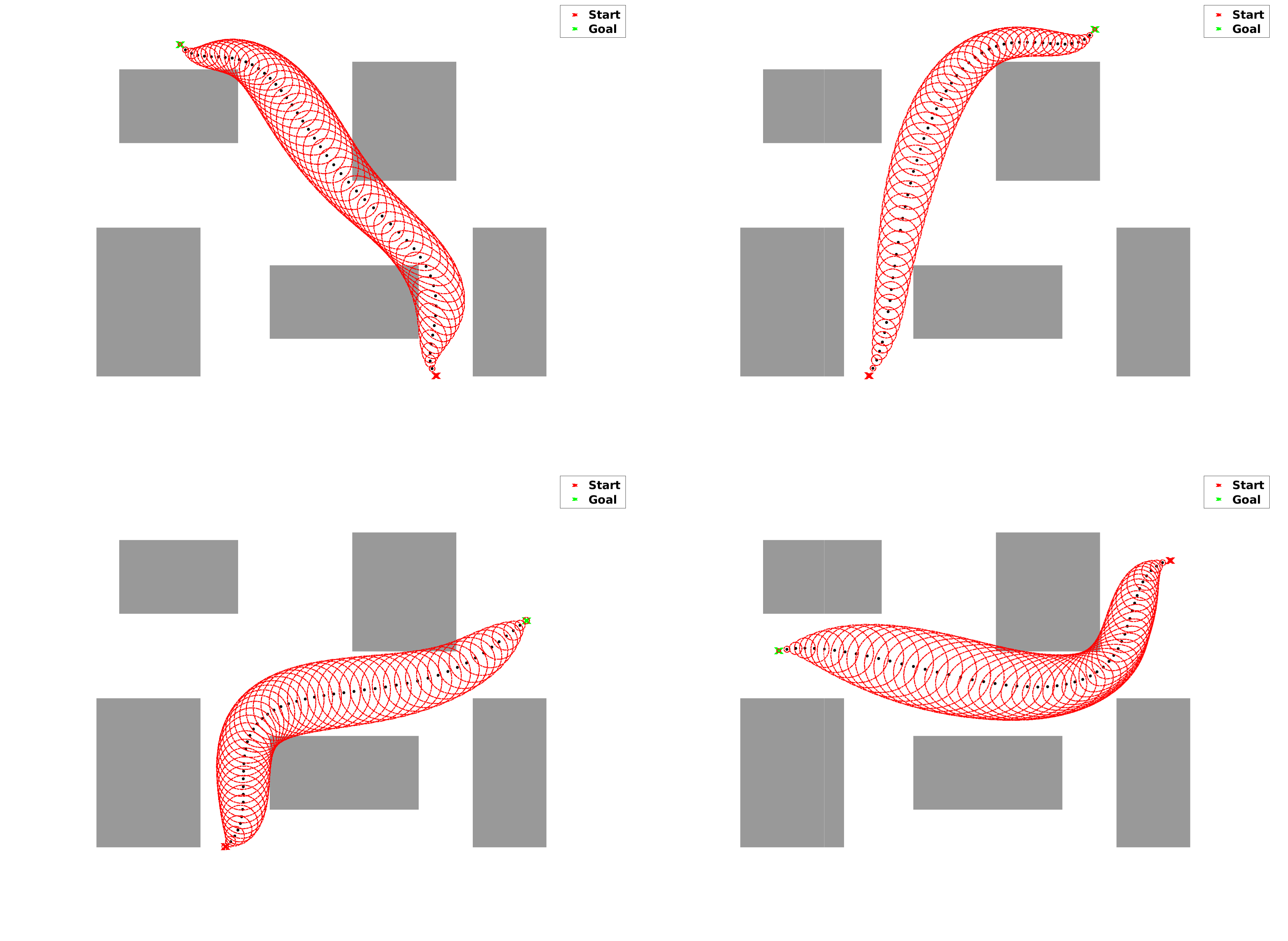}
    \caption{Trajectory distribution for $2$-DOF point robot, GVI-MP.}
    \label{fig:2dPR_GVIMP}
    \end{subfigure}
    \hfill
    \begin{subfigure}[b]{0.43\textwidth}
    \includegraphics[width=\textwidth]{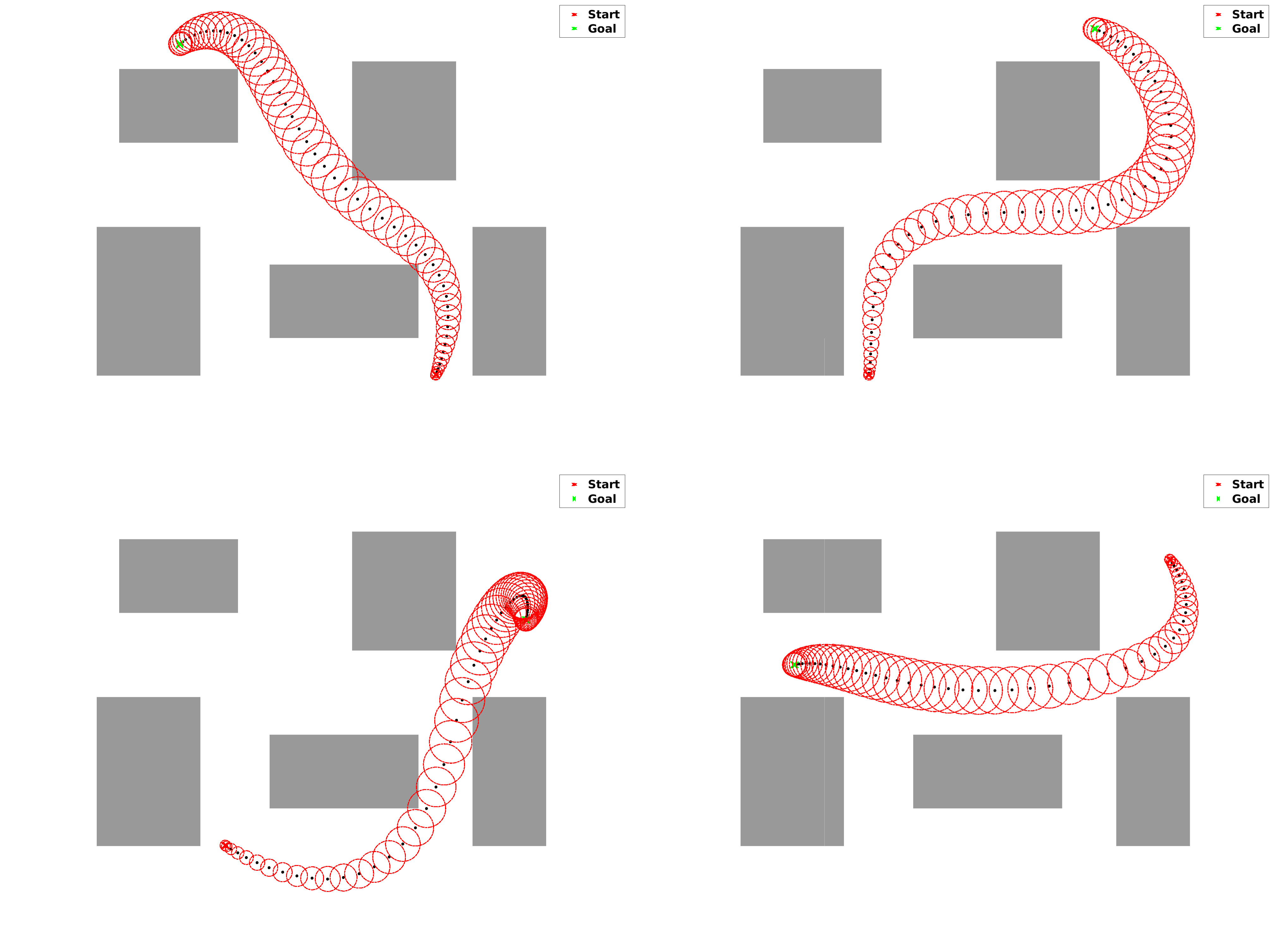}
    \caption{Trajectory distribution for $2$-DOF point robot, PCS-MP.}
    \label{fig:2dPR_PGCSMP}
    \end{subfigure}
    \caption{Motion planning for $2$D Point Robot for GVI-MP and PCS-MP. $N=50$ support states are used. For GVI-MP, time span $T=3.0 \sim 3.5$, radius $r+\epsilon_{\rm sdf} = 1.5$, $\Sigma_{\rm obs} = 15.0\sim20.0 I$, ratio between temperatures $\hat{T}_{\rm high} / \hat{T}_{\rm low} = 1.6\sim 3.0$. For PCS-MP, $T=10.5$, radius $r+\epsilon_{\rm sdf} = 2.1$, $\Sigma_{\rm obs}=10^3 I$. The covariance constraints are $K_0 = 0.01 I$, $K_T = 0.05 I$.}
    \label{fig:2dPR_MP}
\end{figure*}

\subsection{Point Robots}
{\em a) 2D point robot.} \label{exp_pr_1}
We first consider the 2-dimensional point robot motion planning in an obstacle-cluttered environment as shown in Fig. \ref{fig:2dPR_MP}. The state consists of the $2$D position and velocity $X_t = [p_x, p_y, v_x, v_y]^T.$ We conduct $4$ experiments with different start and goal states. The experiment setups, start and goal states, and the planning results of both GVI-MP and PCS-MP are shown in Fig. \ref{fig:2dPR_MP}. Both algorithms find collision-free trajectory distributions in all the $4$ experiment settings. The GVI-MP results are obtained from the optimization with temperatures (c.f., Section \ref{sec:robust_MP_high_temperature}). Under the maximum entropy motion planning formulation \eqref{eq:GVI_entropy_formulation}, the results spread wide in the feasible region in the environments compared to the PCS-MP results. We examine the terminal covariances to validate the PCS-MP algorithm in satisfying the constrained terminal covariance in \eqref{eq:cov_steering_lin_dyn}. In all the experiments in Fig. \ref{fig:2dPR_PGCSMP}, we set the initial covariance to be $K_0 = 0.01I$, and the terminal covariances are constrained to be $K_T = 0.05 I$. We record the solved terminal covariances, $K_T^*$. The norm difference $\lVert K_T^* - K_T \rVert = 6\times 10^{-4}$ in all $4$ experiments.

{\em b) $3$D point robot.}
Using PCS-MP, we conduct experiments for a $3$-DOF point robot in a cluttered environment. The state consists of the $3$D position and velocity of the robot $X_t = [p_x, p_y, p_z, v_x, v_y, v_z]^T.$
We did $4$ experiments of different start and goal states. Fig. \ref{fig:3d_pR} shows the experiment settings and planning results. In all 4 experiment settings, the norms of the terminal covariance error are less than $8\times 10^{-4}$.
\begin{figure*}[ht]
\centering
    \begin{subfigure}[ht]{0.23\textwidth}
    \centering
    \includegraphics[width=0.95\textwidth]{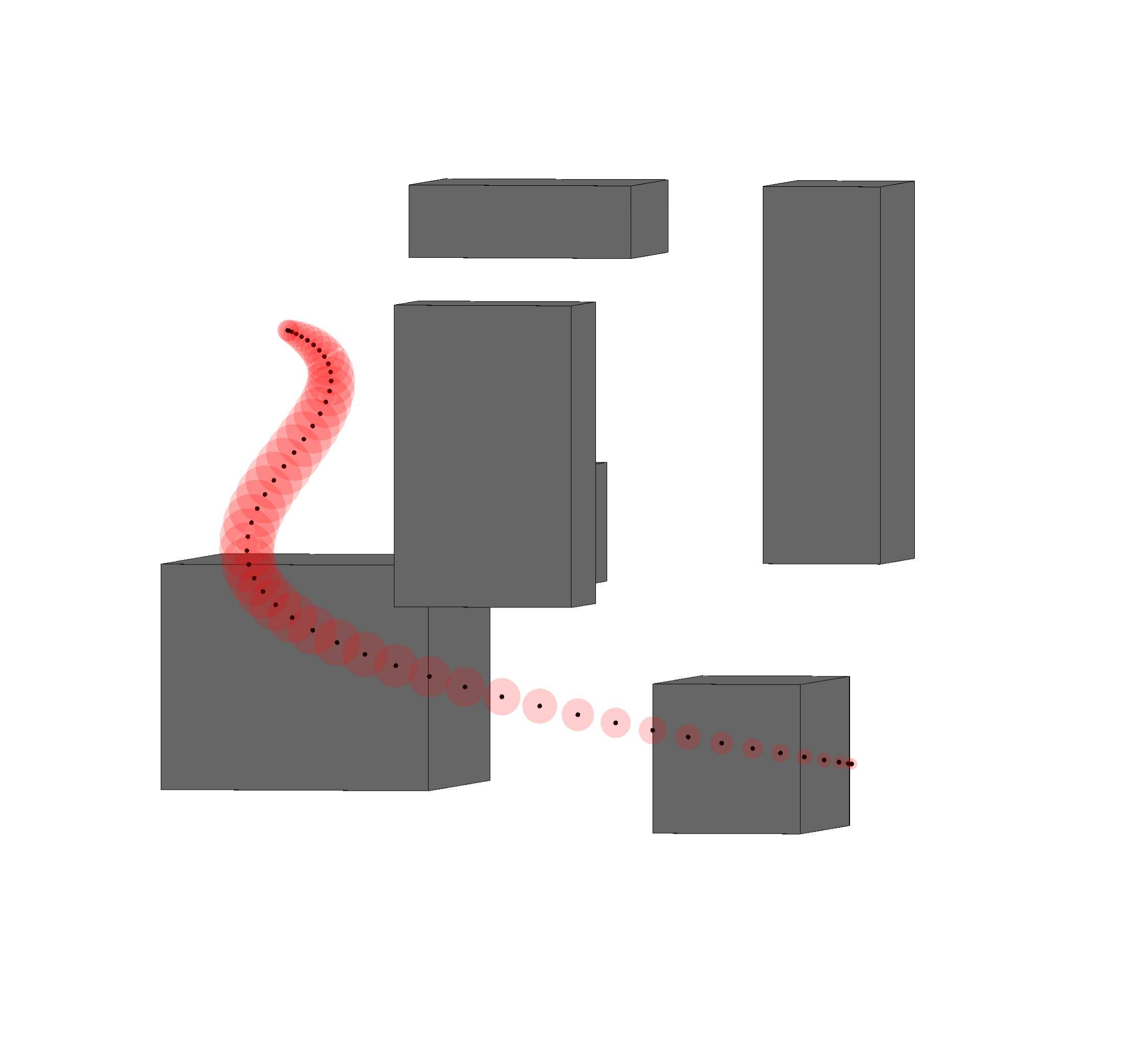}
    \end{subfigure}
    \hfill
    \begin{subfigure}[ht]{0.23\textwidth}
    \centering
    \includegraphics[width=0.95\textwidth]{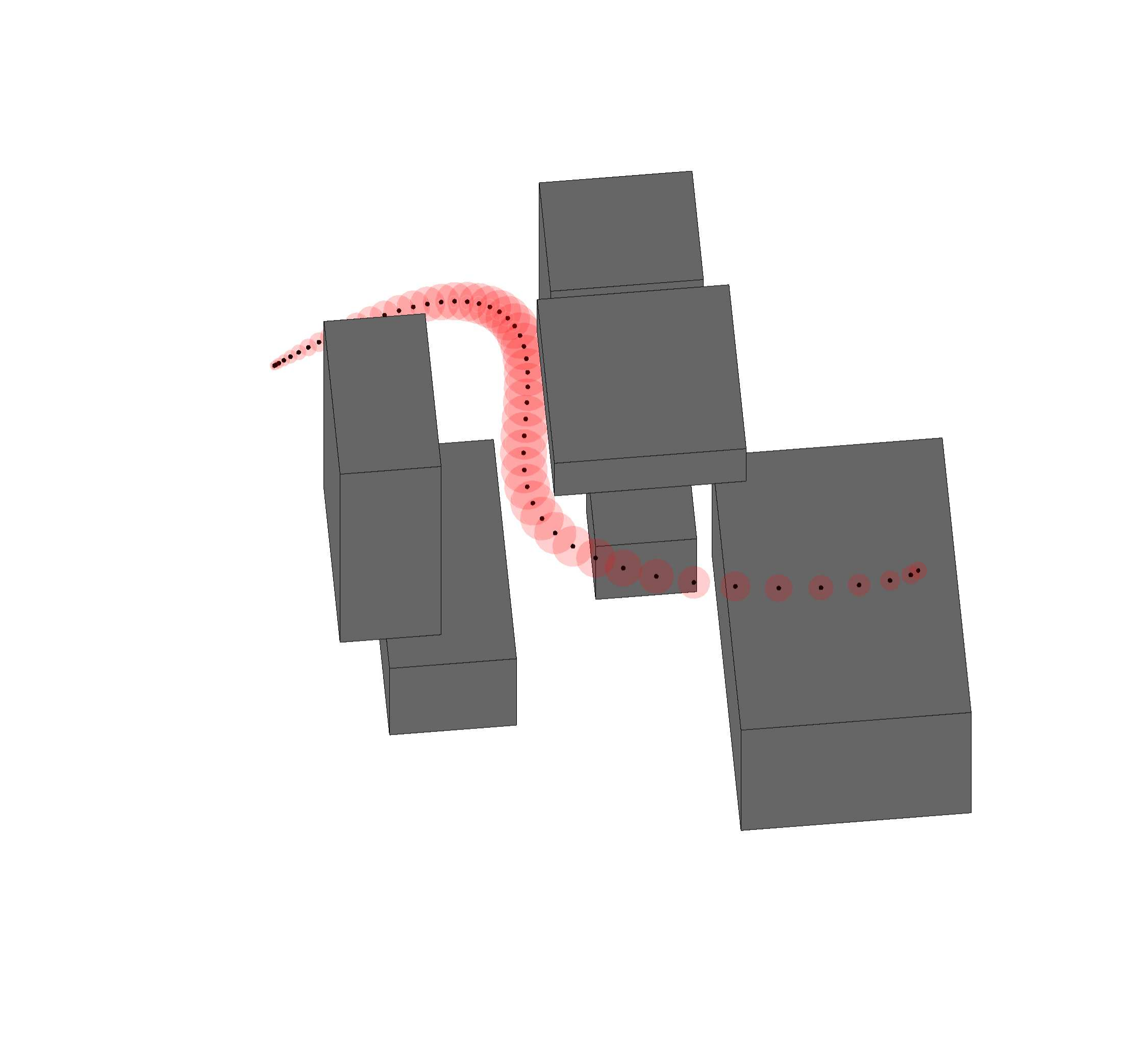}
    \end{subfigure}
    \hfill
    \begin{subfigure}[ht]{0.23\textwidth}
    \centering
    \includegraphics[width=0.95\textwidth]{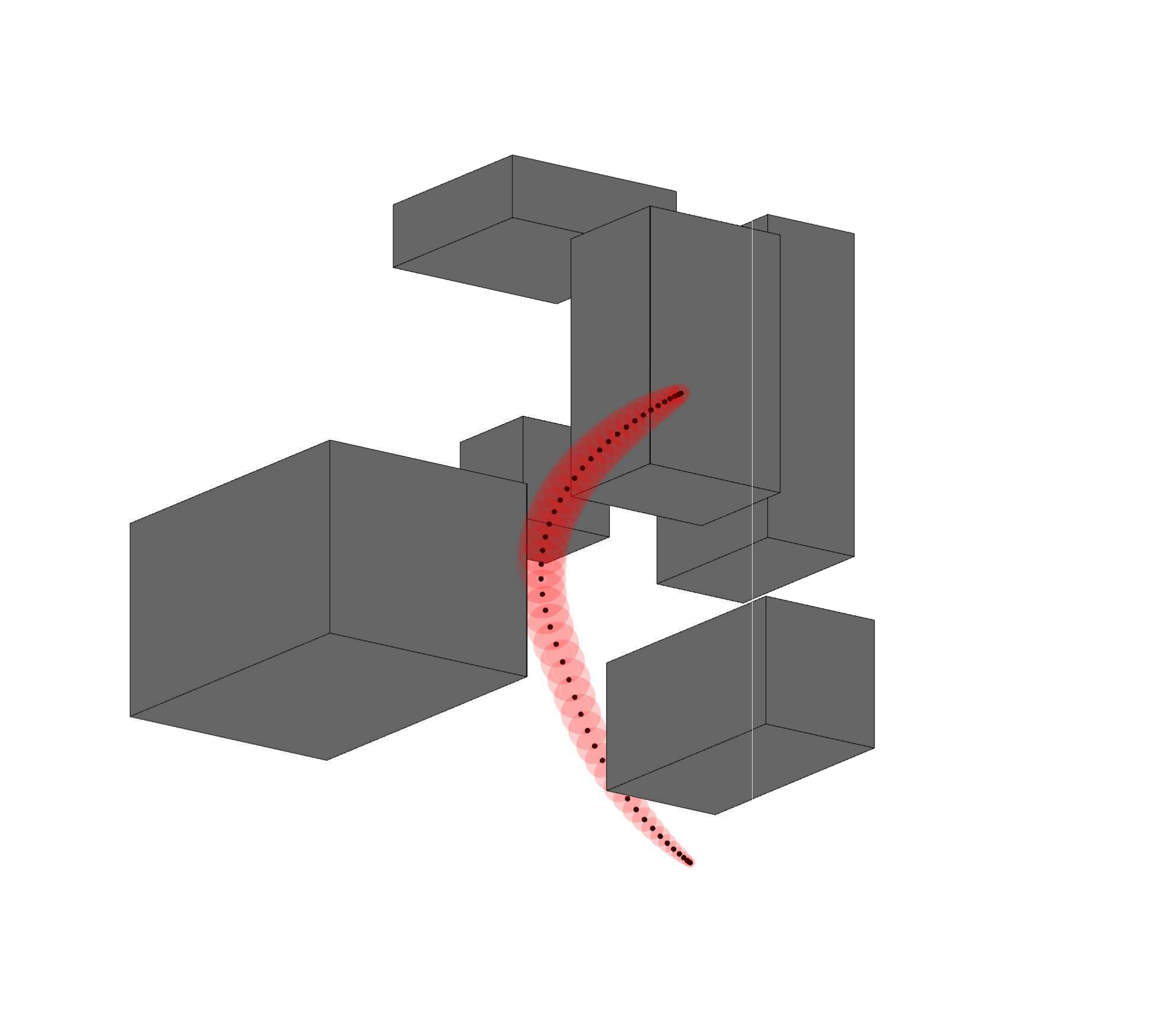}
    \end{subfigure}
    \hfill
    \begin{subfigure}[ht]{0.23\textwidth}
    \centering
    \includegraphics[width=0.95\textwidth]{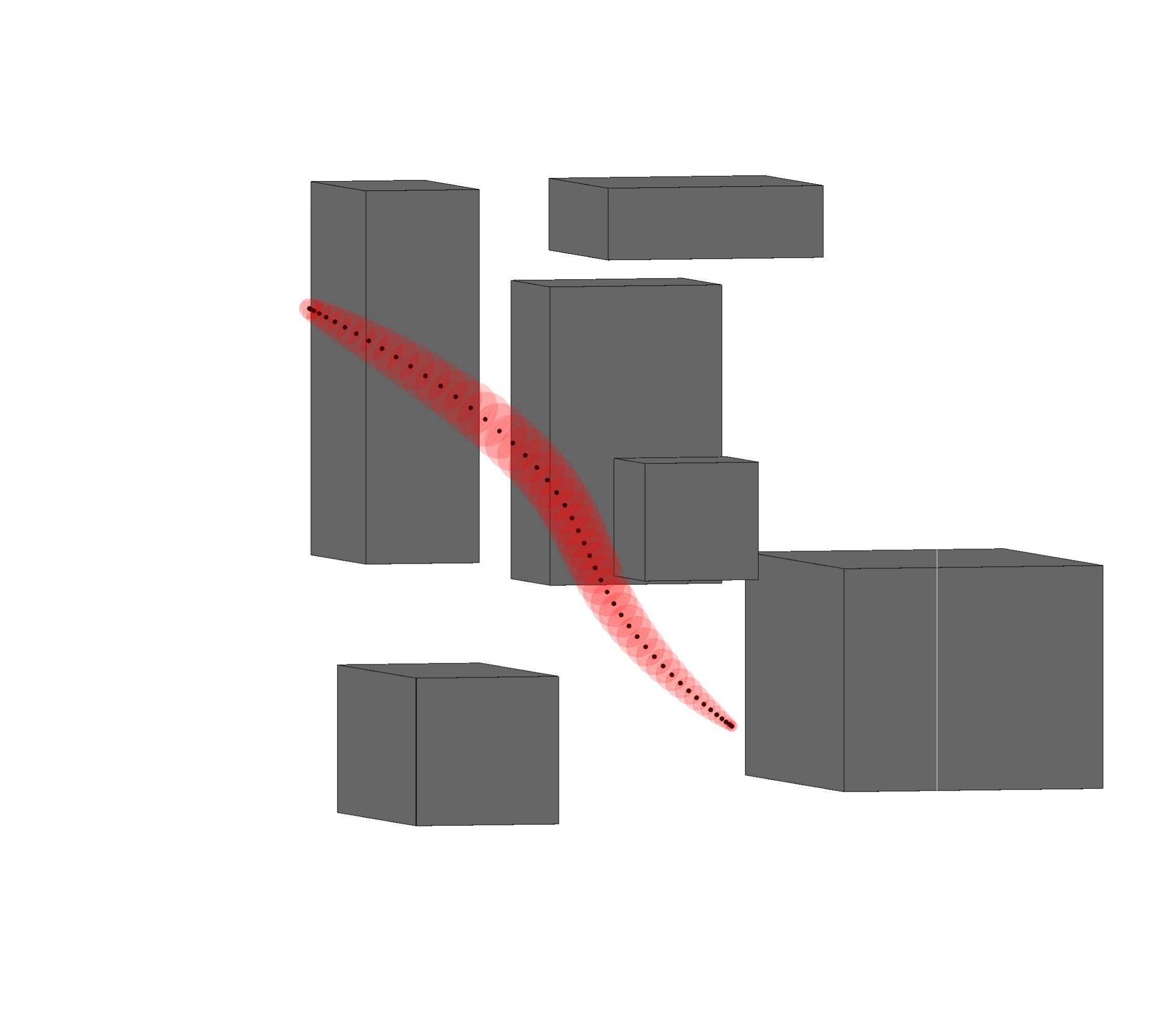}
    \end{subfigure}
  \caption{Motion planning for 3D robot, PCS-MP. $N=50$ support states are used for a time window $T = 12.5$. The covariance constraints are $K_0 = 0.01 I$, $K_T = 0.04 I$, radius $r=0.5$, $\epsilon_{\rm obs}=0.7$, $\Sigma_{\rm obs} = 7.5 \times 10^3 \sim 8 \times 10^3$.}
  \label{fig:3d_pR}
\end{figure*}
\begin{figure*}[h]
\centering
    \begin{subfigure}[ht]{0.15\textwidth}
    \centering
    \includegraphics[height=\textwidth]{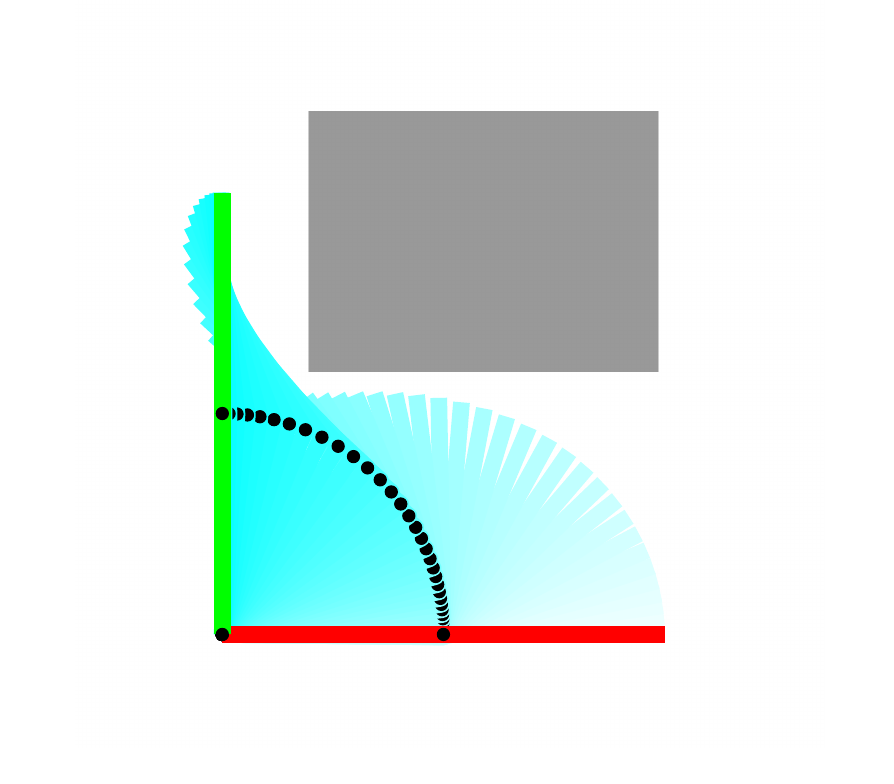}
    \caption{ GPMP2 Result }
    \label{fig:2_link_arm_1}
    \end{subfigure}
    \hfill
    \begin{subfigure}[ht]{0.15\textwidth}
    \centering
    \includegraphics[height=\textwidth]{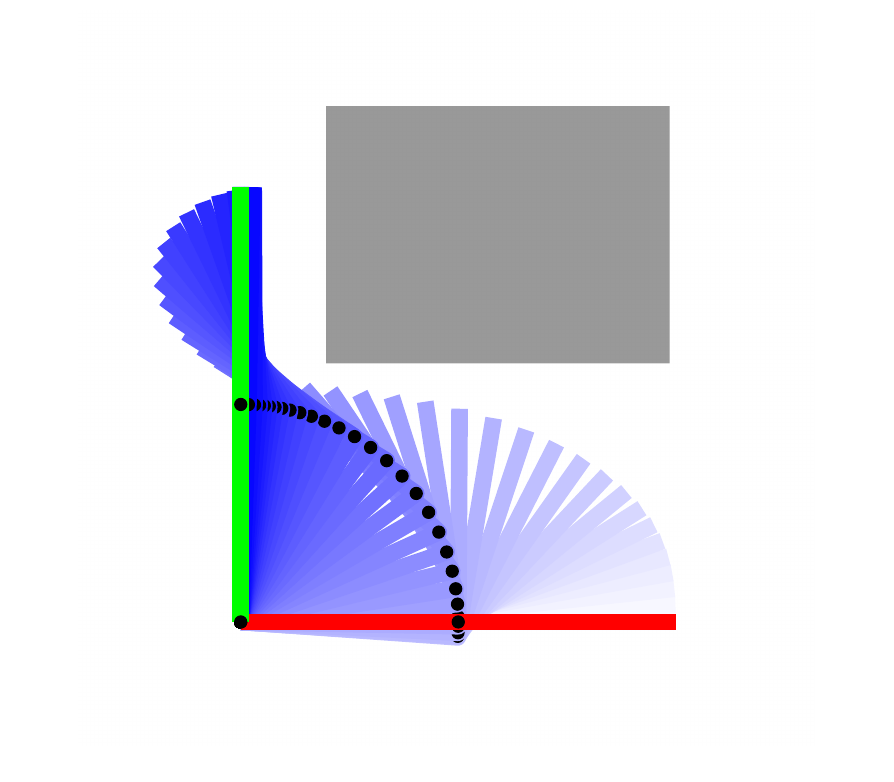}
    \caption{ GVI-MP Mean  }
         \label{fig:2_link_arm_2}
    \end{subfigure}
    \hfill
    \begin{subfigure}[ht]{0.15\textwidth}
    \centering
    \includegraphics[height=\textwidth]{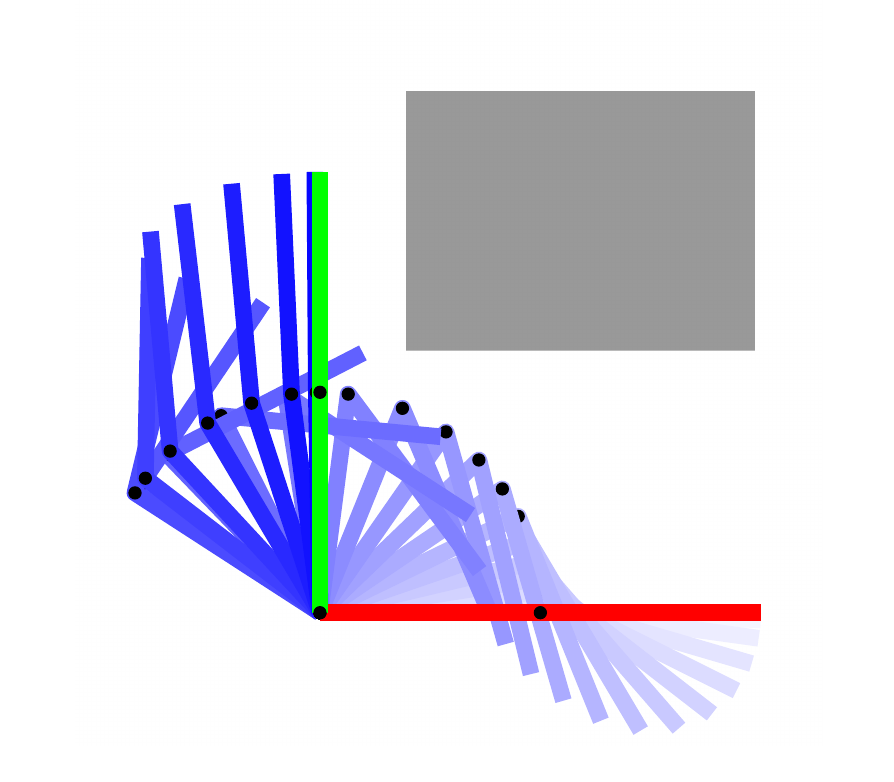}
    \caption{ PCS-MP Mean}
         \label{fig:2_link_arm_3}
    \end{subfigure}
    \hspace{0.05cm}
    \hfill
    \begin{subfigure}[ht]{0.17\textwidth}
    \centering
    \includegraphics[height=\textwidth]{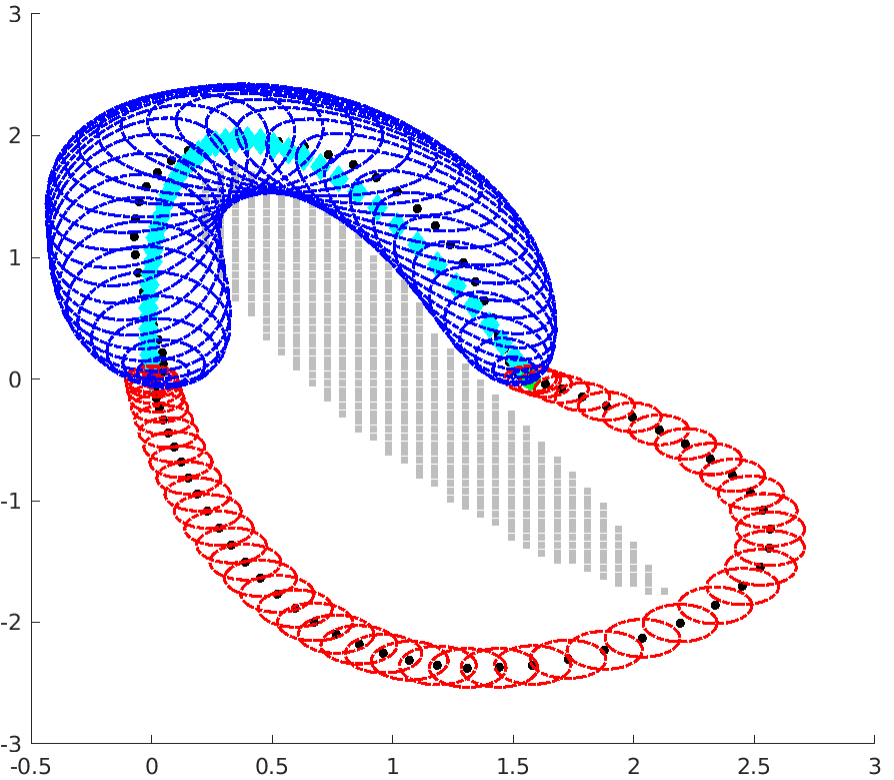}
    \caption{ Configuration space }
         \label{fig:2_link_arm_4}
    \end{subfigure}
    \hspace{0.05cm}
    \hfill
  \caption{Motion planning for a $2$-link arm. GVI-MP used $N=40$ support states for $T=4.0$, and $\Sigma_{\rm obs}=30.5 I$ for a radius $r+\epsilon_{\rm sdf}=0.13$. For PCS-MP, $N=50$ support states are used for $T=4.5$, and $\Sigma_{\rm obs}=2\times 10^4 I$ for a radius $r+\epsilon_{\rm sdf}=0.11$. The covariance constraints are $K_0=K_T=0.001 I$. Fig. \ref{fig:2_link_arm_4} shows the configuration space trajectories. Cyan diamond markers are the GPMP2 trajectory. Blue and red ellipsoids are the $3-\sigma$ covariances contours for GVI-MP and PCS-MP, respectively.
  }
  \label{fig:2_link_arm}
\end{figure*}

    

\subsection{Arm Robots}
\label{sec:experiments_armrobots}
{\em a) $2$-link Arm.}
We applied the two proposed algorithms on a $2$-DOF arm robot in a collision avoidance motion planning task. The state consists of the $2$D joint position and velocity $X_t = [p_x, p_y, v_x, v_y]^T.$
The start and goal states are horizontal and vertical poses, i.e., $X_0 = [0, 0, 0, 0], X_T = [0, \frac{\pi}{2}, 0, 0].$ We show the GPMP2, GVI-MP, and PCS-MP results in Fig. \ref{fig:2_link_arm}. 
\begin{figure*}
\centering
    \begin{subfigure}[ht]{0.225\textwidth}
    \centering
    \includegraphics[width=\textwidth]{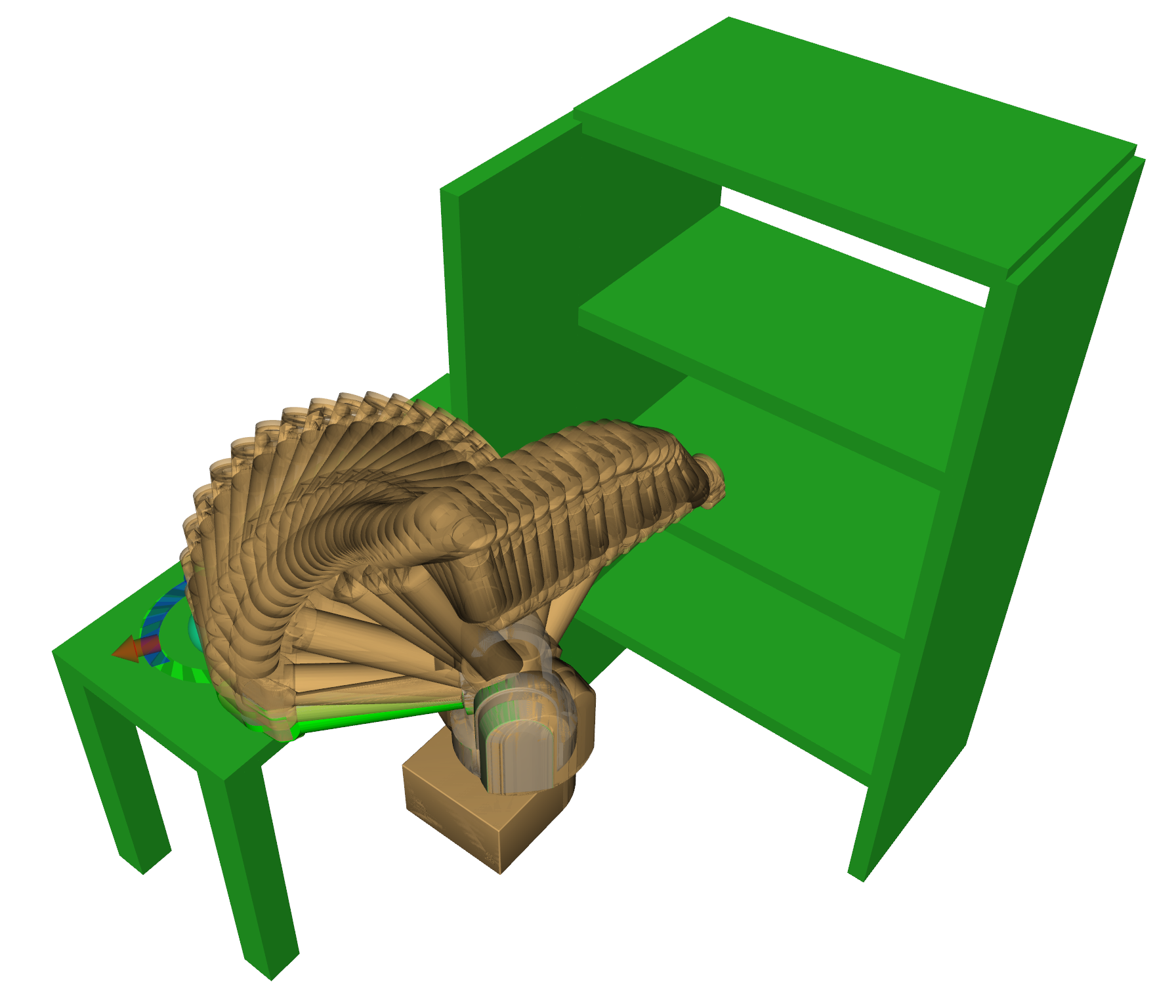}
    \caption{GPMP2}
    \label{fig:wam_gpmp2_trj1}
    \end{subfigure}
    \hfill
    \begin{subfigure}[ht]{0.225\textwidth}
    \centering
    \includegraphics[width=\textwidth]{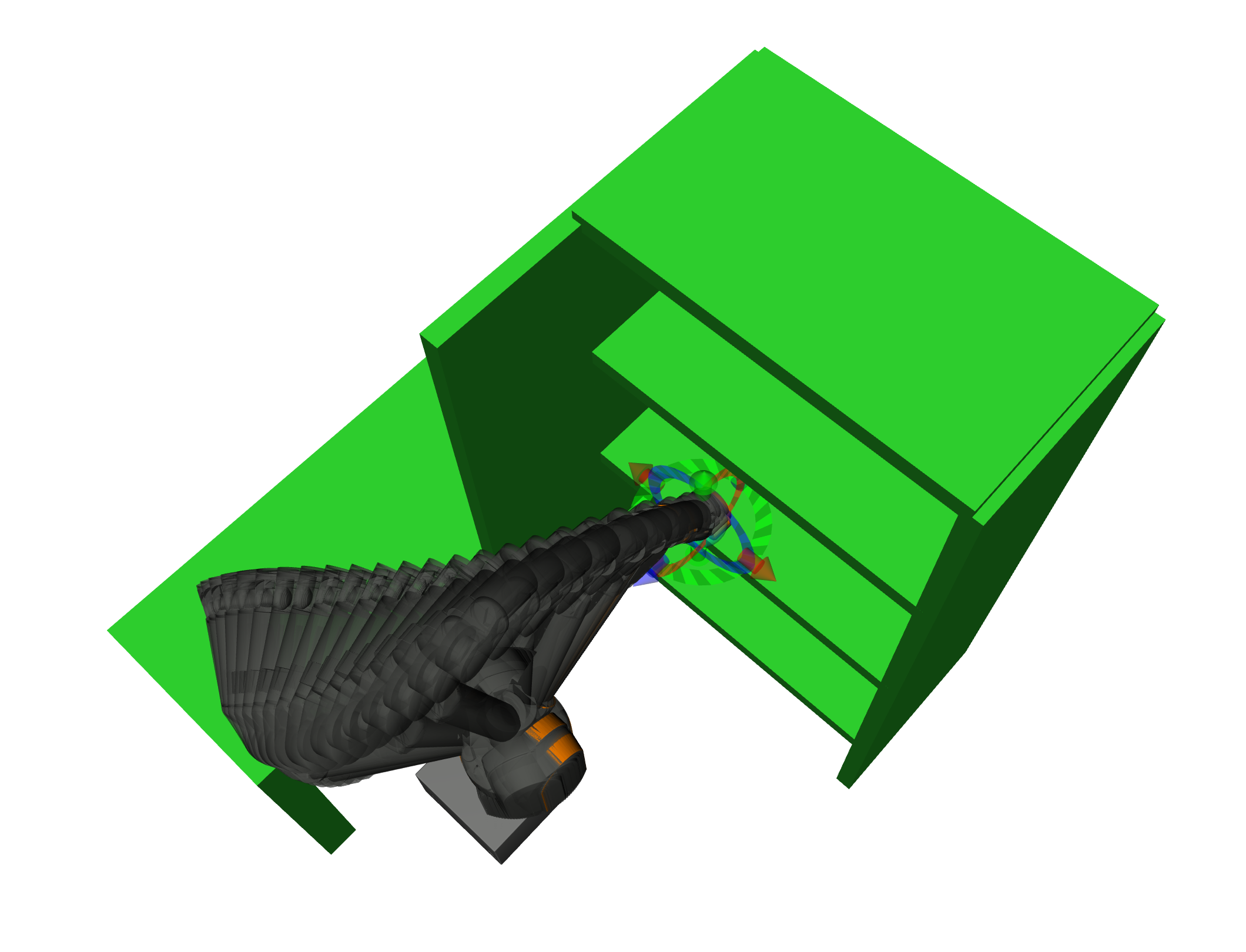}
    \label{fig:wam_gvimp_trj1}
    \caption{GVI-MP mean}  
    \end{subfigure}
    \hfill
    \begin{subfigure}[ht]{0.225\textwidth}
    \centering
    \includegraphics[width=\textwidth]{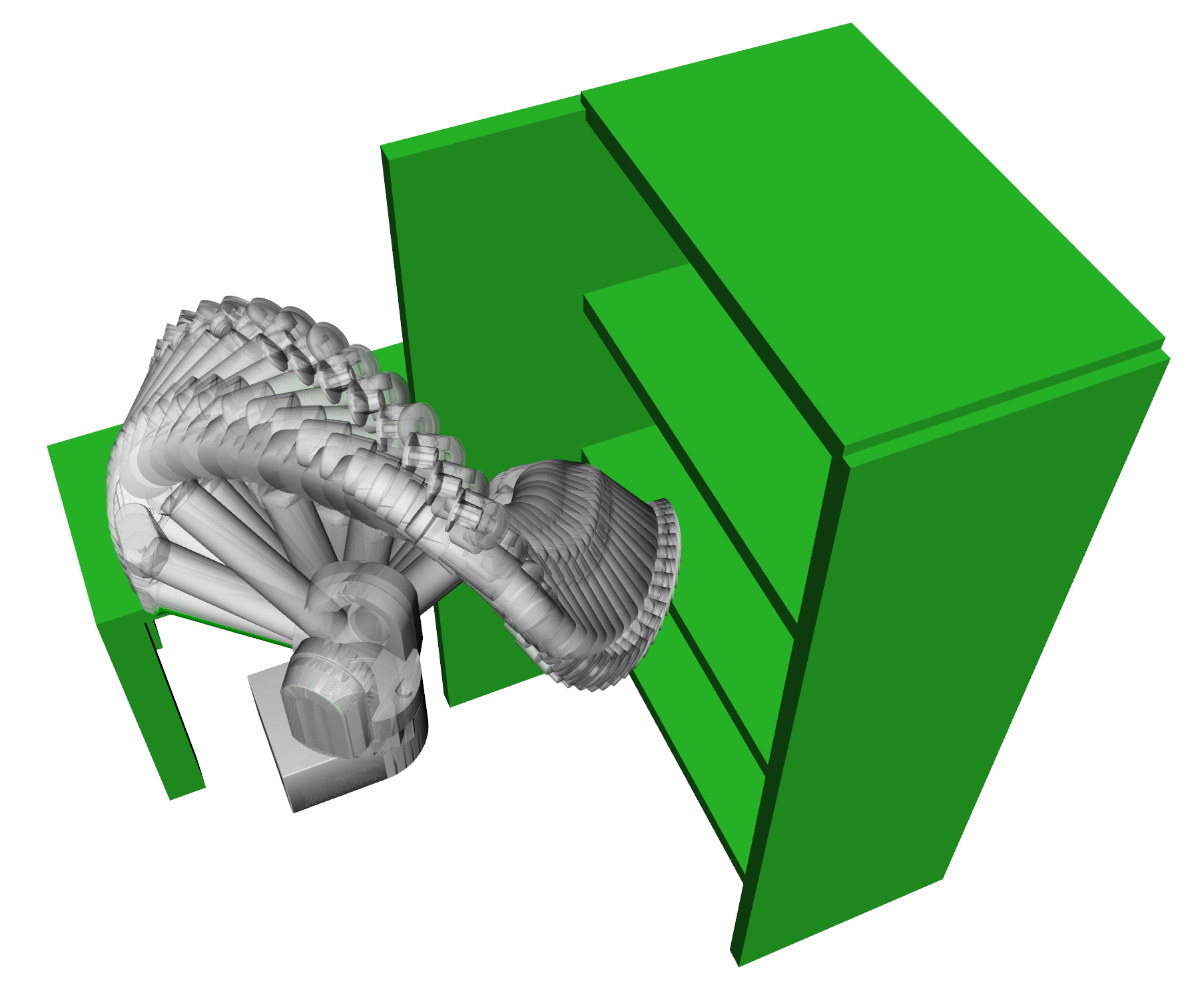}
    \caption{PCS-MP mean}
    \label{fig:wam_pgcs_trj1}
    \end{subfigure}

    \begin{subfigure}[ht]{0.225\textwidth}
    \centering
    \includegraphics[width=\textwidth]{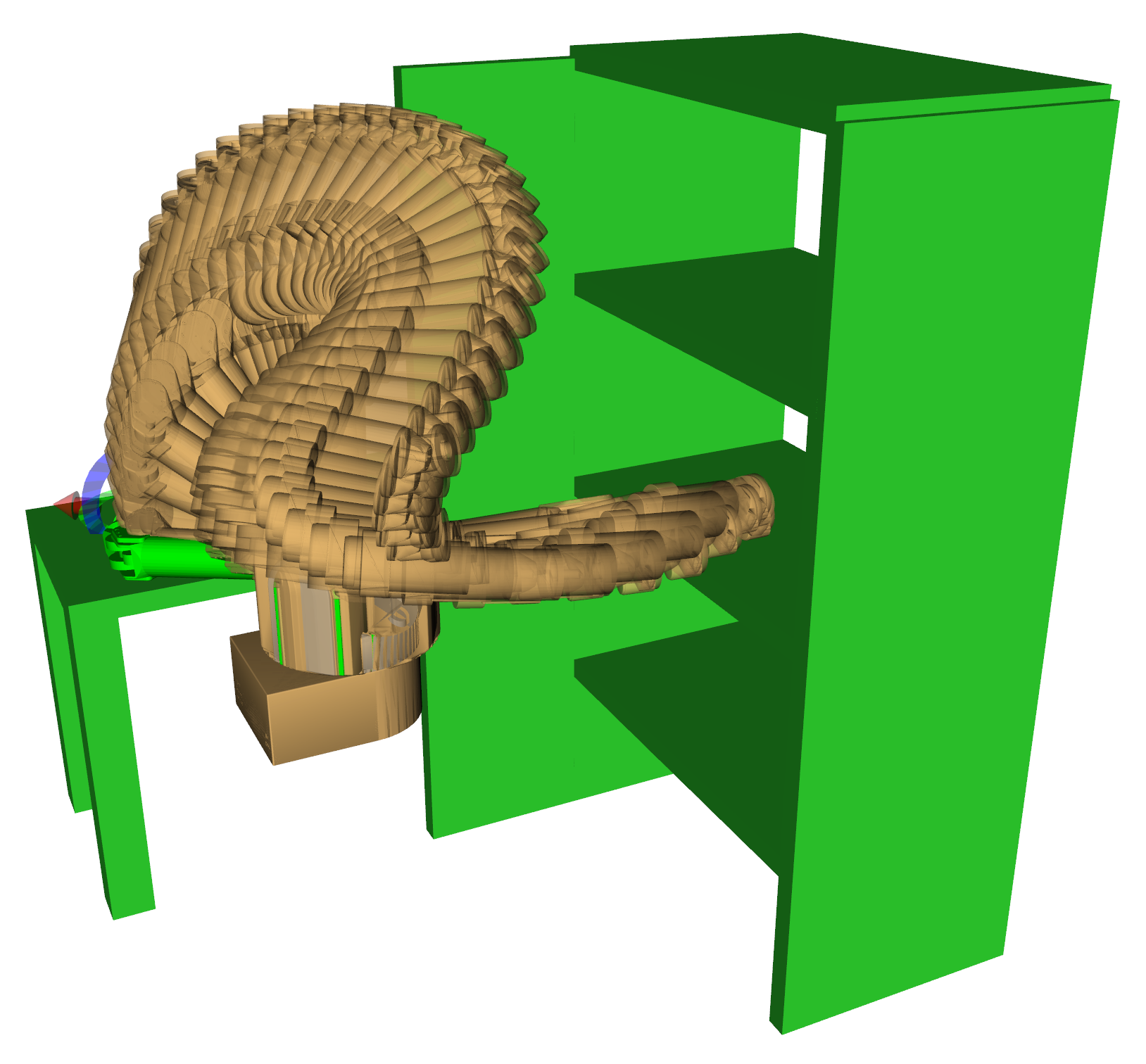}
    \caption{GPMP2}
    \label{fig:wam_gpmp2_trj2}
    \end{subfigure}
    \hfill
    \begin{subfigure}[ht]{0.225\textwidth}
    \centering
    \includegraphics[width=\textwidth]{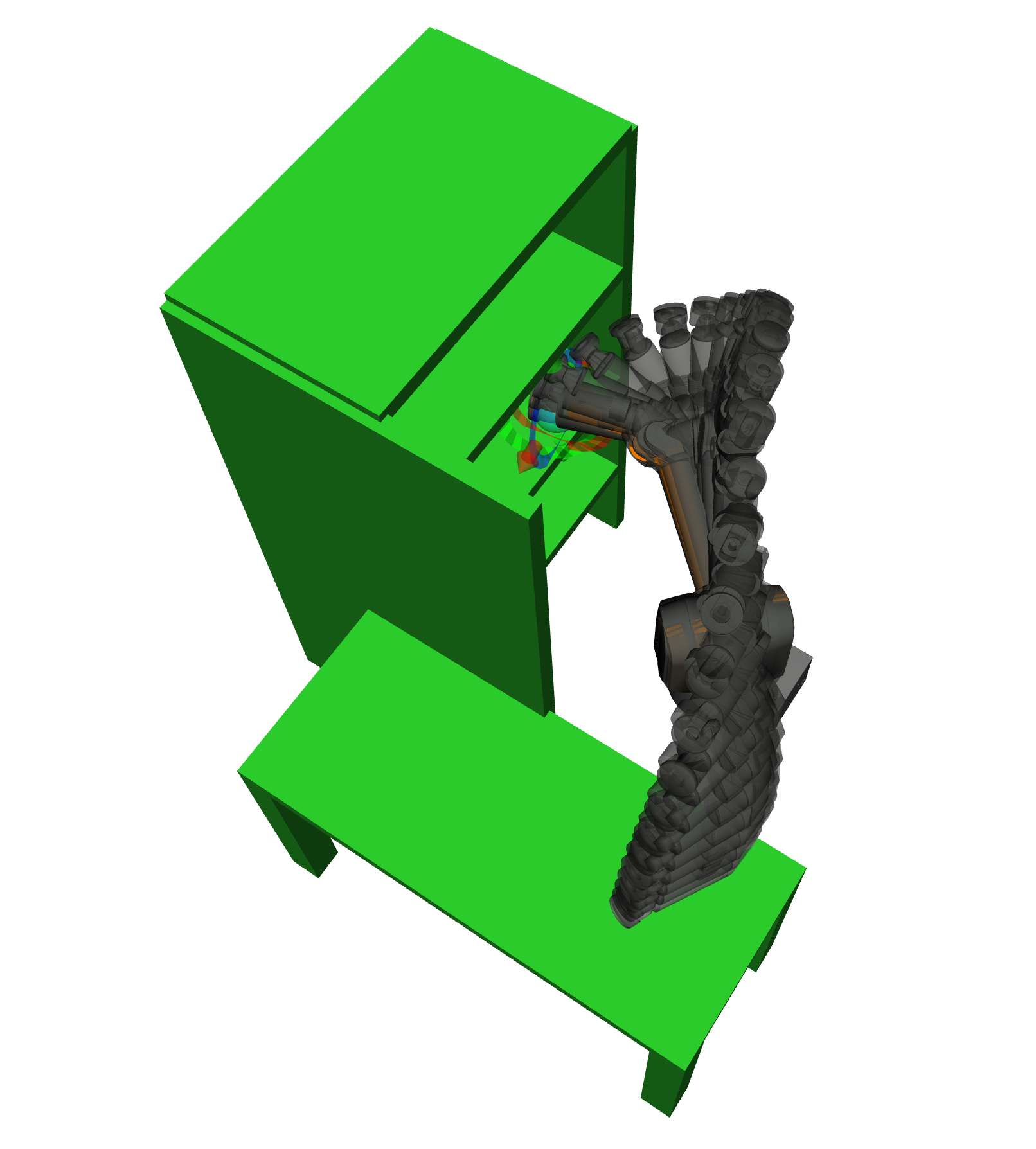}
    \caption{GVI-MP mean}
    \label{fig:wam_gvimp_trj2}
    \end{subfigure}
    \hfill
    \begin{subfigure}[ht]{0.225\textwidth}
    \centering
    \includegraphics[width=\textwidth]{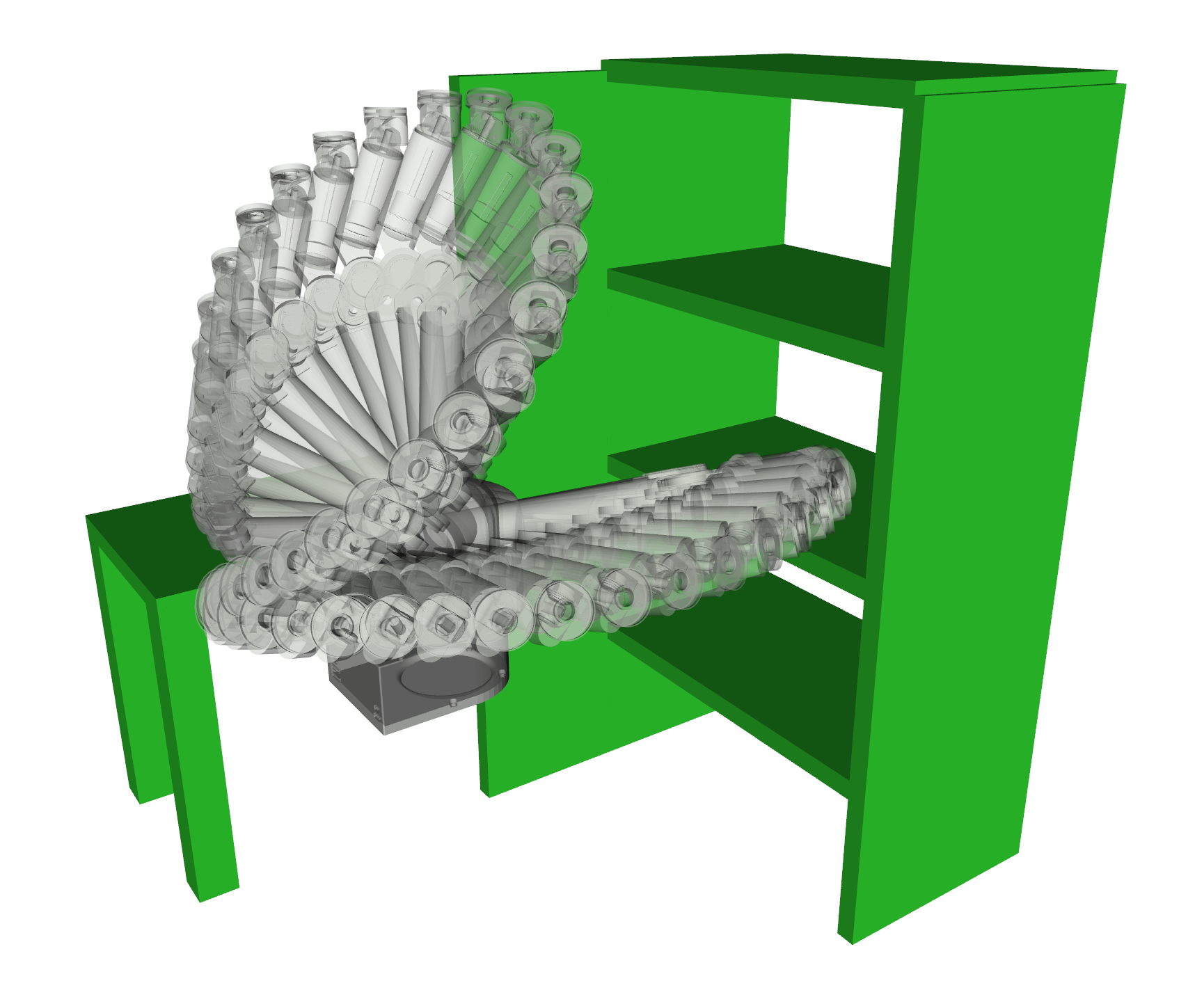}
    \caption{PCS-MP mean}
    \label{fig:wam_pgcs_trj2}
    \end{subfigure}
    \hfill
   
  \caption{Motion planning for a $7$-DOF WAM Arm for two tasks, one on each row. Results are visualized using a WAM robot URDF in \textit{ROS Moveit} \cite{coleman2014reducing} environment and are compared with the GPMP2 baseline. For GVI-MP, $N=30$ and $T=2.0$. $\Sigma_{\rm obs}=21.0 I$ for radius $r+\epsilon_{\rm sdf}=0.21$. The ratio between temperatures are $\hat{T}_{\rm high} / \hat{T}_{\rm low} = 8.0 \sim 10.0$. For PCS-MP, $N=50$ and $T=9.0$. $\Sigma_{\rm obs} = 4 \times 10^2 \sim 2.5 \times 10^3$ for radius $r+\epsilon_{\rm sdf}=0.18$. The covariance constraints are $K_0 = K_T = 0.001 I.$}
  
  \label{fig:wam}
\end{figure*}
\begin{figure*}
    \begin{subfigure}{\textwidth}
    \centering
    \includegraphics[width=0.8\linewidth]{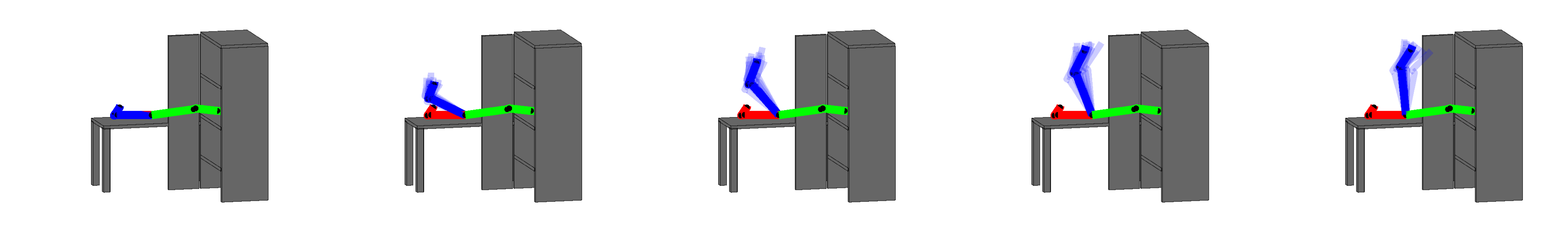}
    \end{subfigure}
    \hfill
    \begin{subfigure}{\textwidth}
    \centering
    \includegraphics[width=0.8\linewidth]{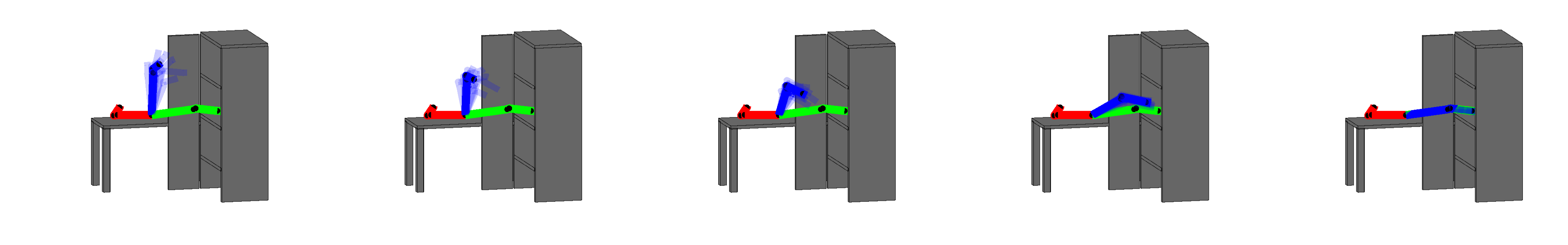}
    \caption{GVI-MP mean and sampled trajectories.}
    \label{fig:wam_gvimp_samples}
    \end{subfigure}
    \begin{subfigure}{\textwidth}
    \centering
    \includegraphics[width=0.8\linewidth]{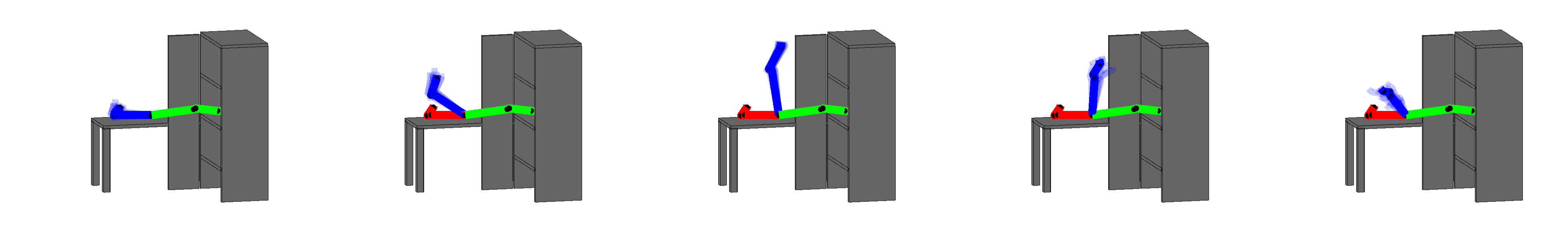}
    \end{subfigure}
    \hfill
    \begin{subfigure}{\textwidth}
    \centering
    \includegraphics[width=0.8\linewidth]{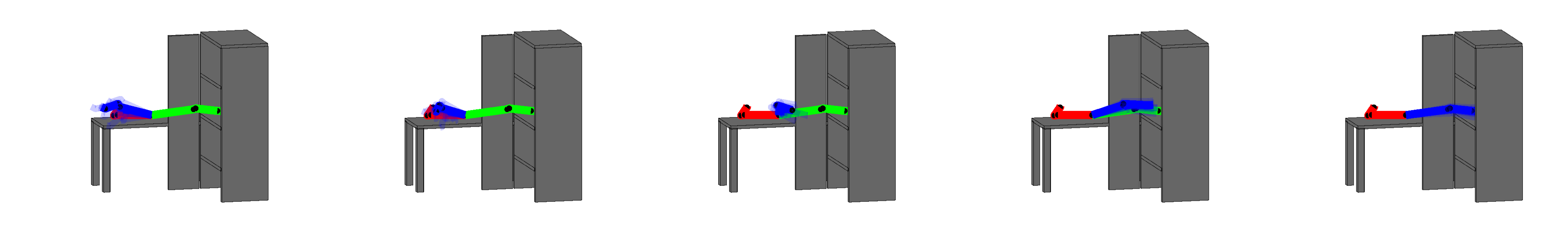}
    \caption{PCS-MP mean and sampled trajectories.}
    \label{fig:wam_pgcsmp_samples}
    \end{subfigure}
    \caption{Mean and sampled trajectories of the second task in Fig. \ref{fig:WAM_problem_settings}. The start configurations are marked as red and the goals are in green. Solid blue bars represent the mean trajectories, and light blue bars represent the samples.}
    \label{fig:wam_samples}
\end{figure*}

\begin{figure}[H]
\centering
    \begin{subfigure}[ht]{0.24\textwidth}
    \centering
    \includegraphics[width=\textwidth]{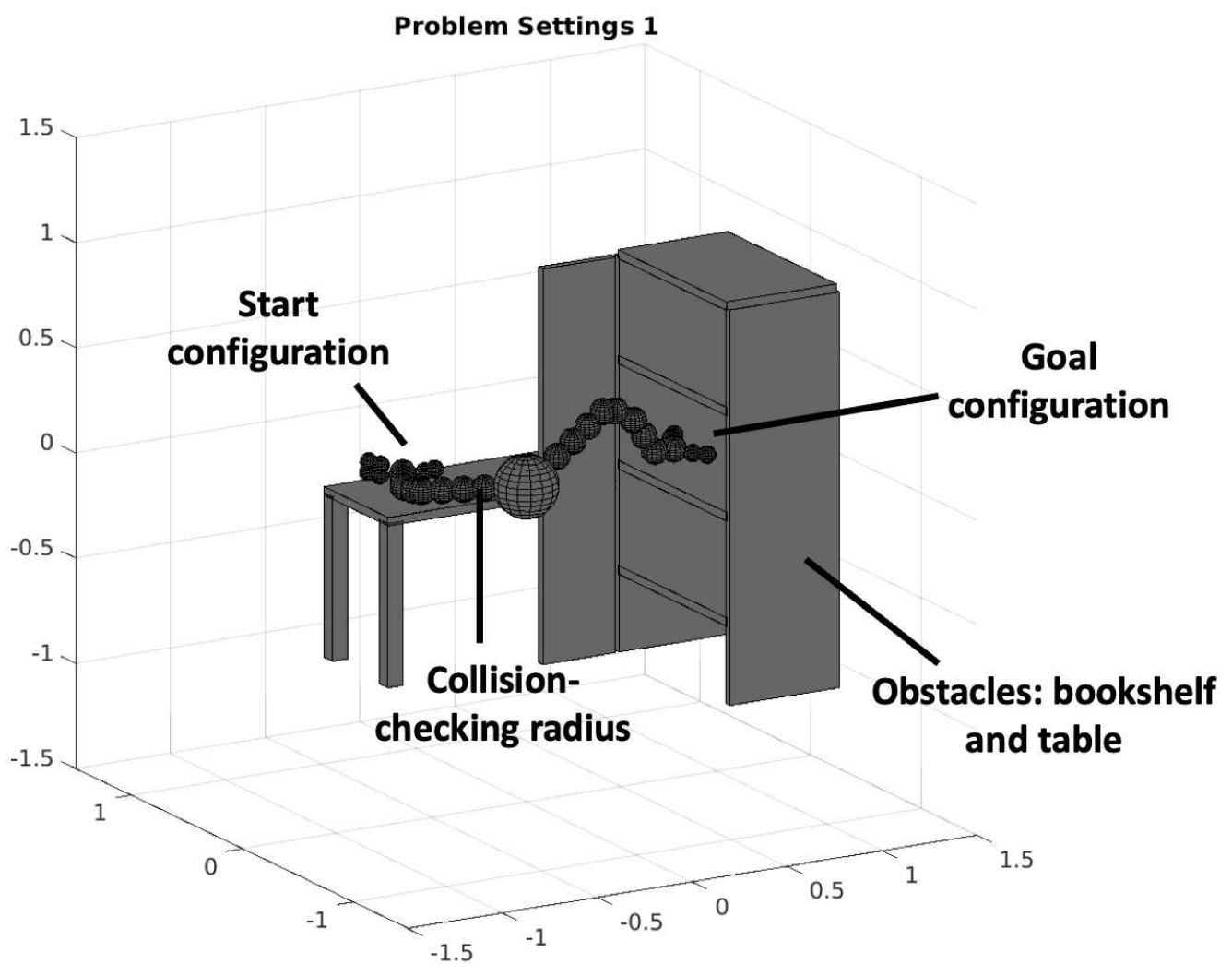}
    \caption{Task setting $1$.}
    \label{fig:wam_problem1}
    \end{subfigure}
    \begin{subfigure}[ht]{0.24\textwidth}
    \centering
    \includegraphics[width=\textwidth]{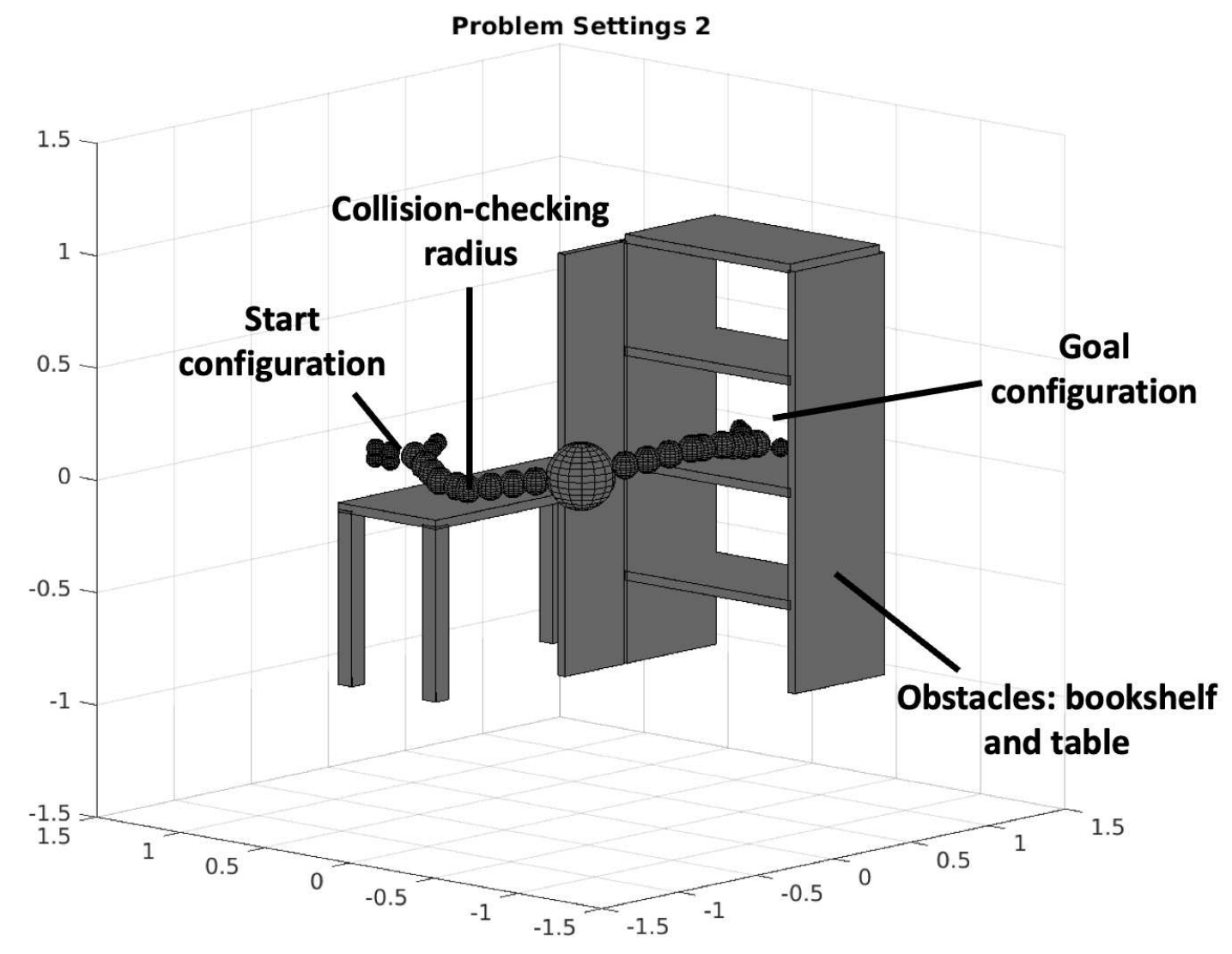}
    \caption{Task setting $2$.}
    \label{fig:wam_problem2}
    \end{subfigure}
    \caption{Problem settings for the $7$-DOF WAM robot arm experiment. The black balls represent the collision-checking radius on the robot.}
    \label{fig:WAM_problem_settings}
\end{figure}

{\em b) $7$-DOF WAM Arm.}
Next, we will experiment in a bookshelf environment with a $7$-DOF WAM robot arm \cite{rooks2006harmonious}. The state consists of the $7$D joint position and velocity. We did $2$ experiments with different start and goal configurations. The robot arm's end-effector starts on the table and moves towards a goal state inside the bookshelf while avoiding colliding with the bookshelf. The start and goal velocities are all set to zero. The problem settings are shown in Fig. \ref{fig:WAM_problem_settings}. Fig. \ref{fig:wam_samples} shows the planned mean trajectories and sampled support states from the obtained trajectory distributions. We then visualize the results in the Robot Operating System (ROS) \cite{quigley2009ros} with a high-fidelity robot model for the WAM Arm integrated into the \textit{Moveit} \cite{coleman2014reducing} motion planning package. The results are shown in Fig. \ref{fig:wam}. 

\subsection{Algorithm Performance}
\label{sec:experiments_performance}
We assess the efficiency of our implementations through experiments. The computation efficiency is improved by leveraging the factorized structure (c.f., Section \ref{sec:GVIMP_algorithm_factorized_optimization}), and the closed-form solution both for GVI-MP and for PCS-MP. 
\begin{figure*}[ht]
\centering
    \begin{subfigure}[ht]{0.22\textwidth}
    \centering
    \label{fig:pgcs_pquad_4exp_1}
    \centering
    \includegraphics[height=\linewidth]{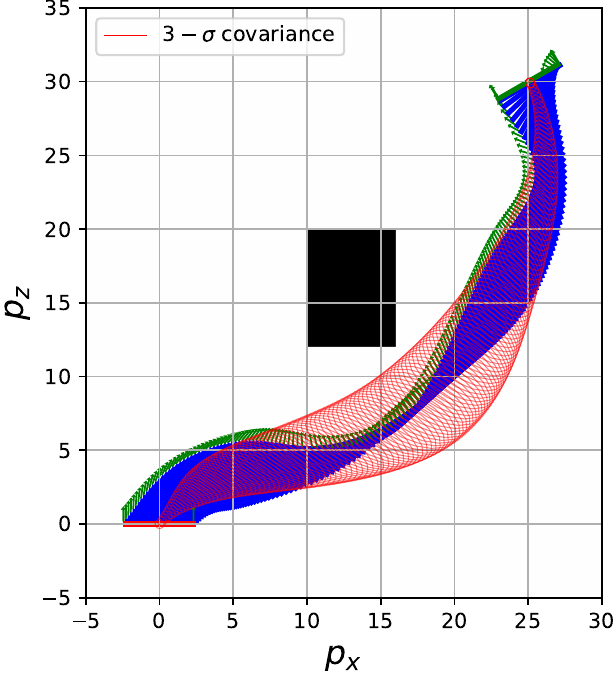}
    \caption{Experiment $1$.}
    \end{subfigure}
    \hfill
    \begin{subfigure}[ht]{0.22\textwidth}
    \centering
    \includegraphics[height=\linewidth]{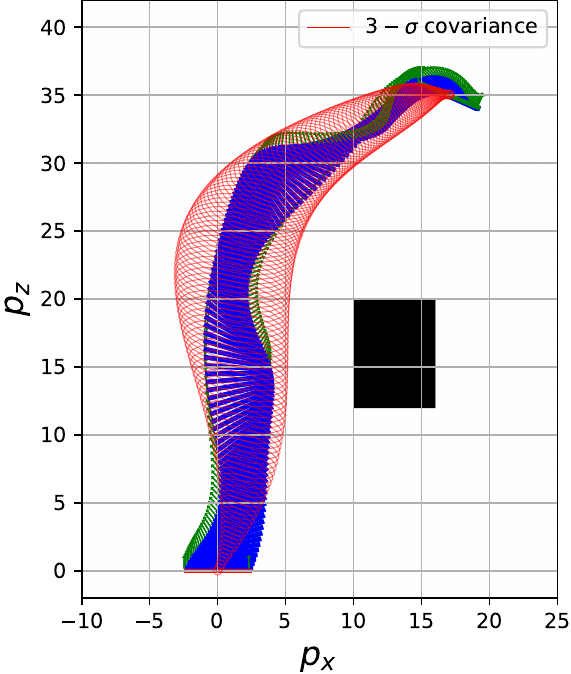}
    \caption{Experiment $2$.}
    \label{fig:pgcs_pquad_4exp_2}
    \end{subfigure}
    \hfill
    \begin{subfigure}[ht]{0.22\textwidth}
    \centering
    \includegraphics[height=\linewidth]{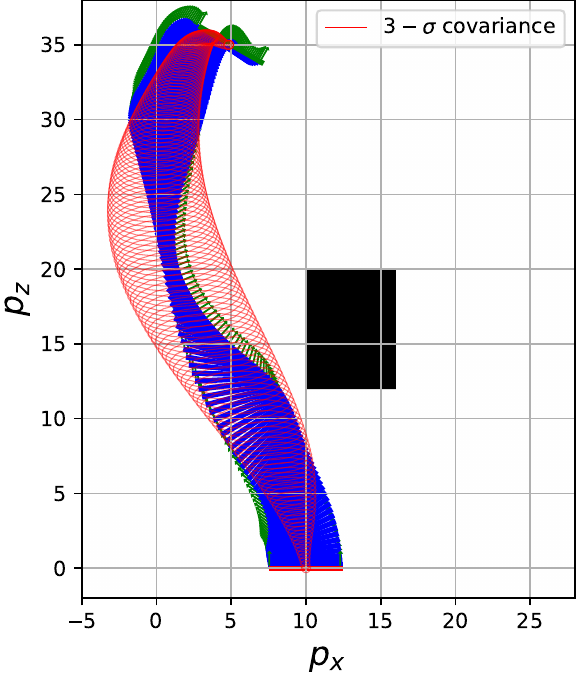}
    \caption{Experiment $3$.}
    \label{fig:pgcs_pquad_4exp_3}
    \end{subfigure}
    \hfill
    \begin{subfigure}[ht]{0.22\textwidth}
    \centering
    \includegraphics[height=\textwidth]{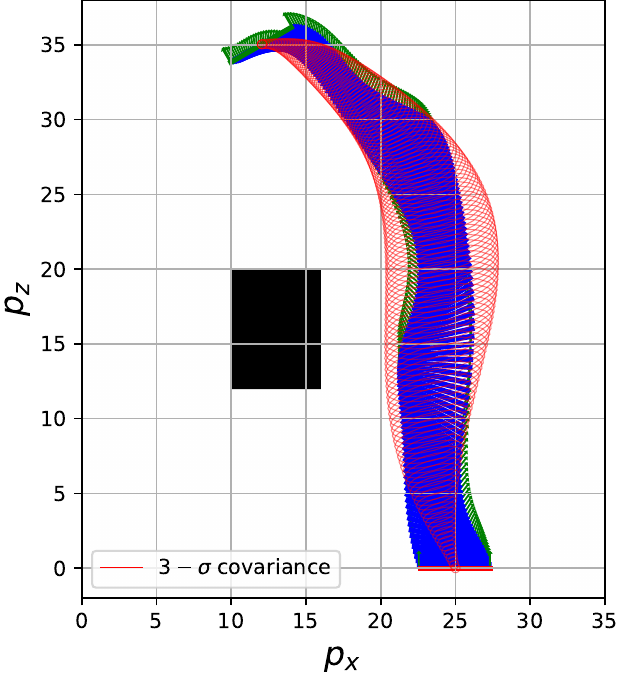}
    \caption{Experiment $4$.}
    \label{fig:pgcs_pquad_4exp_4}
    \end{subfigure}
    \hfill
  \caption{Stochastic motion planning for a linearized time-varying planar quadrotor dynamics \eqref{eq:planar_quad}, PCS-MP. The red ellipsoids are the trajectory of the $3-\sigma$ area of the covariances between positions $p_x$ and $p_z$. $N=3000$ support states are used for $T=4.0\sim4.5$, and $\Sigma_{\rm obs}=50\sim85 I$ for radius $r+\epsilon_{\rm sdf}=1.8\sim2.0$. The covariance constraints are $K_0 = K_T = 0.01 I$.}
  \label{fig:pgcs_pquad_4exp}
\end{figure*}
\begin{table*}[ht]
    \centering
    \begin{tabular}[width=\textwidth]{|c|c|c|c|}
    \hline
    & {\rm Closed-form Prior \eqref{eq:GVI_derivatives_prior}}  & {\rm Sparse-grid GH} & {\rm Full-grid GH} 
    \\
    \hline
    {\rm $2$ DOF Arm}  & $\textbf{0.0150}$ & $0.0180$ & 0.0954  
    \\
   \hline
   {\rm $7$ DOF WAM exp 1}  & $\textbf{0.3086}$ & - & $\infty$  
   \\
    \hline
\end{tabular}
\caption{Implementation time comparison for computing prior costs in GVI-MP, averaged over 50 runs. The trajectory consists of 50 support states. For a 7-DOF robot, a $3$-degree sparse-grid (resp., full-grid) quadrature method requires $28^3 = 21,952$ (resp., $3^{28}\approx 10^{13}$) sigma points to evaluate one expectation. The closed-form expression \eqref{eq:GVI_derivatives_prior} is thus indispensable in GVI-MP.}
\label{tab:computation_time_GVI_prior}
\end{table*}

{\em a) Closed-form prior factors in GVI-MP.} We show by experiment that the prior closed-form solution (Lemma \ref{lem:closed_form_prior}) is indispensable for GVI-MP. We record the computation time comparison for the $7$-DOF WAM Robot arm experiment in Tab. \ref{tab:computation_time_GVI_prior}. This is because, for the GH quadratures, to compute the factor level prior costs \eqref{eq:prior_factors_2}, quadrature rules are evaluated on the marginal Gaussian $q_{i, i+1}$ which has dimension $28$. A $3$-degree full-grid GH quadrature requires $3^{28}$ sigma points. For a sparse-grid GH quadrature, $28^3$ sigma points are needed. The quadrature methods are thus expensive for evaluating the prior. We use the open-sourced sparse-GH quadrature library \href{http://www.sparse-grids.de}{http://www.sparse-grids.de} to evaluate the expectations. Common implementations of quadrature methods consider the problem of dimension less than $20$.
\begin{table*}[ht]
\centering
\begin{tabular}[width=\textwidth]{|c|cc|cc|cc|}
\hline
 & \multicolumn{2}{c|}{GPMP2}    & \multicolumn{2}{c|}{GVI-MP}    & \multicolumn{2}{c|}{PCS-MP}    \\ \hline
 & \multicolumn{1}{c|}{\rm Computation} & {\rm Samping} & \multicolumn{1}{c|}{\rm Computation} & {\rm \textbf{Samping}} & \multicolumn{1}{c|}{\rm Computation} & {\rm \textbf{Samping}} \\ \hline
WAM exp 1 & \multicolumn{1}{c|}{$\textbf{0.2009}$} & - & \multicolumn{1}{c|}{$14.4028$} & $0.009$ & \multicolumn{1}{c|}{$\textbf{0.6311}$} & $0.003$ \\ \hline
WAM exp 2 & \multicolumn{1}{c|}{$\textbf{0.1705}$} & - & \multicolumn{1}{c|}{$24.5887$} & $0.003$ & \multicolumn{1}{c|}{$\textbf{0.5470}$} & $0.002$ \\ \hline
\end{tabular}
\caption{The computation and sampling time for different algorithms in the $7$-DOF experiments averaged over $50$ runs of the same experiment. The sampling time evaluates the time for sampling $1000$ trajectories from the joint Gaussian distributions. }
\label{tab:computation_time}
\end{table*}

{\em b) Implementation Time Comparison.}
We record the performance of the two proposed algorithms in the $7$-DOF WAM experiments compared to the GPMP2 baseline in Tab. \ref{tab:computation_time}. We observe that the computational time for GPMP2 is much faster than GVI-MP's. This is because even though GVI-MP leverages the factorized factors and the prior close-form solutions, it optimizes directly over the joint distribution, which is a much larger problem. In the $7$-DOF example, for $50$ support states, GPMP2 has $700=14\times50$ variables while GVI-MP has $10500 = 14\times14\times50+14\times50$.

The PCS-MP is much more efficient and comparable to the GPMP2 baseline method in terms of computation time, considering the scale of the problem it solves. This is because in PCS-MP, the trajectory distribution is parameterized using the underlying LTV-SDE system $(A_t, a_t, B_t)$, rather than the parameterization of the joint support state distribution. Another reason is the approximation of the nonlinear state cost using a quadratic function in PCS-MP, as opposed to the GH-quadrature estimation employed in GVI-MP. PCS-MP leverages this local approximation around a nominal trajectory and solves a linear covariance steering problem with a closed-form solution at each iteration.

\subsection{Robust Motion Plan and Real-world Impacts}
\label{sec:experiment_robustness}
{\em a) Trajectory distribution and samples.}
The stochastic formulation in GVI-MP and PCS-MP intrinsically embeds robustness against uncertainties by considering the noise in the systems. The output of GVI-MP and PCS-MP is a trajectory distribution from which a sample trajectory realization is easily obtained. Fig. \ref{fig:wam_samples} shows the mean and sampled trajectories from GVI-MP and PCS-MP distributions. We record the sampling time from the joint Gaussian distributions for the 7-DOF WAM robot in Table \ref{tab:computation_time}. Notably, sampling $1000$ trajectories consumes a negligible amount of time. The optimized covariances can be leveraged to resample trajectories that can help avoid obstacles with disturbances. Fig. \ref{fig:resample_2D} showcases the resampling results of a 2-D point robot in environments with disturbed obstacles at execution time.

\begin{figure}
    \centering
    \includegraphics[width=0.8\linewidth]{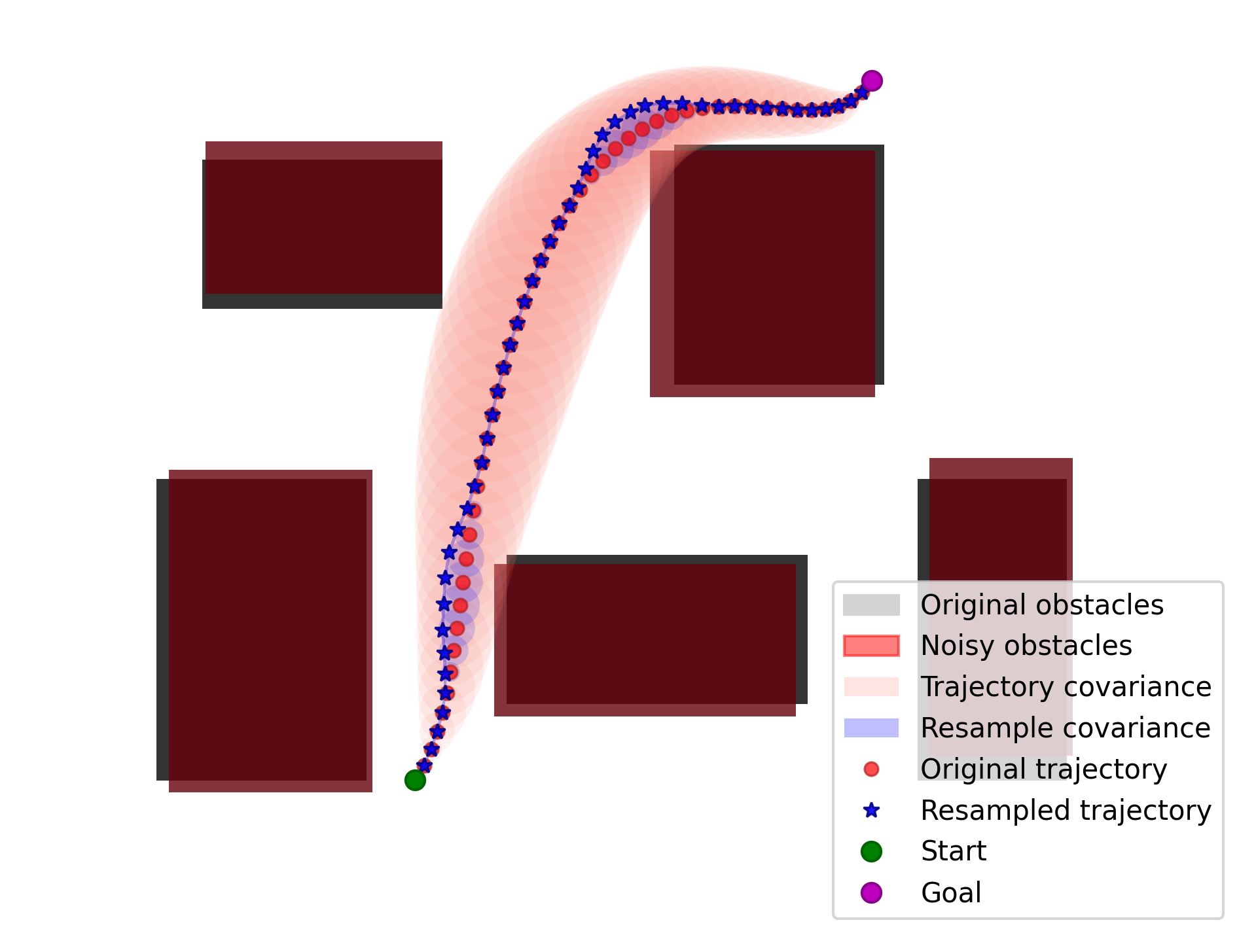}
    \caption{$2$-D point robot example of the resampling procedure using the planned covariances, for uncertain obstacles.}
    \label{fig:resample_2D}
\end{figure}

{\em b) Real-world experiment.} In real-world scenarios, perception and localization noises can lead to environmental uncertainties \cite{chen2022should}. We conducted hardware experiments to showcase the robustness of GVIMP compared to baseline algorithms to showcase that our motion plan with distributional information (covariances) is superior to deterministic baselines in scenarios with disturbances. The hardware experiment setup is shown in Fig. \ref{fig:hardware_setup}, where we use a Franka arm to move from a start to a target position, avoiding a box obstacle. The uncertainties come in two-fold: the noisy readings from the camera, and a disturbance we add by randomly moving the box, representing unmodeled disturbances in real-world scenarios. 

\begin{figure}[ht]
\centering
    \begin{subfigure}[ht]{0.24\textwidth}
    \centering
    \includegraphics[width=\textwidth]{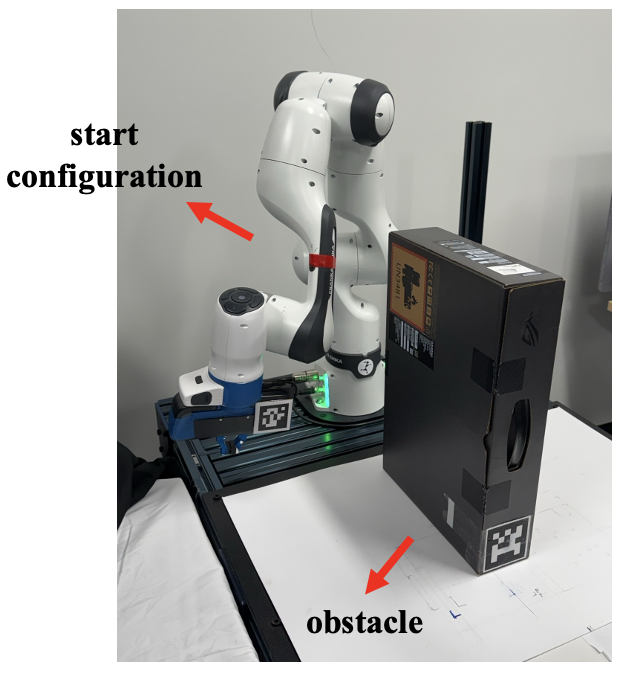}
    \caption{Start configuration}
    \end{subfigure}
    \hfill
    \begin{subfigure}[ht]{0.24\textwidth}
    \includegraphics[width=\textwidth]{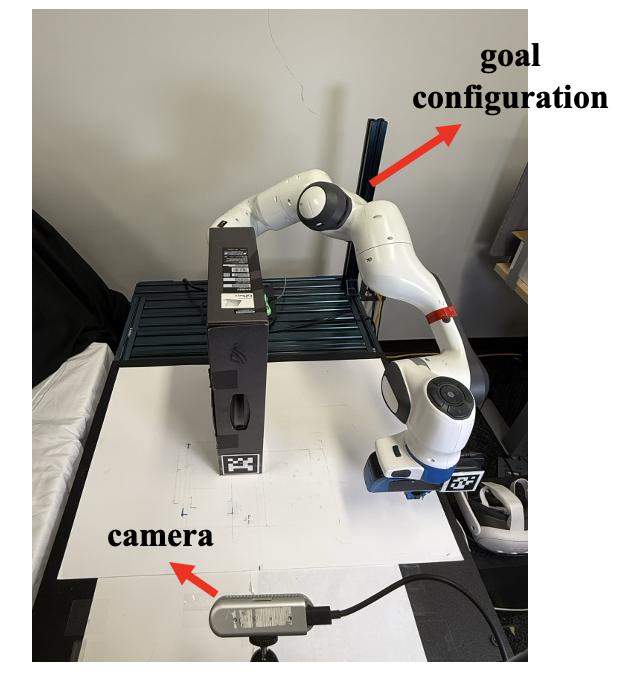}
    \caption{Goal configuration}
    \end{subfigure}
  \caption{Franka arm hardware experiment setting.}
  \label{fig:hardware_setup}
\end{figure}

\begin{figure}[ht]
\centering
    \begin{subfigure}[ht]{0.23\textwidth}
    \centering
    \includegraphics[width=\textwidth]{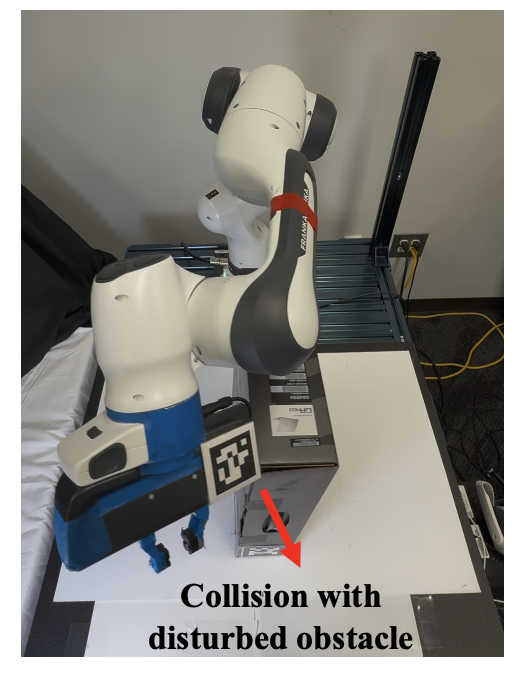}
    \caption{RRT Connect}
    \end{subfigure}
    \hfill
    \begin{subfigure}[ht]{0.23\textwidth}
    \includegraphics[width=\textwidth]{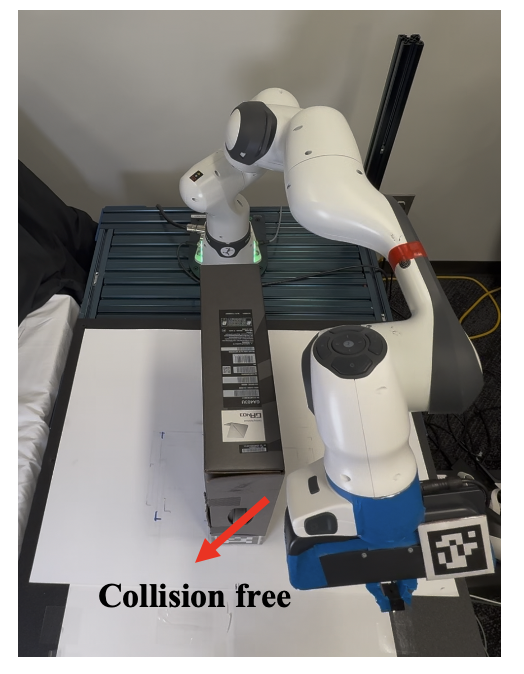}
    \caption{GVIMP}
    \end{subfigure}
  \caption{Comparison results for disturbed obstacles.}
  \label{fig:harware_exp_results}
\end{figure}

The experiment procedure is as follows. First, we obtain a \textit{nominal} pose of the obstacle by averaging the noisy readings of a static box pose from the camera to account for sensor noise. Then, we do motion planning using the obtained nominal obstacles. Next, we add disturbances to the obstacles by manually moving them. Finally, we execute the motion plans with the disturbed obstacles without re-planning. Since our plan stores the covariances, we resample the plan using the solved covariances in the disturbed scenarios during execution. 

To obtain statistically meaningful results, we conducted the experiments with $50$ randomly disturbed obstacle poses and compared our results with baseline motion planners. We computed the signed distances of the motion plan trajectories to the obstacle in each case. The results are summarized in Tab. \ref{tab:signed_distance_disturbed}. Our proposed GVIMP gives the safest plan under uncertainties. Fig. \ref{fig:harware_exp_results} showcases the comparison of the motion planning given by GVIMP versus the RRT-Connect baseline under one disturbance, where a collision happened for the RRT-Connect plan, while the GVIMP plan is robust against the disturbance. More hardware experiment results are available at: \href{https://www.youtube.com/watch?v=c4sFOlEki0Q}{https://www.youtube.com/watch?v=c4sFOlEki0Q}.
\begin{table*}[th]
\centering
\begin{tabular}{|c|c|c|c|c|c|c|}
\hline
 Planner & LBKPIECE  & PRM & RRT & RRT Connect & RRT$^*$ &  GVIMP  \\ \hline
 Avg. Signed Distance & $-0.069$ & $0.01849$ & $-0.0011$ & $-0.0237$ & $0.0111$ & $\textbf{0.0546}$ \\ \hline
\end{tabular}
\caption{Minimum signed distances between the planned trajectories and the disturbed obstacles, averaged over $50$ randomly disturbed obstacle poses. The proposed GVIMP overperformed the baselines in terms of robustness against uncertainties.}
\label{tab:signed_distance_disturbed}
\end{table*}

\begin{figure}[ht]
\centering
    \begin{subfigure}[h]{0.31\textwidth}
    \centering
    \includegraphics[width=0.75\textwidth]{figures/pcsmp/planar_quad/exp2/pquad_sdf.pdf}
    \caption{PCS-MP mean and covariance trajectory}
    \label{fig:pgcs_pquad_1}
    \end{subfigure}
    \hfill
    \par\vspace{2mm}
    \begin{subfigure}[h]{0.3\textwidth}
    \centering
    \includegraphics[width=0.8\textwidth]{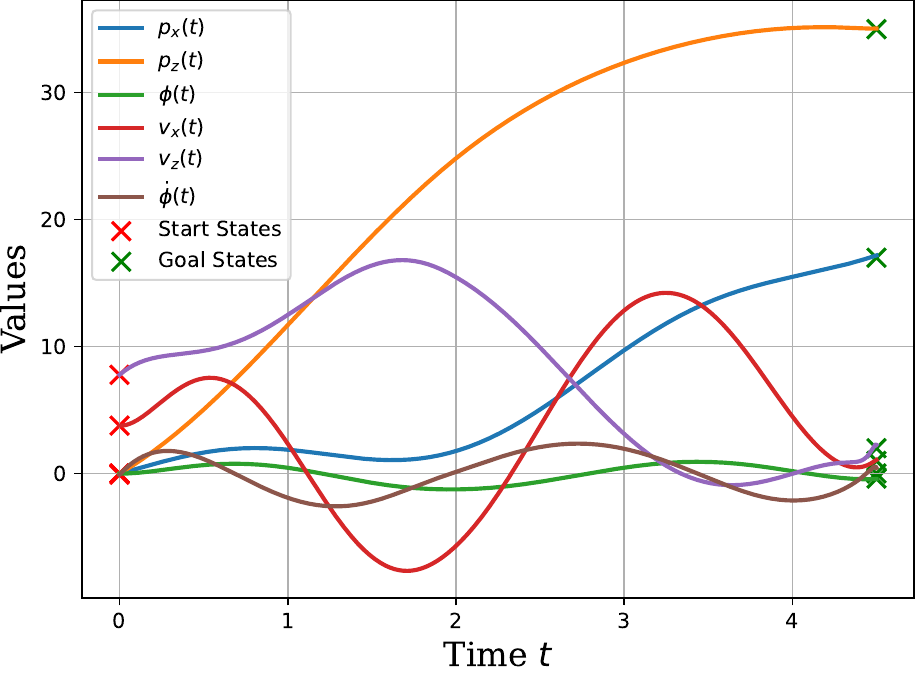}
    \caption{State space mean trajectory}
    \label{fig:pgcs_pquad_2}
    \end{subfigure}
    \hfill
    \begin{subfigure}[h]{0.24\textwidth}
    \centering
    \includegraphics[width=\textwidth]{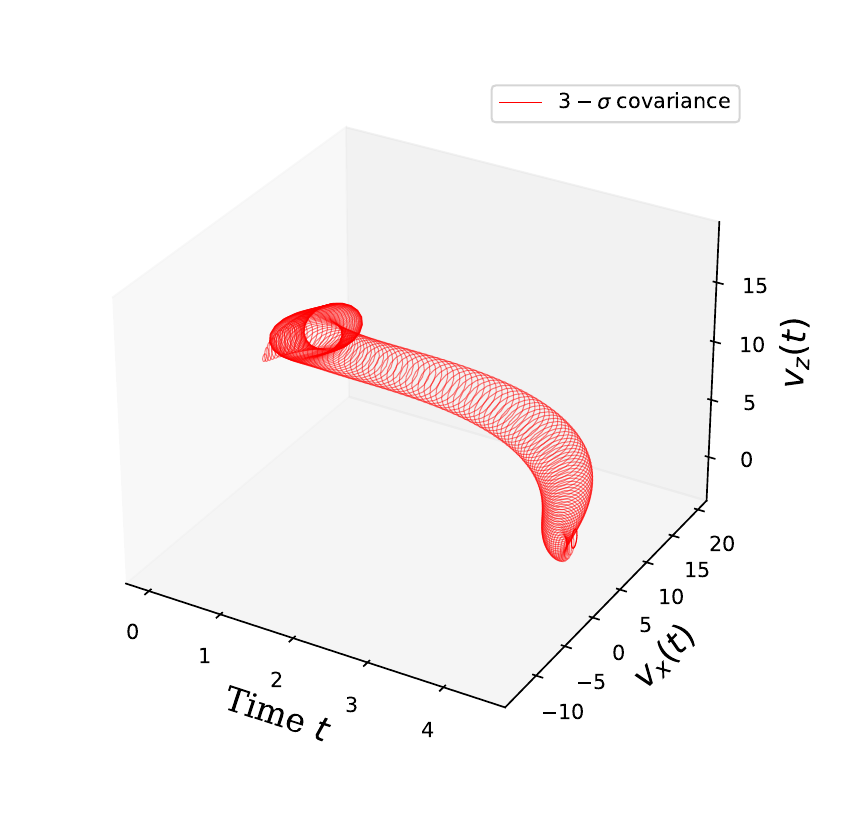}
    \caption{Covariance $\left [v_x(t), v_z(t) \right]$}
    \label{fig:pgcs_pquad_3}
    \end{subfigure}
    \begin{subfigure}[h]{0.24\textwidth}
    \centering
    \includegraphics[width=\textwidth]{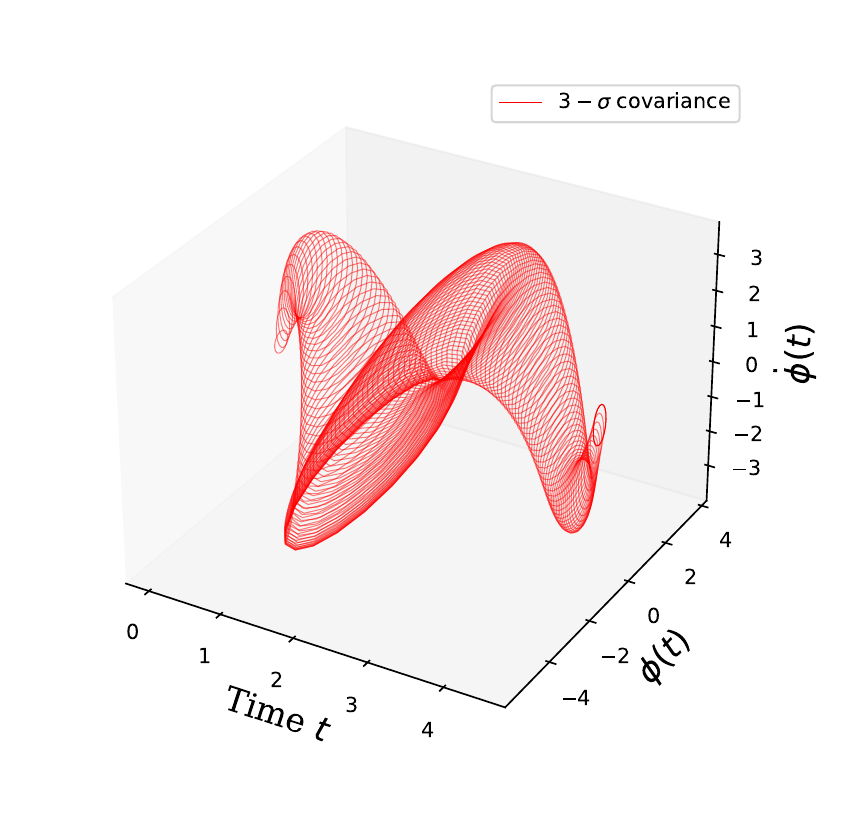}
    \caption{Covariance $\left [\phi(t), \dot \phi(t)\right]$}
    \label{fig:pgcs_pquad_4}
    \end{subfigure}
  \caption{Motion planning for an LTV stochastic system \eqref{eq:planar_quad}. }
  \label{fig:pgcs_pquad}
\end{figure}

\subsection{A Time-Varying System}
\label{sec:experiments_LTVexample}
We validate the PCS-MP algorithm on an LTV stochastic system obtained by linearizing a planar quadrotor dynamics. The system operates within a $6$-dimensional state space denoted as $X_t \triangleq \left[p_x, p_z, \phi, v_x, v_z, \dot{\phi}\right].$ Here, $(p_x, p_z)$ represents the lateral and vertical positions, $(v_x, v_z)$ stands for lateral and vertical velocities in the body frame, with $\phi$ accounting for the rotational degree of freedom. The dynamics is defined as 
\begin{equation}
    \dot{X_t} = \begin{bmatrix}
        v_x \cos(\phi) - v_z \sin(\phi)\\
        v_x \sin(\phi) + v_z\cos(\phi) \\
        \dot\phi \\
        v_z \dot\phi - g\sin(\phi)\\
        -v_x\dot\phi - g\cos(\phi)\\
        0
    \end{bmatrix} + 
    \begin{bmatrix}
        0 & 0 \\
        0 & 0 \\
        0 & 0 \\
        0 & 0 \\
        1/m & 1/m \\
        l/J & -l/J 
    \end{bmatrix}
    \begin{bmatrix}
        u_1 \\
        u_2
    \end{bmatrix},
    \label{eq:planar_quad}
\end{equation}
where $g$ is the gravity, $m$ represents the mass of the planar quadrotor, $l$ is the length of the body, and $J$ is the moment of inertia. The system has two thrust inputs. In our experiments, we set $m=1/\sqrt{2}, l=\sqrt{2}$, and $J=1$.

We test PCS-MP in the environment with an obstacle between the initial and the goal states. The collision cost \eqref{eq:collision_factor} is evaluated at each collision-checking point on the robot at each state on the trajectory. To run PCS-MP for this system, we iteratively linearize \eqref{eq:planar_quad}, and for each linearized LTV system, we run Algorithm \ref{alg:pgcsmp} until convergence; we then linearize the system in the next iteration around the obtained nominal $(z_t^*, \Sigma_t^*)$. We use the linearization around $(X_0, K_0)$ as the initialization. Fig. \ref{fig:pgcs_pquad} shows the planning results. Fig. \ref{fig:pgcs_pquad_1} shows the controlled nominal trajectory $(z_t^*, \Sigma_t^*)$, where the $3-\sigma$ contour of the covariance between the position variables $(p_x, p_z)$ are displayed in red ellipsoids. Fig. \ref{fig:pgcs_pquad_2} showcases the mean trajectories of all $6$ states, together with the start and goal states. Fig. \ref{fig:pgcs_pquad_3} and \ref{fig:pgcs_pquad_4} show the covariance trajectories between the velocities and the angles. We tested the PCS-MP algorithm in the same environment with $4$ different tasks. The results are shown in Fig. \ref{fig:pgcs_pquad_4exp}. PCS-MP finds a feasible trajectory distribution in all the experiments. The minimum-energy solution in our formulation \eqref{eq:SCP_formulation_GVI} maintains a feasible tilting angle of the quadrotor along the trajectory. We use $3000$ time discretizations. 

\section{conclusion and future directions}
\label{sec:conclusion}
\subsection{Conclusion}
We proposed the Gaussian Variational Inference Motion Planning (GVI-MP) framework, formulating the motion planning problem as a Gaussian variational inference to a posterior. GVI-MP is equivalent to a stochastic control problem over time-discrete states. We proposed the Gaussian Variational Inference Motion Planner to solve this inference problem using a natural gradient scheme over the sparse factor graph. Closed-form and sparse quadrature are used to evaluate functional gradients efficiently. We then proposed the Proximal Covariance Steering Motion Planner (PCS-MP) algorithm to solve the equivalent control problem with boundary constraints. PCS-MP leverages a closed-form solution from a linear covariance steering problem through quadratic approximations for nonlinear factors, making its performance comparable to the deterministic baseline. Our algorithms are efficient and robust in handling environmental uncertainties.

\subsection{Limitation and Future Works}
\label{sec:experiments_limitations}
In this work, we chose Gaussian distributions as our optimization variable because of their tractability and sampling efficiency. Gaussian simplification of the real-world noise is widely used in the nonlinear filtering and optimal control literature. Despite its simplicity, a drawback of Gaussian is that it is single-modal, while motion planning posterior is naturally non-convex and thus multi-modal. The KL divergence minimization objective promotes a mode-seeking result, concentrating on one feasible solution distribution. Our future directions include developing algorithms for models that can better handle multi-modality, such as the Gaussian mixture models, testing on more complex dynamical systems, and developing algorithms in a receding horizon fashion.

\appendix 
\label{sec:appendix}
\subsection{Proof of Lemma \ref{lem:closed_form_prior}}
\label{sec:proof_lemma_closedform_prior}
For a general quadratic factor $\lVert \Lambda\bX - \Psi\bmu \rVert_{\bK^{-1}}^2$, define $\Tilde{\epsilon} = \Lambda\mu_\theta - \Psi\bmu, \;\; y=\bX-\mu_{\theta}$, then we have
\begin{eqnarray*}\label{eq:GVI_linear_factor1}
    \!\!\!&&\!\!\! \mE_{q\sim \cN(\mu_{\theta}, \Sigma_{\theta})} [\psi_{\rm{Prior}}] 
    \\
    \!\!\!&=&\!\!\! \mE[(\Lambda y + \Tilde{\epsilon})^T \bK^{-1} (\Lambda y + \Tilde{\epsilon})] = \tr[\Lambda^T\bK^{-1} \Lambda \Sigma_{\theta}] + \Tilde{\epsilon}^T\bK^{-1}\Tilde{\epsilon}. \nonumber
\end{eqnarray*}
Similarly, 
\begin{eqnarray*}\label{eq:GVI_linear_factor2}
    &&\!\!\! \mE_{q\sim \cN(\mu_{\theta}, \Sigma_{\theta})}[(\bX-\mu_\theta)\psi_{\rm{Prior}}] 
    \\
    &=&\!\!\! \mE[y(\Lambda y + \Tilde{\epsilon})^T \bK^{-1} (\Lambda y + \Tilde{\epsilon})] = 2\Sigma_{\theta}\Lambda^T\bK^{-1}\Tilde{\epsilon}. \nonumber
\end{eqnarray*}
Finally, we compute 
\begin{eqnarray*}\label{eq:GVI_linear_factor3}
    && \mE_{q\sim \cN(\mu_{\theta}, \Sigma_{\theta})}[(\bX-\mu_\theta)(\bX-\mu_\theta)^T\psi_{\rm{Prior}}]  
    \\
    &=& \mE[yy^T(\Lambda y + \Tilde{\epsilon})^T \bK^{-1} (\Lambda y + \Tilde{\epsilon})] \nonumber
    \\
    &=& \mE[yy^Ty^T\Lambda^T\bK^{-1}\Lambda y] + \Sigma_\theta \Tilde{\epsilon}^T\bK^{-1}\Tilde{\epsilon}. \nonumber
\end{eqnarray*}
Here we leveraged the fact that the moments of a Gaussian $\mE[yy^T\Lambda^T\bK^{-1}\Lambda y], \mE[y^T\Tilde{\epsilon}^T\bK^{-1}\Tilde{\epsilon}], \mE[yy^Ty^T\Lambda^T\bK^{-1}\Tilde{\epsilon}]$ are $0$, and we conclude that the update rules \eqref{eq:GVI_derivatives} for a quadratic factor has closed forms in terms of $\mu_\theta$ and $\Sigma_\theta$. 

\subsection{Gauss-Hermite quadratures for nonlinear factors}
\label{sec:GH_quadratures}
In this section, we introduce the Gauss-Hermite quadratures. To compute the integration of a scalar function $\varphi(\bX)$
\begin{equation*}
    \int \varphi(\bX) \cN(\bX; m, P) d\bX,
    \label{eq:target_integration}
\end{equation*}
G-H quadrature methods consist of the following steps
\begin{enumerate}
    \item Compute the $p$ roots of a $p\; th$ order Hermite polynomial, also denoted as \textit{sigma points}: $\xi = [\xi_1, \dots, \xi_p].$

    \item Compute the \textit{weights}: $W_i = \frac{p!}{p^2[H_{p-1}(\xi_i)]}, i=1, \dots, p.$

    \item Approximate: 
    \begin{equation*}
        \int \varphi(\bX) \cN(\bX;m,P) d\bX \approx \sum_{l=1}^{p^n} \Tilde{W}_l \varphi(\sqrt{P}\Tilde{\xi}_l + m),
    \end{equation*}
    where $\Tilde{\xi}_l$ is a $n$ dimensional vector concatenating a permutation of the sigma points, i.e., $\Tilde{\xi_l} = [\xi_{l_1}, \dots, \xi_{l_n}]$, and $l_1, \dots, l_n \in [1, \dots, p]$, and $\Tilde{W}_l$ is the the product of all the corresponding weights, i.e., $\Tilde{W}_l = \prod_{i=1}^{n}W_{l_i}$.
\end{enumerate}
To approximate a $n$-dimensional integration, $p^n$ sigma points are needed. This is due to the tensor product of the nodes across different dimensions. Sparse grid quadrature uses Smolyak's rule to drop out the cross-terms in Taylor's approximation. The Smolyak rule distributes the total polynomial exactness into different dimensions. The resulting complexity is reduced to a polynomial dependence on the dimension\cite{heiss2008likelihood}.
\begin{lemma}[\cite{heiss2008likelihood}, Theorem 2]
\label{lem:sparseGH_complexity}
For a $D_q-$variate function, the number of computations needed by a sparse grid quadrature that is exact for $2k_q-1$ total polynomial order is bounded by
\[
\frac{e^{k_q}}{(k_q - 1)!}(D_q)^{k_q}.
\]
\end{lemma}

\subsection{Proof of Proposition \ref{thm:lemma1}}
\label{sec:proof_lemma_OCP_MAP}
Consider the time discretization \eqref{eq:time_discretization} and \eqref{eq:definition_bX}. We start from the minimum energy problem in the $i^{\rm{th}}$ time discretization
\begin{subequations}\label{eq:optimal_control_formulation_proof}
\begin{eqnarray*}
    \min_{u(\cdot)}\!\!\!\!\!\!\!&& \int_{t_i}^{t_{i+1}} \frac{1}{2}\|u_t\|^2 dt, \;\; {\rm s.t.} \; \dot{\Tilde{X}}_t = A_t\Tilde{X}_t + B_tu_t,
\end{eqnarray*}
\end{subequations}
where $\Tilde{X}_t$ is the deviated state from mean defined in \eqref{eq:tilde_Xt}. Let $\Phi$ be the state transition matrix, then 
\begin{equation}\label{eq:state_transition_u}
    \Tilde{X}_{t_{i+1}} = \Phi(t_{i+1}, t_i)\Tilde{X}_i + \int_{t_i}^{t_{i+1}}\Phi(t_{i+1}, \tau)B_{\tau}u_\tau d \tau.
\end{equation}
Assume now that the control $u_t$ has the form $u_t = B_{t}^T \Phi(t_{i+1}, t)q$, where $q$ is an arbitrary vector that will be compatible with the dimensions, then 
\begin{eqnarray*}
    \Tilde{X}_{t_{i+1}} \!\!\!\!\!&=&\!\!\!\!\! \Phi(t_{i+1}, t_i)\Tilde{X}_i +  \left (\int_{t_i}^{t_{i+1}} \Phi_{i+1, \tau}B_{\tau} B_{\tau}^T \Phi_{i+1, \tau}^T d \tau \right) q.
\end{eqnarray*}
Define the Grammian for the controllable system
\begin{equation}
    \nonumber Q_{i,i+1} \triangleq \int_{t_i}^{t_{i+1}} \Phi(t_{i+1}, \tau)B_{\tau} B_{\tau}^T \Phi(t_{i+1}, \tau)^T d \tau,
\end{equation}
then $q = Q_{i,i+1}^{-1}(\Tilde{X}_{t_{i+1}} - \Phi(t_{i+1}, t_i)\Tilde{X}_i)$, and 
\begin{equation}
\label{eq:min_energy_u}
    u_t = B_{t}^T \Phi(t_{i+1}, t) Q_{i,i+1}^{-1}(\Tilde{X}_{t_{i+1}} - \Phi(t_{i+1}, t_i)\Tilde{X}_i).
\end{equation}
We next show that $u_t$ in \eqref{eq:min_energy_u} is the solution to the minimum energy control problem \eqref{eq:optimal_control_formulation_proof}. Assume another control $v_{t} = u_t + \Tilde{u}_{t}$ also drives the system from $\Tilde{X}_{t_i}$ to $\Tilde{X}_{t_{i+1}}$, so that we have $\Tilde{X}_{t_{i+1}} = \Phi(t_{i+1}, t_i)\Tilde{X}_i + \int_{t_i}^{t_{i+1}}\Phi(t_{i+1}, \tau)B_{\tau}v(\tau)d \tau$. Combined with \eqref{eq:state_transition_u}, we have $\int_{t_i}^{t_{i+1}}\Phi(t_{i+1}, \tau)B_{\tau}\Tilde{u}_\tau d \tau = 0.$
The energy associated with an arbitrarily different control $v$ 
\begin{eqnarray*}
    \!\!\!\!\!\!\int_{t_i}^{t_{i+1}} \|v_{t}\|^2 dt \!\!\!\!\!\!&=&\!\!\!\!\!\! \int_{t_i}^{t_{i+1}} \frac{1}{2}\|u_{t} + \Tilde{u}_{t}\|^2 dt
    \\
    \!\!\!\!\!\!&=&\!\!\!\!\!\!\int_{t_i}^{t_{i+1}} \|u_{t}\|^2 dt + \int_{t_i}^{t_{i+1}} \|\Tilde{u}_{t}\|^2 dt \geq \int_{t_i}^{t_{i+1}} \|u_{t}\|^2 dt.
\end{eqnarray*}
This is because $\int_{t_i}^{t_{i+1}} u_t^T\Tilde{u}_t dt $ equals to
\begin{subequations}
    \begin{eqnarray*}
    (\Tilde{X}_{t_{i+1}} - \Phi_{i+1, i}\Tilde{X}_i)^T Q_{i,i+1}^{-1} \int_{t_i}^{t_{i+1}} \Phi_{i+1, i}^T B_t \Tilde{u}_t d t = 0.
\end{eqnarray*}
\end{subequations}
Finally, a direct calculation of $\int_{t_i}^{t_{i+1}} u_t^Tu_t d t$ gives 
\begin{eqnarray*}
    \int_{t_i}^{t_{i+1}} u_t^Tu_t d t &=& (\Tilde{X}_{t_{i+1}} - \Phi(t_{i+1}, t_i)\Tilde{X}_i)^T  Q_{i,i+1}^{-T}
    \\
    &&\hspace{-2.8cm}\overbrace{(\int_{t_i}^{t_{i+1}} \Phi(t_{i+1}, t)^T B_{t}B_{t}^T \Phi(t_{i+1}, t) d t)}^{Q_{i,i+1}}Q_{i,i+1}^{-1}
    \\ 
    &&\hspace{-2.8cm}(\Tilde{X}_{t_{i+1}} - \Phi(t_{i+1}, t_i)\Tilde{X}_i) = \lVert \Tilde{X}_{t_{i+1}} - \Phi(t_{i+1}, t_i)\Tilde{X}_i \rVert_{Q_{i,i+1}^{-1}}^2.
\end{eqnarray*}
It follows that formulation \eqref{eq:OCP_formulation_MAP} is equivalent to
\begin{eqnarray*}
    \min_{X(\cdot)}\!\!\!\!\!\!\!&& \sum_{i=0}^{N} [\lVert X_{t_{i+1}} - \mu_{t_{i+1}} - \Phi(t_{i+1}, t_i)(X_i-\mu_{t_i}) \rVert_{Q_{i,i+1}^{-1}}^2 
    \\ + && \hspace{-0.55cm}
    \lVert \Tilde{\mathbf{h}}(X_i) \rVert_{\Sigma_{obs}}^2] + \frac{1}{2}\lVert X_{t_0} -\! \mu_0 \rVert_{K_0^{-1}}^2+ \frac{1}{2}\lVert X_{t_N} -\! \mu_N \rVert_{K_N^{-1}}^2,
\end{eqnarray*}
which is exactly \eqref{eq:MAP_formulation} with sparsity \eqref{eq:sparse_G} and \eqref{eq:sparse_Q}.

\subsection{Proof of Lemma \ref{thm:variationP}}
\label{sec:proof_lemma_variationP}
Using the first-order approximation, we write
\begin{eqnarray}\label{eq:variationF}
    && \langle \frac{1}{\epsilon}\hat V (\cP+\delta \cP)-\log d\cP^0(\cP+\delta \cP), \cP+\delta \cP\rangle
    \\&&-  \langle \frac{1}{\epsilon}\hat V (\cP)-\log d\cP^0(\cP), \cP\rangle \nonumber
    \\\nonumber &\approx&\langle \frac{1}{\epsilon}\hat V (\cP)-\log d\cP^0(\cP),\delta \cP\rangle 
    \\&&+\langle \frac{1}{\epsilon}\hat V (\cP+\delta \cP)-\frac{1}{\epsilon}\hat V (\cP)-\log \frac{d\cP^0(\cP+\delta \cP)}{d\cP^0(\cP)}, \cP\rangle. \nonumber
\end{eqnarray}
The first term is linear, and we approximately expand the term $\langle \hat V (\cP+\delta \cP)-\hat V (\cP), \cP\rangle$
\begin{eqnarray*}
&& \langle V(z_t+\delta z_t)-V(z_t)-\delta z_t^T \nabla V(z_t) ,\cP\rangle
\\&+& \langle \frac{1}{2}(x^T-z_t^T-\delta z_t^T) \nabla^2 V(z_t+\delta z_t) (x-z_t-\delta z_t)
\\&-&\frac{1}{2}(x^T-z_t^T) \nabla^2 V(z_t) (x-z_t),\cP\rangle.
\end{eqnarray*}
The first term is $0$ if we keep only the first-order approximations. The second term is approximately
\begin{eqnarray*}
    &&\langle \frac{1}{2}(x^T-z_t^T) \nabla^2 V(z_t+\delta z_t) (x-z_t)
    \\&&-\frac{1}{2}(x^T-z_t^T) \nabla^2 V(z_t) (x-z_t),\cP\rangle
         \\ =&& \frac{1}{2} \int_0^1[\tr (\nabla^2 V(z_t+\delta z_t) \Sigma_t)-\tr (\nabla^2 V(z_t) \Sigma_t)] dt
    \\\approx&& \frac{1}{2}\int_0^1[\delta z_t^T \nabla\tr(\nabla^2 V(z_t) \Sigma_t)] dt.
\end{eqnarray*}
Here we remove the dependence on the iteration $k$ for notation simplicity. It follows that 
\begin{equation*}
    \langle \hat V (\cP+\delta \cP)-\hat V (\cP), \cP\rangle \approx \frac{1}{2}\langle x^T \nabla\tr(\nabla^2 V(z_t) \Sigma_t), \delta \cP\rangle.
\end{equation*}
Now, the remaining term
\begin{equation*}
\begin{split}
    &\langle\log \frac{d \cP^0(\cP)}{d \cP^0(\cP+\delta \cP)}, \cP\rangle = \langle \log \frac{d\cP(\cP)}{d \cP^0(\cP+\delta \cP)} - \log \frac{d \cP(\cP)}{d \cP^0(\cP)}, \cP\rangle = 0. 
\end{split}
\end{equation*}
This is because in the linear dynamics case $\hat{\cP^0} = \cP^0$, and the measure $d\cP^0$ induced by \eqref{eq:dynamics_uncontrolled} is independent of $\cP$. Combining the above two equations into \eqref{eq:variationF} yields
\begin{eqnarray}
    \frac{\delta F}{\delta \cP}(\cP)(t, x)= \frac{1}{\epsilon}\hat V -\log d \cP^0 + \frac{1}{2\epsilon} x^T \nabla\tr(\nabla^2 V(z_t) \Sigma_t),
\end{eqnarray}
where $(\cdot)^\dagger$ is pseudo-inverse, $z_t$ and $\Sigma_t$ are the mean and covariance of $\cP$ respectively and $\hat V, \cP^0$ are as in \eqref{eq:V_quadratic_approximation} and \eqref{eq:dynamics_uncontrolled}.

\subsection{Proof of Lemma \ref{thm:Qrk}}
\label{sec:proof_lemma_Qrk}
The prior distribution $\cP^0$ is generated by linear dynamics \eqref{eq:cov_steering_lin_dyn2}. We rewrite the optimization at the update step 
\begin{equation*}
\begin{split}
    \min_{\cP^u} &\int [\frac{1}{\epsilon}\hat V+ \frac{1}{2\epsilon} X^T \nabla\tr((\nabla^2 V) \Sigma)] d\cP^u  
    \\
    & + {\rm KL} (\cP^k\| \cP^0) + \frac{1}{\eta} {\rm KL}(\cP^k\| \cP^u)
\end{split}
\end{equation*}
into a linear covariance steering. Dropping the dependence on time $t$ and from the Girsanov theorem we have
    \begin{align*}
        &\frac{1}{\eta}{\rm KL}(\cP^k \| \cP^0) 
        \\
        &= \frac{1}{2\epsilon\eta} \mE\{ \int_{t_0}^{t_N} \lVert \frac{1}{1+\eta} ((A^k- A)X + a^k - a) + B u \rVert_{(BB^T)^\dagger}^2 d t \},
        \\
        &{\rm KL}(\cP^k \| \cP^u) 
        \\
        &= \frac{1}{2\epsilon} \mE\{ \int_{t_0}^{t_N} \lVert \frac{\eta}{1+\eta} ((A -A^k)X + a-a^k) + B u \rVert_{(BB^T)^\dagger}^2 d t\}.
    \end{align*}
    A direct computation of ${\rm KL}(\cP \| \cP^0) + \frac{1}{\eta} {\rm KL}(\cP \| \cP^u)$ gives
    \begin{equation*}
         \frac{1}{2\epsilon} (\frac{1+\eta}{\eta}\lVert u \rVert^2 + \frac{1}{1+\eta}\lVert (A -A^k)X + a-a^k \rVert_{(BB^T)^\dagger}^2).
    \end{equation*}
    Recall that
    \begin{equation*}
    \hat V(X) \approx V(z) + (X-z)^T \nabla V(z) + \frac{1}{2}(X^T-z^T) \nabla^2 V(z) (X-z).
    \end{equation*}
    Combining the above calculations and $\hat{V}$, together with the linear term $\frac{1}{2\epsilon} X^T \nabla\tr((\nabla^2 V) \Sigma)$, we get the result, with $r_t^k$ equaling
    \begin{align*}
        r_t^k &= \frac{\eta}{1+\eta}\nabla V 
        + \frac{\eta}{2(1+\eta)}[\nabla \tr(\nabla^2V\Sigma^k_t)-2\nabla^2V z^k_t] 
		\\
        &\hspace{0.5cm}+\frac{\eta}{(1+\eta)^2}(A_t^k-\hat A_t^k)^T(B_tB_t^T)^{\dagger}(a_t^k-\hat a_t^k).
    \end{align*}
    We ignore derivatives of orders higher than $2$.

\subsection{Experiment Details and Practical Issues}
\label{sec:experiment_details}
This section discusses the implementation details and practical issues in GVI-MP and PCS-MP. The factorization in GVI-MP allows for parallel computing of the derivatives to make the implementation more efficient. We use Heun's numerical integrations for point and arm robots and Runge–Kutta numerical integrations for the quadrotor experiment.

{\em a) Derivatives approximation in PCS-MP.}
In Algorithm \ref{alg:pgcsmp}, the state cost is the norm of a hinge loss \eqref{eq:PGCS_state_cost_h} 
\begin{equation*}
    V(X_t) = \lVert \bh(X_t) \rVert_{\Sigma_{\rm obs}}^2 = \lVert \Tilde{\bh}(S(F(X_t))) \rVert_{\Sigma_{\rm obs}}^2,
\end{equation*}
which leads to an explicit gradient using the chain rule
\begin{equation*}
    \nabla V = \frac{\partial F}{\partial X_t} \frac{\partial S}{\partial F} \frac{\partial \Tilde{\bh}}{\partial S } (2\Sigma_{\rm obs} \Tilde{\bh} ).
\end{equation*}
All functions are potential vectors corresponding to multiple collision checking points on a robot. 

{\em b) Experiment setups and parameter choices.}
The choice of parameters involves balancing collision-avoiding and prior costs and between the sum of the two and the entropy cost. For a fixed collision-checking range in a map with a known resolution, the tunable collision hinge loss parameter is $\Sigma_{\rm obs}$. We choose a smaller collision-related cost in more cluttered environments to balance the prior cost. Starting from a collision-free configuration, we increase the temperature to put more weight on the entropy costs to seek robustness.

\bibliographystyle{Bibliography/IEEETrans}
\bibliography{root}

\end{document}